\newlength{\minipagewidth}
\newtheorem{lemma}{Lemma}
\newtheorem{theorem}{Theorem}
\newtheorem{proposition}{Proposition}
\newcommand{\bookbox}[1]{\small
\par\medskip\noindent
\framebox[\columnwidth]{
\begin{minipage}{0.8\minipagewidth} {#1} \end{minipage} } \par\medskip }
\newcommand{\beq}{\begin{equation}}
\newcommand{\eeq}{\end{equation}}
\newcommand{\beqa}{\begin{eqnarray}}
\newcommand{\eeqa}{\end{eqnarray}}
\newcommand{\beqan}{\begin{eqnarray*}}
\newcommand{\eeqan}{\end{eqnarray*}}
\newcommand{\E}{\mathbb{E}}
\newcommand{\1}{\mathbb{I}}
\newcommand{\invdelta}{(1/\delta)}
\newcommand{\alg}{\mathcal A}
\def\argmax{\mathop{\rm arg\,max}}
\newcommand{\narms}{K}
\newcommand{\distro}{\nu}
\newcommand{\hmu}{\hat{\mu}}
\newcommand{\var}{\sigma^2}
\newcommand{\hvar}{\hat{\sigma}^2}
\newcommand{\si}{\sigma}
\newcommand{\hsi}{\hat{\sigma}}
\newcommand{\F}{\mathcal{F}}
\newcommand{\eps}{\varepsilon}
\newcommand{\ind}[1]{\mathbb I\left\lbrace {#1} \right\rbrace}
\newcommand{\I}{\mathbb I}
\newcommand{\Prob}{\mathbb P}
\newcommand{\V}{\mathbb V}
\newcommand{\expectB}[2]{\mathbb E_{#1} \left[ {#2} \right]}
\newcommand{\N}{\mathcal N}
\newtheorem{assumption}{Assumption}
\renewcommand{\N}{\mathcal N}
\renewcommand{\F}{\mathcal F}
\newcommand{\G}{\mathcal G}
\newcommand{\de}{\delta}
\newcommand{\ep}{\epsilon}
\begin{document}
\setlength{\minipagewidth}{\columnwidth}
\addtolength{\minipagewidth}{0.65in}

\begin{frontmatter}

\title{Upper-Confidence-Bound Algorithms for\\ Active Learning in Multi-Armed Bandits}
%
\author{Alexandra Carpentier*, Alessandro Lazaric*, Mohammad Ghavamzadeh*, R{\'e}mi Munos*, Peter Auer **, Andr{\'a}s Antos ***}
%
\address{(*) SequeL team, INRIA Lille - Nord Europe, Team SequeL, France \\
(**) University of Leoben, Franz-Josef-Strasse 18, 8700 Leoben, Austria \\
(***) Budapest University of Technology and Economics, M\H{u}egyetem rkp. 3, 1111 Budapest, Hungary}




\begin{abstract}
In this paper, we study the problem of estimating uniformly well the mean values of several distributions given a finite budget of samples. If the variance of the distributions were known, one could design an optimal sampling strategy by collecting a number of independent samples per distribution that is proportional to their variance. However, in the more realistic case where the distributions are not known in advance, one needs to design adaptive sampling strategies in order to select which distribution to sample from according to the previously observed samples. We describe two strategies based on pulling the distributions a number of times that is proportional to a high-probability upper-confidence-bound on their variance (built from previous observed samples) and report a finite-sample performance analysis on the excess estimation error compared to the optimal allocation. 
We show that the performance of these allocation strategies depends not only on the variances but also on the full shape of the distributions.
\end{abstract}

\end{frontmatter}


\noindent
\textbf{Keywords:} Bandit Theory, Active Learning

\section{Introduction}\label{s:introduction}

Consider a marketing problem where the objective is to estimate the potential impact of several new products or services. A common approach to this problem is to design active online polling systems, where at each time a product is presented (e.g.,~via a web banner on Internet) to random customers from a population of interest, and feedbacks are collected (e.g.,~whether the customer clicks on the ad or not) and used to estimate the average preference of all the products. It is often the case that some products have a general consensus of opinion (low variance) while others have a large variability (high variance). While in the former case very few votes would be enough to have an accurate estimate of the value of the product, in the latter the system should present the product to more customers in order to achieve the same accuracy. Since the variability of the opinions for different products is not known in advance, the objective is to design an active strategy that selects which product to display at each time step in order to estimate the values of all the products uniformly well. 

The problem of online polling can be seen as an online allocation problem with several options, where the accuracy of the estimation of the quality of each option depends on the quantity of the resources allocated to it and also on some (initially unknown) intrinsic variability of the option. This general problem is closely related to the problems of active learning~\citep{cohn1996active,castro2005faster}, sampling and Monte-Carlo methods~\citep{etore2010adaptive}, and optimal experimental design~\citep{fedorov1972theory,chaudhuri1995on-efficient}. A particular instance of this problem is introduced in \cite{antos2010active} as an active learning problem in the framework of stochastic multi-armed bandits. 
More precisely, the problem is modeled as a repeated game between a learner and a stochastic environment, defined by a set of $K$ unknown distributions $\{\nu_k\}_{k=1}^K$, where at each round $t$, the learner selects an action (or arm) $k_t$ and as a consequence receives a random sample from $\nu_{k_t}$ (independent of the past samples). Given a total budget of $n$ samples, the goal is to define an allocation strategy over arms so as to estimate their expected values uniformly well. Note that if the variances $\{\sigma_k^2\}_{k=1}^K$ of the arms were initially known, the optimal allocation strategy would be to sample the arms proportionally to their variances, or more accurately, proportionally to $\lambda_k = \sigma_k^2/ \sum_j \sigma_j^2$. 
However, since the distributions are initially unknown, the learner should follow an active allocation strategy which adapts its behavior as samples are collected. The performance of this strategy is measured by its regret (defined precisely by Equation~\ref{e:regret}) that is the difference between the maximal expected quadratic estimation error of the algorithm and the maximal expected error of the optimal allocation.

\citet{antos2010active} presented an algorithm, called GAFS-MAX, that allocates samples proportionally to the empirical variances of the arms, while imposing that each arm should be pulled at least $\sqrt{n}$ times (to guarantee good estimation of the true variances), where $n$ is the total budget of pulls. They proved that for large enough $n$, the regret of their algorithm scales with $\tilde O(n^{-3/2})$ and conjectured that this rate is optimal.\footnote{The notation $u_n=\tilde O(v_n)$ means that there exist $C>0$ and $\alpha>0$ such that $u_n\leq C (\log n)^\alpha v_n$ for sufficiently large $n$.} However, the performance displays both an implicit (in the condition for large enough $n$) and explicit (in the regret bound) dependency on the inverse of the smallest optimal allocation proportion, i.e.,~$\lambda_{\min} = \min_k \lambda_k$. This suggests that the algorithm is expected to have a poor performance whenever an arm has a very small variance compared to the others. Whether this dependency is due to the analysis of GAFS-MAX, to the specific class of algorithms, or to an intrinsic characteristic of the problem is an interesting open question. One of the main objectives of this paper is to investigate this issue and identify under which conditions this dependency can be avoided. Our main contributions and findings are as follows:
\begin{itemize}
\item We introduce two new algorithms based on upper-confidence-bounds (UCB) on the variance.
\item The first algorithm, called CH-AS, is based on Chernoff-Hoeffding's bound, whose regret has the rate $\tilde O(n^{-3/2})$ and inverse dependency on $\lambda_{\min}$, similar to GAFS-MAX. The main differences are: the bound for CH-AS holds for any $n$ (and not only for large enough $n$), multiplicative constants are made explicit, and finally, the proof is simpler and relies on very simple tools.
 \item The second algorithm, called B-AS, uses a sharper inequality than CH-AS, and has a better performance (in terms of the number of pulls) in targeting the optimal allocation strategy without any dependency on $\lambda_{\min}$. However, moving from the number of pulls to the regret causes the inverse dependency on $\lambda_{\min}$ to appear in the bound again. We show that this might be due to specific shape of the distributions $\{\nu_k\}_{k=1}^K$ and derive a regret bound independent of $\lambda_{\min}$ for the case of Gaussian arms. 
 \item We show empirically that while the performance of CH-AS depends on $\lambda_{\min}$ in the case of Gaussian arms, this dependence does not exist for B-AS and GAFS-MAX, as they perform well in this case. This suggests that {\bf 1)} it is not possible to remove $\lambda_{\min}$ from the regret bound of CH-AS, independent of the arms' distributions, and {\bf 2)} GAFS-MAX's analysis could be improved along the same line as the proof of B-AS for the Gaussian arms. We also report experiments providing insights on the (somehow unexpected) fact that the full shapes of the distributions, and not only their variances, impact the regret of these algorithms.
\end{itemize}

\section{Preliminaries}\label{s:preliminaries}

The allocation problem studied in this paper is formalized as the standard $K$-armed stochastic bandit setting, where each arm $k=1,\ldots,\narms$ is characterized by a distribution $\distro_k$ with mean $\mu_k$ and non--zero variance $\var_k>0$. At each round $t\geq 1$, the learner (algorithm $\alg$) selects an arm $k_t$ and receives a sample drawn from $\nu_{k_t}$ independently of the past. The objective is to estimate the mean values of all the arms uniformly well given a total budget of $n$ pulls. An adaptive algorithm defines its allocation strategy as a function of the samples observed in the past (i.e.,~at time $t$, the selected arm $k_t$ is a function of all the observations up to time $t-1$). After $n$ rounds and observing $T_{k,n}=\sum_{t=1}^{n} \ind{k=k_t}$ samples from each arm $k$, the algorithm $\alg$ returns the empirical estimates $\displaystyle{\hmu_{k,n}=\frac{1}{T_{k,n}}\sum_{t=1}^{T_{k,n}} X_{k,t}}$, where $X_{k,t}$ denotes the sample received when we pull arm $k$ for the $t$-th time. The accuracy of the estimation of each arm $k$ is measured according to its expected squared estimation error, or loss
\begin{equation}\label{e:loss-kn}
L_{k,n} = \expectB{(\distro_i)_{i\leq K}}{\left(\mu_k - \hmu_{k,n}\right)^2}.
\end{equation}
The global performance or loss of  $\alg$ is defined as the worst loss of the arms
\begin{equation}\label{e:global-loss}
L_{n}(\alg) = \max_{1\leq k\leq K} L_{k,n}\;.
\end{equation}


If the variance of the arms were known in advance, one could design an optimal static allocation (i.e., the number of pulls does not depend on the observed samples) by pulling the arms proportionally to their variances. In the case of static allocation, if an arm $k$ is pulled a fixed number of times $T_{k,n}^*$, its loss is computed as\footnote{This equality does not hold when the number of pulls is random, e.g.,~in adaptive algorithms where the strategy depends on the random observed samples.}
\beq \label{eq:loss-kn-fixed}
L_{k,n} = \frac{\var_k}{T_{k,n}^*}\;.
\eeq
By choosing $T_{k,n}^*$ so as to minimize $L_{n}$ under the constraint that $\sum_{k=1}^K T_{k,n}^* = n$, the optimal static allocation strategy $\alg^*$ pulls each arm $k$ (up to rounding effects) $T_{k,n}^* = \frac{\si_k^2n}{\sum_{i=1}^\narms \si_i^2}$ times, and achieves a global performance 
$\label{e:optimal-loss}
L_n(\alg^*) = \Sigma/n,
$
where $\Sigma = \sum_{i=1}^\narms \si_i^2$. We denote by $\lambda_k = \frac{T_{k,n}^*}{n} = \frac{\si_k^2}{ \Sigma}$, the optimal allocation proportion for arm $k$, and by $\lambda_{\min} = \min_{1\leq k\leq K} \lambda_k$, the smallest such proportion.

In our setting where the variances of the arms are not known in advance, the exploration-exploitation trade-off is inevitable: an adaptive algorithm $\alg$ should estimate the variances of the arms (\textit{exploration}) at the same time as it tries to sample the arms proportionally to these estimates (\textit{exploitation}). In order to measure how well the adaptive algorithm $\alg$ performs, we compare its performance to that of the optimal allocation algorithm $\alg^*$, which requires the knowledge of the variances of the arms. For this purpose, we define the notion of \textit{regret} of an adaptive algorithm $\alg$ as the difference between its loss $L_n(\alg)$ and the optimal loss $L_n(\alg^*)$, i.e.,
\begin{equation}\label{e:regret}
R_n(\alg) = L_n(\alg) - L_n(\alg^*).
\end{equation}
It is important to note that unlike the standard multi-armed bandit problems, we do not consider the notion of cumulative regret, and instead, use the excess-loss suffered by the algorithm at the end of the $n$ rounds. This notion of regret is closely related to the \textit{pure exploration} setting (e.g.,~\cite{audibert2010best,bubeck2011pure}). An interesting feature that is shared between this setting and the problem of active learning considered in this paper is that good strategies should play all the arms as a linear function of $n$. This is in contrast with the standard stochastic bandit setting, at which the sub-optimal arms should be played logarithmically in $n$.

In~\citep{antos2010active}, the authors provide an algorithm called GAFS-MAX and they prove that its regret is such that $R_n(\alg_{GAFS-MAX}) = \tilde O(n^{-3/2})$ for a large enough budget $n$ that depends on $\lambda_{\min}$. Also, the $\tilde O$ depends on $\lambda_{\min}$. The smaller $\lambda_{\min}$, the larger $n$ needs to be so that the bound in $\tilde O(n^{-3/2})$ holds, and also the larger the constant in the $\tilde O$.


\section{Allocation Strategy Based on Chernoff-Hoeffding UCB}\label{s:ch-algorithm}

The first algorithm, called {\em Chernoff-Hoeffding Allocation Strategy} (CH-AS), is based on a Chernoff-Hoeffding high-probability bound on the difference between the estimated and true variances of the arms. Each arm is simply pulled proportionally to an upper-confidence-bound (UCB) on its variance. This algorithm deals with the exploration-exploitation trade-off by pulling more the arms with higher estimated variances or higher uncertainty in these estimates. 


\subsection{The CH-AS Algorithm}\label{ss:ch-algorithm}

%
%
The CH-AS algorithm $\alg_{CH}$ in Fig.~\ref{f:ch-algorithm} takes a confidence parameter $\de$ as input and after $n$ pulls returns an empirical mean $\hmu_{k,n}$ for each arm $k$. At each time step $t$, i.e., after having pulled arm $k_t$, the algorithm computes the empirical mean $\hmu_{k,t}$ and variance $\hsi^2_{k,t}$ of each arm $k$ as\footnote{Notice that this is a biased estimator of the variance even if the numbers of pulls $T_{k,t}$ were not random.}
\begin{equation}\label{eq:estim-var1}
\hmu_{k,t}=\frac{1}{T_{k,t}}\sum_{i=1}^{T_{k,t}}X_{k,i} \enspace\enspace\enspace\text{ and }\enspace\enspace\enspace\hsi_{k,t}^2 = \frac{1}{T_{k,t}} \sum_{i=1}^{T_{k,t}}X_{k,i}^2-\hmu_{k,t}^2\;,
\end{equation}
%
where $X_{k,i}$ is the $i$-th sample of $\distro_k$ and $T_{k,t}$ is the number of pulls\footnote{An accurate notation for this should be $T_{k,t,n}$ since the number of pulls at time $t$ depends also on $n$. However, for the sake of concision, we note $T_{k,t}$.} allocated to arm $k$ up to time $t$. After pulling each arm twice (rounds $t=1$ to $2K$), from round $t=2K+1$ on, the algorithm computes the $B_{k,t}$ values based on a Chernoff-Hoeffding's bound on the variances of the arms:
%
\begin{equation*}
B_{k,t} = \frac1{T_{k,t-1}} \Big( \hsi_{k,t-1}^2 + 3\sqrt{\frac{\log(1/\de)}{2T_{k,t-1}}}\Big),
\end{equation*}
%
\noindent and then pulls the arm $k_t$ with the largest $B_{k,t}$. This bound relies on the assumption that the distributions $\{\nu_k\}_{k=1}^K$ are supported $[0,1]$.

Note that actually $\hmu_{k,t}$, $\hsi_{k,t}$, $B_{k,t}$, $k_t$, and $T_{k,t}$ depend on the arm index (except for $k_t$), on the time step $t\le n$, but also, either in a direct or in an indirect way (through the mechanism of the algorithm) on the budget $n$ and on $\de$ which will be chosen as a function of the budget $n$. However, since we consider most of the time a fixed budget $n$ and thus a fixed $\de$, we conserve this notation in order to have lighter notations.

\begin{figure}[t]
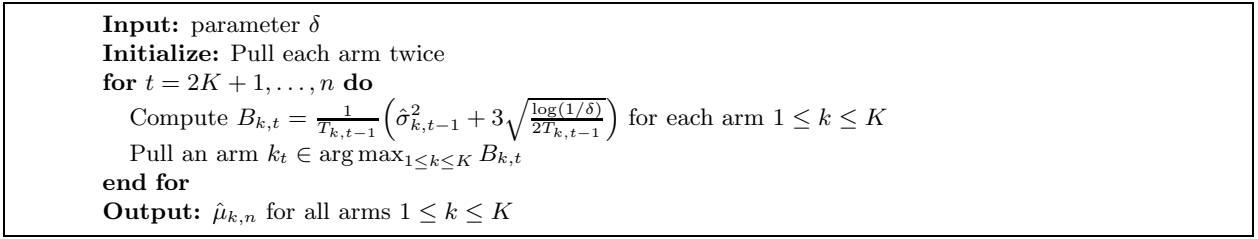

\bookbox{
\begin{algorithmic}
\STATE \textbf{Input:} parameter $\de$
\STATE \textbf{Initialize:} Pull each arm twice
\FOR{$t = 2\narms+1,\ldots, n$}
  \STATE Compute $B_{k,t} = \frac1{T_{k,t-1}} \Big( \hsi_{k,t-1}^2 + 3\sqrt{\frac{\log(1/\de)}{2T_{k,t-1}}}\Big)$ for each arm $1\leq k\leq K$
  \STATE Pull an arm $k_t \in\argmax_{1\leq k\leq K} B_{k,t}$ 
\ENDFOR
\STATE \textbf{Output:} $\hmu_{k,n}$ for all arms $1\leq k\leq K$
\end{algorithmic}
}
\caption{The pseudo-code of the CH-AS algorithm, with $\hsi_{k,t}^2$ computed as in Equation~\ref{eq:estim-var1}.}\label{f:ch-algorithm}
\end{figure}


\subsection{Regret Bound and Discussion}\label{ss:ch-discussion}

Before reporting a regret bound for the CH-AS algorithm, we first analyze its performance in targeting the optimal allocation strategy in terms of the number of pulls. As it will be discussed later, the distinction between the performance in terms of the number of pulls and the regret will allow us to stress the potential dependency of the regret on the distribution of the arms (see Section~\ref{sec:gaussian.dist}).

\begin{lemma}\label{l:ch-upper-lower}
Assume that the distributions $\{\nu_k\}_{k=1}^K$ are supported on $[0,1]$ and let $\de>0$. Define the event
\begin{equation*}
\xi_{K,n}^{CH}(\de) = \mathop{\bigcap_{1\leq k\leq K}}_{1\leq t\leq n}\left\lbrace \Big|\Big(\frac{1}{t}\sum_{i=1}^tX_{k,i}^2-\big(\frac{1}{t}\sum_{i=1}^tX_{k,i}\big)^2\Big) - \si_k^2\Big| \leq 3 \sqrt{\frac{\log\invdelta}{2t}} \right\rbrace.
\end{equation*}
The probability of $\xi_{K,n}^{CH}(\de)$ is higher than or equal to $1-4nK\de$. If $n\geq 5K$, the number of pulls $T_{k,n}$ by the CH-AS algorithm launched with parameter $\de$ satisfies on $\xi_{K,n}^{CH}(\de)$
\begin{equation}\label{eq:up-low-bound}
- \lambda_k \Big(\frac{12\sqrt{n\log\invdelta}}{\Sigma\lambda_{\min}^{3/2}} + 4 K\Big) \leq T_{k,n}- T_{k,n}^* \leq \frac{12\sqrt{n\log\invdelta}}{\Sigma\lambda_{\min}^{3/2}} + 4K ,
\end{equation}
for any arm $1\leq k\leq K$.
\end{lemma}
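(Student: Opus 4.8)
The plan is to run the entire argument on the event $\xi_{K,n}^{CH}(\delta)$: first bound its probability, then convert the confidence statement into a two-sided control of $B_{k,t}$, and finally read off the deviations of $T_{k,n}$ from $T_{k,n}^*$ by inspecting, for each arm, the last round at which it (or a competitor) is played. For the probability bound I would fix an arm $k$ and a sample count $t$, observe that $X_{k,1},\dots,X_{k,t}$ are i.i.d.\ from $\nu_k$ with $X_{k,i},X_{k,i}^2\in[0,1]$, and write $\hat\sigma^2-\sigma_k^2=\big(\tfrac1t\sum_i X_{k,i}^2-\mathbb E[X^2]\big)-\big((\tfrac1t\sum_i X_{k,i})^2-\mu_k^2\big)$. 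Applying Hoeffding's inequality to each piece, the empirical second moment deviates by more than $\sqrt{\log(1/\delta)/(2t)}$ with probability $\le 2\delta$, while the second piece, factored as $(\hat\mu-\mu_k)(\hat\mu+\mu_k)$ with $|\hat\mu+\mu_k|\le 2$, deviates by more than $2\sqrt{\log(1/\delta)/(2t)}$ with probability $\le 2\delta$; adding the two half-widths produces the constant $3$, and a union bound over the two Hoeffding events and over all $k\le K$, $t\le n$ gives failure probability at most $4nK\delta$.

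On $\xi_{K,n}^{CH}(\delta)$ the definition of $B_{k,t}$ then yields the sandwich $\frac{\sigma_k^2}{T_{k,t-1}}\le B_{k,t}\le \frac{\sigma_k^2}{T_{k,t-1}}+\frac{\beta}{T_{k,t-1}^{3/2}}$ with $\beta=3\sqrt{2\log(1/\delta)}$, the lower side arising because the $-3\sqrt{\cdot}$ deviation is exactly cancelled by the exploration bonus. For the upper deviation I would fix $k$ with $T_{k,n}\ge 3$ (smaller values make the claim trivial since its right-hand side exceeds $4K$) and let $t$ be its last pull, so $T_{k,t-1}=T_{k,n}-1$ and $B_{k,t}=\max_j B_{j,t}$. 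On $\xi$ one has $T_{j,t-1}B_{j,t}\ge\sigma_j^2$, so $\max_j B_{j,t}$ is at least the $T_{j,t-1}$-weighted average of the $B_{j,t}$, namely $\Sigma/(t-1)\ge\Sigma/n$; combining with the upper side of the sandwich and $n\sigma_k^2=\Sigma T_{k,n}^*$ gives $\Sigma\sqrt{T_{k,n}-1}\,(T_{k,n}-1-T_{k,n}^*)\le\beta n$. In the only nontrivial (over-pulled) regime $T_{k,n}-1\ge T_{k,n}^*\ge\lambda_{\min}n$, so dividing through bounds $T_{k,n}-T_{k,n}^*$ by $1+\beta\sqrt n/(\Sigma\sqrt{\lambda_{\min}})$, which is absorbed into the stated (deliberately looser) constant.

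The lower deviation is the real obstacle, since the last pull of $k$ only certifies that $k$ was competitive and hence bounds over-pulling rather than under-pulling. Instead I would use, for every other arm $j\ne k$, the last round $s_j$ at which $j$ is pulled: there $B_{j,s_j}\ge B_{k,s_j}\ge \sigma_k^2/T_{k,s_j-1}\ge\sigma_k^2/T_{k,n}$ by monotonicity of the counts, which with the sandwich upper bound gives $\frac{\sigma_j^2}{T_{j,n}-1}+\frac{\beta}{(T_{j,n}-1)^{3/2}}\ge\frac{\sigma_k^2}{T_{k,n}}$. Solving this for $T_{j,n}$ yields $T_{j,n}\le \frac{\sigma_j^2}{\sigma_k^2}T_{k,n}+1+\rho_j$ with a correction $\rho_j$ of order $\beta T_{k,n}/(\sigma_k^2\sqrt{T_{j,n}-1})$; summing over $j\ne k$, using $\sum_j T_{j,n}=n$ and $\sum_{j\ne k}\sigma_j^2=\Sigma-\sigma_k^2$, makes the factor $\Sigma/\sigma_k^2=1/\lambda_k$ emerge and gives exactly $T_{k,n}-T_{k,n}^*\ge-\lambda_k\big((K-1)+\sum_{j\ne k}\rho_j\big)$, the $\lambda_k$-weighted shape of the claim.

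The points I expect to cost the most effort are controlling the corrections. I must feed the upper bound on $T_{k,n}$ back in to replace $T_{k,n}/\sigma_k^2$ by $\approx n/\Sigma$, and I need a crude preliminary lower bound $T_{j,n}-1\gtrsim\lambda_{\min}n$ to bound $\sum_{j\ne k}(T_{j,n}-1)^{-1/2}$ by roughly $(K-1)(\lambda_{\min}n)^{-1/2}$; the $\lambda_{\min}^{-3/2}$ in the statement then appears precisely by bounding $K-1\le 1/\lambda_{\min}$ (valid since $\lambda_{\min}\le 1/K$), which converts the $\lambda_{\min}^{-1/2}$ coming from the corrections into $\lambda_{\min}^{-3/2}$. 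Bootstrapping that preliminary lower bound (e.g.\ from the upper bound together with the conservation identity $\sum_j(T_{j,n}-T_{j,n}^*)=0$), and absorbing the first $2K$ initialization rounds together with the rounding of $T_{k,n}^*$, are what force the hypothesis $n\ge 5K$ and the additive $4K$ term; the generous constants $12$ and $4K$ leave room for all of this slack.
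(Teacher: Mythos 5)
Your probability bound and your direct argument for the upper deviation are both sound: the observation that $\max_j B_{j,t}\ge\sum_j T_{j,t-1}B_{j,t}/(t-1)\ge\Sigma/(t-1)$ at arm $k$'s last pull is a clean route to $T_{k,n}-T_{k,n}^*\lesssim \beta\sqrt{n}/(\Sigma\sqrt{\lambda_{\min}})$, and is in fact sharper in $\lambda_{\min}$ than the stated bound. But your architecture is inverted relative to the paper's, and the inversion is where the proof breaks. The paper proves the \emph{lower} bound on $T_{p,n}$ directly: by pigeonhole there is an over-pulled arm $k$, which satisfies $T_{k,n}-1\ge\lambda_k(n-2K)+1\ge\lambda_k n/2$ \emph{by definition of being over-pulled} (this is exactly where $\lambda_{\min}^{-3/2}$ enters); the comparison $B_{p}\le B_{k}$ at that arm's last pull then gives $\sigma_p^2/T_{p,n}\le\Sigma/n+12\sqrt{\log(1/\delta)}/(\lambda_{\min}n)^{3/2}+4K\Sigma/n^2$ for \emph{every} $p$ simultaneously, which inverts to the $\lambda_p$-weighted lower bound, and the upper bound follows by summation. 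No preliminary control of the counts is ever needed.

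Your route to the lower bound instead needs the per-arm inequality $T_{j,n}\le(\sigma_j^2/\sigma_k^2)T_{k,n}+1+\rho_j$ with $\rho_j\approx\beta T_{k,n}/(\sigma_k^2\sqrt{T_{j,n}-1})$ for \emph{all} $j\ne k$, hence a preliminary bound $T_{j,n}-1\gtrsim\lambda_{\min}n$ for all $j$. Your proposed bootstrap (conservation plus the upper bounds) only yields $T_{j,n}\ge T_{j,n}^*-(K-1)E$ with $E\approx\beta\sqrt{n}/(\Sigma\sqrt{\lambda_{\min}})$, which is of order $\lambda_{\min}n$ only when $n\gtrsim K^2\log(1/\delta)/(\Sigma^2\lambda_{\min}^3)$ --- a condition depending on $\lambda_{\min}$ that is not implied by $n\ge 5K$, whereas the lemma must hold for every $n\ge 5K$. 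The only unconditional minimum-pull guarantee the exploration bonus provides is of order $(n/K)^{2/3}$, and feeding that into $\rho_j$ leaves corrections of order $n^{2/3}$ rather than $\sqrt{n}$. Restricting the per-arm inequality to over-pulled arms $j$ (for which $T_{j,n}-1\ge\lambda_j n/2$ does hold) closes the corrections but loses the $\lambda_k$ prefactor on the main term, as does deriving the lower bound from your upper bounds by summation; that prefactor is not cosmetic, since the regret proof uses the $\lambda_p$-weighted form (via the intermediate bound on $\sigma_p^2/T_{p,n}$). So the under-pulling half of the lemma is not established by your argument as it stands.
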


\begin{proof}
The proof is reported in~\ref{a:l:ch-upper-lower}.
\end{proof}

We now show how the bound on the number of pulls translates into a regret bound for the CH-AS algorithm.

\begin{theorem}\label{thm:ch-regret}
Assume that the distributions $\{\nu_k\}_{k=1}^K$ are supported on $[0,1]$. If the fixed (known in advance) budget is such that $n\geq 5K$, the regret of $\alg_{CH}$, when it runs with the parameter $\de= n^{-5/2}$, is bounded as
\begin{equation}\label{eq:ch-regret}
R_n(\alg_{CH}) \leq 
\frac{39\sqrt{\log(n)}}{n^{3/2} \lambda_{\min}^{5/2}} +  \frac{2.9\times 10^3}{n^2}\frac{(\log n)^{3/2}}{\lambda_{\min}^{11/2}}\Big(1 + \frac{1}{\Sigma^{5/2}}\Big).
\end{equation}
\end{theorem}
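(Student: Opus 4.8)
The plan is to reduce the regret to a per-arm loss bound and to exploit that under the optimal static allocation every arm has loss exactly $\Sigma/n$, so that $R_n(\alg_{CH}) = \max_k L_{k,n} - \Sigma/n = \max_k\big(L_{k,n}-\Sigma/n\big)$. I would fix an arm $k$ and split its loss over the high-probability event $\xi := \xi_{K,n}^{CH}(\de)$ of Lemma~\ref{l:ch-upper-lower} and its complement, writing $L_{k,n} = \E[(\mu_k-\hmu_{k,n})^2\1_\xi] + \E[(\mu_k-\hmu_{k,n})^2\1_{\xi^c}]$. Since all samples lie in $[0,1]$, both $\mu_k$ and $\hmu_{k,n}$ lie in $[0,1]$, so the off-event term is at most $\Prob(\xi^c)\le 4nK\de$. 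The bulk of the work is the on-event term.

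Writing $M_t = \sum_{i=1}^t (X_{k,i}-\mu_k)$, one has $(\mu_k-\hmu_{k,n})^2 = M_{T_{k,n}}^2/T_{k,n}^2$. On $\xi$ the lower bound in~\eqref{eq:up-low-bound} gives $T_{k,n}\ge T_{k,n}^* - \lambda_k a = \lambda_k(n-a) =: L_k$, where $a = \tfrac{12\sqrt{n\log\invdelta}}{\Sigma\lambda_{\min}^{3/2}} + 4K$, so I can replace the random denominator by the deterministic $L_k^2$ and obtain $\E[(\mu_k-\hmu_{k,n})^2\1_\xi] \le L_k^{-2}\,\E[M_{T_{k,n}}^2\1_\xi]\le L_k^{-2}\,\E[M_{T_{k,n}}^2]$. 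The key identity is then $\E[M_{T_{k,n}}^2] = \sigma_k^2\,\E[T_{k,n}]$ (Wald's second identity). This is the delicate point: $T_{k,n}$ depends on the samples of all arms, so it is not directly a stopping time for arm $k$'s own filtration. I would justify it by conditioning on the entire sample sequences of the other arms (a \emph{tape} model of the randomness): the decision to pull arm $k$ for the $j$-th time is made before $X_{k,j}$ is revealed, hence $\{T_{k,n}\ge j\}$ depends only on $X_{k,1},\dots,X_{k,j-1}$ and the fixed other tapes, making $T_{k,n}$ a bounded stopping time for arm $k$ conditionally, and the identity follows after taking expectation over the other arms.

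With the on-event loss controlled by $\sigma_k^2\E[T_{k,n}]/L_k^2$, I would insert the upper bound on the expected number of pulls: on $\xi$, \eqref{eq:up-low-bound} gives $T_{k,n}\le T_{k,n}^*+a$, while on $\xi^c$ trivially $T_{k,n}\le n$, so $\E[T_{k,n}]\le T_{k,n}^* + a + n\,\Prob(\xi^c)$. Substituting $T_{k,n}^*=\lambda_k n$, $\sigma_k^2=\lambda_k\Sigma$ and $L_k=\lambda_k(n-a)$ and expanding, the $\lambda_k$ factors cancel in the leading term, which equals $\Sigma/n$ and is removed by the regret subtraction; what survives is $\tfrac{\Sigma n}{(n-a)^2}-\tfrac{\Sigma}{n} + \tfrac{\Sigma a}{\lambda_k(n-a)^2}+(\text{lower order})$. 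The second piece carries a factor $1/\lambda_k$ precisely because the upper deviation in~\eqref{eq:up-low-bound} is $a$ and not $\lambda_k a$; maximizing over $k$ therefore occurs at $\lambda_k=\lambda_{\min}$ and produces the extra $\lambda_{\min}^{-1}$, giving a leading term of order $\sqrt{\log\invdelta}/(n^{3/2}\lambda_{\min}^{5/2})$. Choosing $\de = n^{-5/2}$ makes $\Prob(\xi^c)\le 4Kn^{-3/2}$ of the same $n^{-3/2}$ order (absorbable into the leading constant via $K\le 1/\lambda_{\min}$) while costing only $\log\invdelta=\tfrac52\log n$ under the square root; collecting the quadratic-in-$a$ remainders and the $\Prob(\xi^c)$ corrections produces the second, $O(n^{-2})$ term, and tallying the explicit constants yields~\eqref{eq:ch-regret}.

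The main obstacle is the random, data-correlated number of pulls $T_{k,n}$. A crude route — bounding $M_{T_{k,n}}^2\1_\xi$ by $\max_{t\le T_{k,n}^*+a}M_t^2$ and applying Doob's maximal inequality — would introduce a multiplicative constant (a factor $4$) in front of $\sigma_k^2\E[T_{k,n}]/L_k^2\approx\Sigma/n$, so the leading term would no longer cancel against the optimal loss $\Sigma/n$ and the regret would fail to vanish. Obtaining the sharp constant, and hence a genuine $\tilde O(n^{-3/2})$ excess, is exactly why the Wald identity and the conditional stopping-time argument are unavoidable. The remaining difficulty is purely bookkeeping: carrying the explicit constants and the correct powers of $\lambda_{\min}^{-1}$ through the expansion, and noting that the bound is only informative once $a<n$ (for small $n$ the right-hand side of~\eqref{eq:ch-regret} exceeds the trivial bound $L_n\le1$, so the statement holds vacuously there).
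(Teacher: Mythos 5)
Your proposal follows essentially the same route as the paper's proof: the same decomposition of $L_{k,n}$ over $\xi_{K,n}^{CH}(\de)$ and its complement, the same use of Wald's second identity (the paper's Proposition~\ref{p:wald-inequality}) justified by showing that $T_{k,n}$ is a stopping time for arm $k$'s filtration enlarged with the other arms' samples (the paper's Step~1), and the same substitution of the Lemma~\ref{l:ch-upper-lower} bounds with $\de=n^{-5/2}$. The only cosmetic difference is that the paper bounds $\sup_{\xi}\sigma_k^2/T_{k,n}$ directly by $\Sigma/n+12\sqrt{\log(1/\de)}/(\lambda_{\min}n)^{3/2}+4K\Sigma/n^2$ (its Equation~\eqref{eq:almost.lb}) instead of inverting the lower bound $T_{k,n}\geq\lambda_k(n-a)$, which avoids having to argue separately (and somewhat loosely, as in your last parenthetical) about the regime $a\geq n$.
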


\begin{proof}
The proof is reported in~\ref{a:thm:ch-regret}. It is mainly based on the last lemma and on the following inequality (Equation~\ref{eq:st2.1}):
\begin{align}
\E\Big[(\hmu_{k,n} - \mu_k)^2 \1\{{\xi\}}\Big] &\leq  \sup_{\xi}\Big(\frac{\si_k^2}{T_{k,n}^2} \Big) \E[T_{k,n}]\;. \nonumber
\end{align}
\end{proof}

\paragraph{Remark~1} As discussed in Section~\ref{s:preliminaries}, our objective is to design a sampling strategy capable of estimating the mean values of the arms almost as accurately as the estimations by the optimal allocation strategy, which assumes that the variances of the arms are known. In fact, Theorem~\ref{thm:ch-regret} shows that the CH-AS algorithm provides a uniformly accurate estimation of the expected values of the arms with a regret $R_n(\alg_{CH})$ of order $\tilde O(n^{-3/2})$. This regret rate is the same as the one for the GAFS-MAX algorithm in~\citet{antos2010active}. Note also that this algorithm is efficient for a fixed horizon $n$, although it might be possible to change it so that it is efficient for any horizon.

\paragraph{Remark~2} 
The bound displays an inverse dependency on the smallest optimal allocation proportion $\lambda_{\min}$. As a result, the bound scales poorly when an arm has a very small variance relative to the others, i.e.,~$\sigma_k \ll \Sigma$. Note that GAFS-MAX (see~\cite{antos2010active}) has also a similar dependency on the inverse of $\lambda_{\min}$. Moreover, Theorem~\ref{thm:ch-regret} holds for a budget $n\geq 5K$, whereas the regret bound of GAFS-MAX in~\cite{antos2010active} requires a condition $n\geq n_0$, in which $n_0$ is a constant that scales with $1/\lambda_{\min}$. Finally, note that this UCB type of algorithm (CH-AS) enables a much simpler regret analysis than that of GAFS-MAX.

\paragraph{Remark~3} It is clear from Lemma~\ref{l:ch-upper-lower} that the inverse dependency on $\lambda_{\min}$ appears in the bound on the number of pulls and then is propagated to the regret bound. We however believe that this dependency is not an artifact of the analysis and is intrinsic in the performance of the algorithm. Let us consider a two-arm problem with $\sigma_1^2=1/4$ and $\sigma_2^2 = 0$. The optimal allocation is $T_{1,n}^* = n-1$, $T_{2,n}^* = 1$ (only one sample is enough to estimate the mean of the second arm), and $\lambda_{\min}=0$. In this case, the arguments used in proving Theorem~\ref{thm:ch-regret} do not hold anymore and the bound itself becomes vacuous. We conjecture that the Chernoff-Hoeffding's bound used in the upper-confidence term forces the CH-AS to pull the arm with zero variance at least $Dn^{2/3}$ times, where $D$ is a positive constant, with high probability, which results in under-pulling the first arm by the same amount. As a result, the corresponding regret would have a rate of $n^{-4/3}$ w.r.t. the budget $n$. This suggests that when $\lambda_{\min}=0$ (or very small compared to $1/n$) CH-AS is still able to achieve a $o(1/n)$ regret as the budget $n$ increases but with a slower rate w.r.t. to result proved in Theorem~\ref{thm:ch-regret}. 

Finally, we notice that, for $\lambda_{\min}=0$, GAFS-MAX is more efficient than CH-AS. In fact, it over-pulls the arms with zero-variance only by $O(n^{1/2})$ and has a regret of order $\tilde O(n^{-3/2})$.
We will further study how the regret of CH-AS changes with $n$ in Section~\ref{ss:CH-B-G}.

As discussed in the previous remark, the reason for the poor performance in Lemma~\ref{l:ch-upper-lower} for small $\lambda_{\min}$ can be identified in the fact that Chernoff-Hoeffding's inequality is not tight for small-variance random variables. In Section~\ref{s:b-algorithm}, we propose an algorithm based on a tighter inequality for small-variance random variables, and prove that this algorithm under-pulls all the arms by \textit{at most} $\tilde O(n^{1/2})$, without a dependency on $\lambda_{\min}$ (see Equations~\ref{eq:b-lower} and~\ref{eq:b-upper}). 



\section{Allocation Strategy Based on Bernstein UCB}\label{s:b-algorithm}

In this section, we present another UCB-like algorithm, called {\em Bernstein Allocation Strategy} (B-AS)%
\footnote{The original Bernstein inequality refines the Chernoff-Hoeffding's inequality by introducing the variance of the random variable in the confidence bound. This inequality has been later adapted to the case where the actual variance is unknown and it can be replaced by an empirical estimate of the variance (see~\cite{AudibertTCS09}). In~\cite{maurer2009empirical} a similar result is obtained for the variance, where the confidence bound displays a dependency on the empirical estimate of the variance, thus we refer to this algorithm as Bernstein Allocation Strategy. Furthermore, we notice that the inequality derived in~\cite{maurer2009empirical} does not follow from a trivial application of Chernoff-Hoeffding, since it provides a concentration inequality for the standard deviation which is not an average of i.i.d. random variables but the square root of an average of squared variables.}%
, based on a tighter variance confidence bound that enables us to improve the bound on $|T_{k,n}-T_{k,n}^*|$ by removing the inverse dependency on $\lambda_{\min}$ (compare the bounds in Equations~\ref{eq:b-lower} and~\ref{eq:b-upper} to the one for CH-AS in Equation~\ref{eq:up-low-bound}). However this result itself is not sufficient to derive a better regret bound than CH-AS. This finding is interesting since it shows that even an adaptive algorithm which implements a strategy close to the optimal allocation strategy may still incur a regret that poorly scales with the smallest proportion $\lambda_{\min}$. We further investigate this issue by showing that the way the bound on the number of pulls translates into a regret bound depends on the specific distributions of the arms. In fact, when the distributions of the arms are Gaussian, we can exploit the property that the empirical variance $\hsi^2_{k,t}$ is independent of the empirical mean $\hmu_{k,t}$, and show that the regret of B-AS no longer depends on $1/\lambda_{\min}$. The numerical simulations in Section~\ref{s:experiments} further illustrate how the full shape of the distributions (and not only their first two moments) plays an important role in the regret of adaptive allocation algorithms.



\subsection{The B-AS Algorithm}\label{ss:b-algorithm}

\begin{figure}[t]
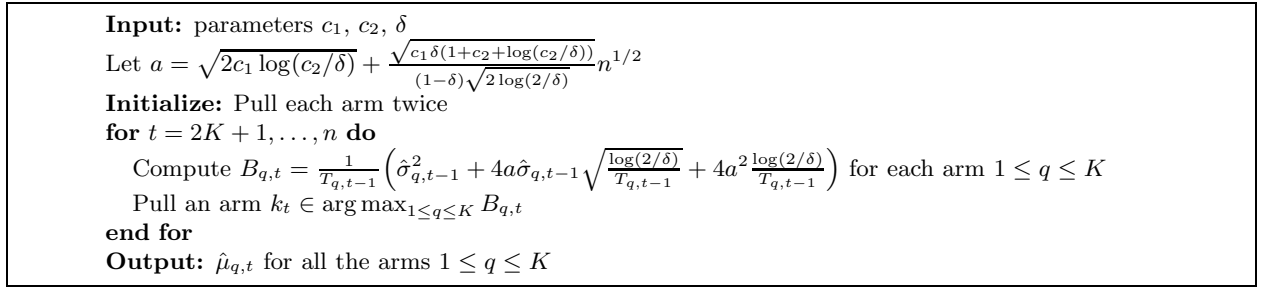

\bookbox{
\begin{algorithmic}
\STATE \textbf{Input:} parameters $c_1$, $c_2$, $\de$
\STATE Let $a = \sqrt{2c_1 \log(c_2/\de)} + \frac{\sqrt{c_1 \de (1+c_2 +\log(c_2/\de))}}{(1-\de)\sqrt{2\log(2/\de)}}n^{1/2}$
\STATE \textbf{Initialize:} Pull each arm twice
\FOR{$t = 2\narms+1,\ldots,n$}
  \STATE Compute $B_{q,t} = \frac1{T_{q,t-1}} \Big( \hvar_{q,t-1} + 4a \hsi_{q,t-1} \sqrt{\frac{\log(2/\de)}{T_{q,t-1}}} + 4a^2\frac{\log(2/\de)}{T_{q,t-1}} \Big)$ for each arm $1\leq q\leq K$
  \STATE Pull an arm $k_t \in \argmax_{1\leq q\leq K} B_{q,t}$ 
\ENDFOR
\STATE \textbf{Output:} $\hmu_{q,t}$ for all the arms $1\leq q\leq K$
\end{algorithmic}}
\caption{The pseudo-code of the B-AS algorithm. The empirical variances $\hsi^2_{k,t}$ are computed according to Equation~\ref{eq:estim-var2}.}\label{f:b-algorithm}
\end{figure}

The algorithm is based on the use of a high-probability bound, reported in \cite{maurer2009empirical} (a similar bound can be found in \cite{AudibertTCS09}), on the variance of each arm. Like in the previous section, the arm sampling strategy is determined by those bounds. The B-AS algorithm, $\alg_B$, is described in Figure~\ref{f:b-algorithm}. It requires three parameters as input (see Remark~2 in Subsection~\ref{sec:gaussian.dist} for a discussion on how to reduce the number of parameters from three to one) $c_1$ and $c_2$, which are related to the shape of the distributions (see Assumption~\ref{a:subgaussian}), and $\de$, which defines the \textit{confidence level} of the bound. The amount of exploration of the algorithm can be adapted by properly tuning these parameters. The algorithm is similar to CH-AS except that for each arm, the bound $B_{q,t}$ is computed as
%
\begin{equation*}
B_{q,t} = \frac1{T_{q,t-1}} \Big( \hvar_{q,t-1} + 4a \hsi_{q,t-1} \sqrt{\frac{\log(2/\de)}{T_{q,t-1}}} + 4a^2\frac{\log(2/\de)}{T_{q,t-1}} \Big)\;,
\end{equation*}
%
\noindent
where $a = \sqrt{2c_1 \log(c_2/\de)} + \frac{\sqrt{c_1 \de (1+c_2 +\log(c_2/\de))}}{(1-\de)\sqrt{2\log(2/\de)}}n^{1/2}$, 
and\footnote{Unlike in Equation~\ref{eq:estim-var1}, here we use the unbiased estimator of variance.}

\begin{equation}\label{eq:estim-var2}
\hat\mu_{k,t} = \frac{1}{T_{k,t}} \sum_{i=1}^{T_{k,t}} X_{k,i},\;\;\;\;\;\mbox{ and }
\;\;\;\;\;\hsi_{k,t}^2 = \frac{1}{T_{k,t}-1} \sum_{i=1}^{T_{k,t}} (X_{k,i}-\hat\mu_{k,t})^2\;.
\end{equation}

Note that actually $\hmu_{k,t}$, $\hsi_{k,t}$, $B_{k,t}$, $k_t$, and $T_{k,t}$ depend on the arm index (except for $k_t$), on the time step $t\le n$, but also, either in a direct or in an indirect way (through the mechanism of the algorithm) on the budget $n$, on $\de$ which will be chosen as a function of the budget $n$, and also on $c_1$ and $c_2$. However, since we consider most of the time a fixed budget $n$ and thus a fixed $\de$, and fixed $c_1,c_2$, we conserve this notation in order to have lighter notations.


\subsection{Regret Bound and Discussion}\label{s:b-discussion}

The B-AS algorithm is designed to overcome the limitations of CH-AS, especially in the case of arms with different variances. Here we consider a more general assumption than in the previous section, namely that the distributions are sub-Gaussian.

\begin{assumption}[Sub-Gaussian distributions]\label{a:subgaussian}
There exist $c_1,c_2>0$ such that for all $1\leq k\leq K$ and any $\ep>0$, 
\begin{equation}\label{def:sub.gaus}
\Prob_{X\sim \nu_k}[|X-\mu_k| \geq \ep] \leq c_2 \exp(-\ep^2/c_1)\;.
\end{equation}
\end{assumption}

This assumption holds for the Gaussian distribution, and more generally for any distribution whose tail is lighter than Gaussian's. It is thus held for bounded random variables. For example, if $X \in [0,1]$, then the assumption holds with e.g.,~$c_1 = 1$ and $c_2 = e$.

We first state a bound in Lemma~\ref{l:b-upper-lower} on the difference between the number of pulls suggested by B-AS and the optimal allocation strategy.

\begin{lemma}\label{l:b-upper-lower}
Let Assumption~\ref{a:subgaussian} holds for $c_1,c_2\geq 1$ and let $0<\de\le 2/e$. Define the event
\begin{equation*}
\xi_{K,n}^B(\de) = \mathop{\bigcap_{1\leq k\leq K}}_{2\leq t\leq n}\left\lbrace \Bigg|\sqrt{\frac{1}{t-1}\sum_{i=1}^t\Big(X_{k,i} - \frac{1}{t}\sum_{j=1}^t X_{k,j}\Big)^2} - \si_k\Bigg| \leq 2a\sqrt{\frac{\log(2/\de)}{t}} \right\rbrace,
\end{equation*}
where $a =  \sqrt{2c_1 \log(c_2/\de)} + \frac{\sqrt{c_1 \de (1+c_2 +\log(c_2/\de))}}{(1-\de)\sqrt{2\log(2/\de)}}n^{1/2}$. 
The probability of $\xi_{K,n}^B(\de)$ is higher than $1-2nK\de$. When we run the B-AS algorithm with parameters $c_1\geq 1$, $c_2\geq 1$, and $\de$, and budget $n\geq 5K$, on $\xi_{K,n}^B(\de)$ and for each arm $1\leq k\leq K$, we have 
\begin{equation}\label{eq:b-lower}
T_{k,n} \geq T_{k,n}^* - 
K\lambda_k \Bigg[ \frac{16a\sqrt{\log(2/\de)}}{\Sigma} \Bigg(\sqrt{\Sigma} + \frac{2a\sqrt{\log(2/\de)}}{c(\de)} \Bigg)n^{1/2} + 64\sqrt{2K}a^2 \frac{\log(2/\de)}{\Sigma\sqrt{c(\de)}}\;n^{1/4} + 2\Bigg],
\end{equation}
and
\begin{equation}\label{eq:b-upper}
T_{k,n} \leq T_{k,n}^* + 
K \Bigg[ \frac{16a\sqrt{\log(2/\de)}}{\Sigma} \Bigg(\sqrt{\Sigma} + \frac{2a\sqrt{\log(2/\de)}}{c(\de)} \Bigg)n^{1/2} + 64\sqrt{2K}a^2 \frac{\log(2/\de)}{\Sigma\sqrt{c(\de)}}\;n^{1/4} + 2\Bigg],
\end{equation}
where $c(\de) = \frac{ a \sqrt{3\log(2/\de)} }{\sqrt{K}(\sqrt{\Sigma} + 3 a \sqrt{\log(2/\de)} )}$.
\end{lemma}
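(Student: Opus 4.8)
The plan is to split the statement into two essentially independent parts: (i) the high-probability concentration that defines the event $\xi^B_{K,n}(\delta)$, which is a statement about the i.i.d.\ sample arrays of each arm and is decoupled from the algorithm's random stopping times, and (ii) a purely deterministic allocation analysis valid on that event. For (i) I would fix an arm $k$ and a prefix length $t$ and control the deviation of the empirical standard deviation from $\sigma_k$. The natural tool is the empirical-standard-deviation concentration bound of \cite{maurer2009empirical}, but that result assumes bounded variables whereas Assumption~\ref{a:subgaussian} only gives sub-Gaussian tails. I would therefore truncate each sample at a level $b \asymp \sqrt{c_1 \log(c_2/\delta)}$ chosen so that, by \eqref{def:sub.gaus}, the tail mass beyond $b$ is of order $\delta$; on the event that the samples lie in $[\mu_k-b,\mu_k+b]$ I apply the Maurer--Pontil bound with range $2b$, which generates the leading term $\sqrt{2c_1\log(c_2/\delta)}$ of $a$. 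The truncation bias and the rare event that a sample exceeds $b$ are then absorbed by a Cauchy--Schwarz estimate of the tail second moment, contributing the lower-order summand $\propto \sqrt{\delta}\,n^{1/2}$ in $a$ (which is negligible once $\delta \asymp n^{-5/2}$). A union bound over the $K$ arms, the $\le n$ values of $t$, and the two deviation directions yields $\Prob(\xi^B_{K,n}(\delta)) \ge 1-2nK\delta$; the constraint $\delta \le 2/e$ merely guarantees $\log(2/\delta)\ge 1$.

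For (ii), the first observation is that the index rewrites compactly as $B_{k,t} = (\hat\sigma_{k,t-1}+\epsilon_{k,t-1})^2/T_{k,t-1}$ with $\epsilon_{k,t-1} = 2a\sqrt{\log(2/\delta)/T_{k,t-1}}$, so on $\xi^B_{K,n}(\delta)$ one gets the sandwich $\sigma_k^2/T_{k,t-1} \le B_{k,t} \le (\sigma_k+2\epsilon_{k,t-1})^2/T_{k,t-1}$. The over-pull bound \eqref{eq:b-upper} then follows the same last-pull template as Lemma~\ref{l:ch-upper-lower}: at the last round $t$ in which $k$ is selected, $B_{k,t}\ge B_{j,t}$ for all $j$; lower-bounding each $B_{j,t}$ by $\sigma_j^2/T_{j,t-1}$, summing over $j$, and using $\sum_j T_{j,t-1}=t-1\le n$ together with $\sum_j\sigma_j^2=\Sigma$ gives $T_{k,n}\le n(\sigma_k+2\epsilon_{k,t-1})^2/\Sigma+1$, whose excess over $T_{k,n}^*=\lambda_k n$ is governed by $4\sigma_k\epsilon_{k,t-1}+4\epsilon_{k,t-1}^2$. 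The under-pull bound \eqref{eq:b-lower} is obtained symmetrically, applying the last-pull inequality to every arm $j$ to get $T_{j,n}-1\le T_{k,n}(\sigma_j+2\epsilon_j)^2/\sigma_k^2$ and summing to obtain $T_{k,n}\ge\sigma_k^2(n-K)/\sum_j(\sigma_j+2\epsilon_j)^2$.

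The crux, and the step where B-AS genuinely improves on CH-AS, is to close the circular dependence of $\epsilon_{k,t-1}$ on $T_{k,t-1}$ without reintroducing $\lambda_{\min}$. The key is the pure-exploration term: since $B_{k,t}\ge \epsilon_{k,t-1}^2/T_{k,t-1}=4a^2\log(2/\delta)/T_{k,t-1}^2$, even an arm with near-zero variance retains a large index while under-sampled. I would formalize this by considering the last pull of the most-pulled arm $j^\star$ (which has $T_{j^\star,n}\ge n/K$, hence $B_{j^\star}\le(\sigma_{j^\star}+2\epsilon_{j^\star})^2/(n/K-1)$) and combining it with $B_{j^\star}\ge B_{k}\ge 4a^2\log(2/\delta)/T_{k,n}^2$. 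This gives the $\lambda_{\min}$-free minimum-pull guarantee $T_{k,n}\ge c(\delta)\sqrt{n}$ uniformly in $k$, where the denominator $\sqrt{\Sigma}+3a\sqrt{\log(2/\delta)}$ of $c(\delta)$ comes from the crude bound $\sigma_{j^\star}+2\epsilon_{j^\star}\le \sqrt{\Sigma}+3a\sqrt{\log(2/\delta)}$ (valid once $n\ge 4K$, using $T_{j^\star}\ge 2$).

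Finally I would bootstrap: substituting $T_{k,t-1}\ge c(\delta)\sqrt{n}$ controls the $\epsilon^2$ corrections, while the $\sigma_k\epsilon_k$ corrections are kept at order $n^{1/2}$ by exploiting the identity $\sigma_k/\sqrt{\lambda_k}=\sqrt{\Sigma}$. Concretely, large-variance arms ($\lambda_k n\ge c(\delta)\sqrt n$) are handled by the self-consistent estimate $T_{k,n}\approx\lambda_k n$ fed back through \eqref{eq:b-lower}, and small-variance arms ($\lambda_k n< c(\delta)\sqrt n$) by the minimum-pull floor; in both regimes $\tfrac{n}{\Sigma}\sigma_k\epsilon_k$ collapses to $O\!\big(a\sqrt{\log(2/\delta)}\,n^{1/2}/\sqrt{\Sigma}\big)$, giving the leading $n^{1/2}$ bracket, with the residual corrections (controlled through $c(\delta)$) producing the $n^{1/4}$ term and the boundary effects ($+1$ per arm and the $-K$ from $\sum_j(T_{j,n}-1)=n-K$) producing the additive constant. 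The main obstacle is exactly this bootstrap: one must establish the $\lambda_{\min}$-independent minimum-pull count and then verify that \emph{every} $\epsilon$-correction can be bounded so that no $1/\lambda_{\min}$ factor is generated by the small-variance arms, which is precisely what forces the split around the exploration floor $c(\delta)\sqrt{n}$.
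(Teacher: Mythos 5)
Your overall architecture matches the paper's quite closely. Part (i) is proved exactly as in the paper: truncate at level $b\asymp\sqrt{c_1\log(c_2/\delta)}$, apply the Maurer--Pontil empirical-standard-deviation bound to the conditioned (hence bounded) variables, control the bias $|\tilde\sigma_k-\sigma_k|$ of the truncated distribution via the tail second moment --- which is indeed what produces the $\propto\sqrt{\delta}\,n^{1/2}$ summand of $a$ --- and union bound. The heart of part (ii), namely the $\lambda_{\min}$-free exploration floor $T_{k,n}\ge c(\delta)\sqrt n$ obtained by playing the pure-exploration term $4a^2\log(2/\delta)/T^2$ against the most-pulled arm, followed by the last-pull comparison summed over arms and a Cauchy--Schwarz aggregation, is precisely the paper's Steps 1--4. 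Two bookkeeping remarks: the factor $2$ in $1-2nK\delta$ comes from adding the failure probability of the truncation event to that of the concentration event, not from ``two deviation directions''; and the last-pull inequality is only available for arms actually pulled after the initialization, so the relevant sum $\sum_q(T_{q,n}-1)$ is bounded below by $n-2K$, not $n-K$.

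The one place you genuinely depart from the paper is the upper bound \eqref{eq:b-upper}. The paper gets it for free from the lower bounds via $T_{k,n}=n-\sum_{j\ne k}T_{j,n}\le T_{k,n}^*+\sum_{j\ne k}\lambda_j K[\cdots]\le T_{k,n}^*+K[\cdots]$, which incidentally is why the upper bound lacks the $\lambda_k$ prefactor that the lower bound has. Your direct route $T_{k,n}\le n(\sigma_k+2\epsilon_k)^2/\Sigma+1$ is viable in principle, but as written the control of the cross term $\tfrac{n}{\Sigma}\sigma_k\epsilon_k$ has a gap: to make it $O\big(a\sqrt{\log(2/\delta)}\,n^{1/2}/\sqrt{\Sigma}\big)$ you need $T_{k,n}\gtrsim\lambda_k n$, and in the regime $\lambda_k n\ge c(\delta)\sqrt n$ your only handle is to ``feed back'' \eqref{eq:b-lower}, which for moderate $n\ge 5K$ may be vacuous (its right-hand side can be negative); falling back on the floor $T_{k,n}\ge c(\delta)\sqrt n$ alone only yields $\tfrac{n}{\Sigma}\sigma_k\epsilon_k=O(n^{3/4})$ in that regime. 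The fix is easy --- either adopt the paper's complement trick, or note that \eqref{eq:b-upper} is trivial unless $T_{k,n}>T_{k,n}^*$, in which case $\epsilon_k\le 2\sqrt{2}\,a\sqrt{\log(2/\delta)/(\lambda_k n)}$ and the cross term collapses as you claim --- but as stated your bootstrap does not close.
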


\begin{proof}
The proof is reported in~\ref{s:b-tools} and~\ref{s:b-allocation}.
\end{proof}

\paragraph{Remark} Unlike the bounds for CH-AS in Lemma~\ref{l:ch-upper-lower}, B-AS allocates the pulls on the arms so that, on the event $\xi_{K,n}^B(\de)$, the bound on the difference between $T_{k,n}$ and $T_{k,n}^*$ is now independent from $\lambda_{\min}$, while it preserves a $\sqrt{n}$ dependency on the budget. In practice, this difference may correspond to a significant improvement. In fact, for any finite budget $n$, if the arms are such that the term depending on $\lambda_{\min}$ becomes the leading term in the bound in Lemma~\ref{l:ch-upper-lower}, then we can expect B-AS to outperform CH-AS (see also Remark 3 of Section~\ref{ss:ch-discussion} for further discussion of the performance of CH-AS for very small $\lambda_{\min}$). Another interesting aspect of the previous lemma is that the lower bound in Equation~\ref{eq:b-lower} can be written as $C \lambda_k \sqrt{n}$ (where $C>0$ does not depend on $\lambda_{k}$). This implies that as allocation ratio $\lambda_k$ decreases (i.e., arm $k$ should not be pulled much), the difference between $T_{k,n}$ and $T_{k,n}^*$ decreases as well. This is not the case in the upper bound, where the difference between $T_{k,n}$ and $T_{k,n}^*$ does not have any linear dependency on $\lambda_k$. This asymmetry between lower and upper bound is the main reason why the final regret bound of B-AS actually displays an inverse dependency on $\lambda_{\min}$ as shown in Theorem~\ref{thm:b-regret}.


\begin{theorem}\label{thm:b-regret}
Assume that all the distributions $\{\nu_k\}_{k=1}^K$ are sub-Gaussians with parameters $c_1$ and $c_2$. If the fixed (known in advance) budget is such that $n\geq 5K$, the regret of $\alg_{B}$, when it runs with parameters $c_1\geq 1$, $c_2 \geq 1$, and $\de= n^{-7/2}$ is bounded as
\begin{align*}
 R_n(\alg_{B}) &\leq 
\frac{76400 c_1(c_2+1) K^2(\log n)^2}{\lambda_{\min}n^{3/2}} + O\Big(\frac{(\log n)^6 K^7}{n^{7/4}\lambda_{\min}}\Big)\;.
\end{align*}
\end{theorem}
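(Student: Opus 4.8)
The plan is to bound the loss $L_{k,n}$ of each arm separately, then take the maximum over $k$ and subtract the optimal loss $L_n(\alg^*)=\Sigma/n$. For a fixed arm $k$ I would split the loss according to whether the favorable event $\xi_{K,n}^B(\de)$ of Lemma~\ref{l:b-upper-lower} holds,
\[
L_{k,n} = \E\big[(\hmu_{k,n}-\mu_k)^2\,\1\{\xi_{K,n}^B(\de)\}\big] + \E\big[(\hmu_{k,n}-\mu_k)^2\,\1\{\overline{\xi_{K,n}^B(\de)}\}\big],
\]
and treat the two pieces by completely different arguments. On the favorable event the allocation is close to optimal, so this term produces the leading $\Sigma/n$ together with the $\tilde O(n^{-3/2})$ regret; off the favorable event the allocation can be arbitrary, but with $\de=n^{-7/2}$ the event is so unlikely that this piece is negligible.

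For the favorable-event term I would reuse the key inequality from the proof of Theorem~\ref{thm:ch-regret}, namely $\E[(\hmu_{k,n}-\mu_k)^2\,\1\{\xi\}]\leq\sup_{\xi}\big(\si_k^2/T_{k,n}^2\big)\,\E[T_{k,n}]$, which reduces the problem to controlling the random number of pulls. Writing $\mathcal B$ for the bracketed quantity common to Equations~\ref{eq:b-lower}--\ref{eq:b-upper}, the two bounds of Lemma~\ref{l:b-upper-lower} play distinct roles: the lower bound $T_{k,n}\geq T_{k,n}^*-K\lambda_k\mathcal B$ controls the first factor, $\sup_{\xi}(\si_k^2/T_{k,n}^2)\leq\si_k^2/\big(\lambda_k(n-K\mathcal B)\big)^2$, while the upper bound $T_{k,n}\leq T_{k,n}^*+K\mathcal B$, combined with the trivial $T_{k,n}\leq n$ off the favorable event, controls $\E[T_{k,n}]\leq T_{k,n}^*+K\mathcal B+n\,\Prob(\overline{\xi_{K,n}^B(\de)})$. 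Using $\si_k^2/\lambda_k=\Sigma$, the first factor times $T_{k,n}^*=\lambda_k n$ equals $\Sigma n/(n-K\mathcal B)^2$; subtracting $\Sigma/n$ and expanding $(1-K\mathcal B/n)^{-2}$ gives a benign $O(\Sigma K\mathcal B/n^2)$ term. The troublesome contribution is the cross term $\si_k^2(K\mathcal B)/\big(\lambda_k^2(n-K\mathcal B)^2\big)=\Sigma K\mathcal B/\big(\lambda_k(n-K\mathcal B)^2\big)$: because the upper bound on $T_{k,n}$ carries no factor $\lambda_k$ --- unlike the lower bound --- this term scales as $1/\lambda_k$ and, after maximizing over $k$, produces the $1/\lambda_{\min}$ of the statement. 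Substituting $\de=n^{-7/2}$ (so that $a^2$ and $\log(2/\de)$ are each $\Theta(\log n)$ and $c(\de)=\Theta(1/\sqrt{K})$, hence $\mathcal B=\tilde O(\sqrt{K}\,n^{1/2}/\Sigma)$) and collecting the powers of $\log n$, $K$, $c_1$ and $c_2$ then yields the leading $c_1(c_2+1)K^2(\log n)^2/(\lambda_{\min}n^{3/2})$ term, the $n^{1/4}$ part of $\mathcal B$ producing the stated remainder.

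The bad-event term is the one genuine departure from the bounded CH-AS analysis: since the arms are only sub-Gaussian, $(\hmu_{k,n}-\mu_k)^2$ is unbounded and cannot simply be replaced by $1$ and multiplied by $\Prob(\overline{\xi_{K,n}^B(\de)})\leq 2nK\de$. Instead I would use a layer-cake identity $\E[(\hmu_{k,n}-\mu_k)^2\,\1\{\overline\xi\}]=\int_0^\infty\Prob\big((\hmu_{k,n}-\mu_k)^2>s,\ \overline\xi\big)\,ds$ and split it at a threshold $s_0=\Theta(c_1\log n)$: for $s\leq s_0$ I bound the integrand by $\Prob(\overline\xi)$, and for $s>s_0$ by a union bound over the possible values $m=2,\dots,n$ of $T_{k,n}$ of the sub-Gaussian tail $\Prob(|\hmu_{k,m}-\mu_k|>\sqrt{s})$. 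Since $\Prob(\overline\xi)\leq 2Kn^{-5/2}$ for $\de=n^{-7/2}$, the first part is $\tilde O(n^{-5/2})$, and the choice of $s_0$ makes the tail integral smaller still, so the whole bad-event term is of lower order than $n^{-3/2}$.

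I expect two steps to be the crux. The first is conceptual: confirming that the asymmetry between Equations~\ref{eq:b-lower} and~\ref{eq:b-upper} is genuine and not removable by this method, since it is exactly what reinstates $1/\lambda_{\min}$ despite the $\lambda_{\min}$-free control of $|T_{k,n}-T_{k,n}^*|$. The second is technical: making the unbounded bad-event term negligible, which is what forces the more aggressive choice $\de=n^{-7/2}$ (against $n^{-5/2}$ for CH-AS) --- $\de$ must be small enough that $n\,\Prob(\overline\xi)$ and the layer-cake remainder are $o(n^{-3/2})$, yet not so small that the growth of $a\sim\sqrt{\log(1/\de)}$ inflates the leading term past $(\log n)^2$. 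The remaining work --- the Taylor expansion of $(1-K\mathcal B/n)^{-2}$, tracking constants through $a$, $c(\de)$ and $\mathcal B$, and absorbing $K^{3/2}$-type factors into $K^2$ --- is routine.
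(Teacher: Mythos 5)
Your proposal follows essentially the same route as the paper: the same decomposition of $L_{k,n}$ over the event $\xi_{K,n}^B(\de)$, the same Wald-type inequality $\E[(\hmu_{k,n}-\mu_k)^2\1\{\xi\}]\le\sup_{\xi}(\sigma_k^2/T_{k,n}^2)\,\E[T_{k,n}]$ fed by the asymmetric lower/upper bounds of Lemma~\ref{l:b-upper-lower}, and the same identification of the cross term (the upper bound on $\E[T_{k,n}]$ carrying no $\lambda_k$ factor) as the sole source of the $1/\lambda_{\min}$ in the final bound. The only departure is your layer-cake treatment of the off-event term; the paper's Lemma~\ref{l:some-lemma} instead truncates the individual samples and uses $(\hmu_{k,n}-\mu_k)^2\le\sum_{t\le n}(X_{k,t}-\mu_k)^2$ to get a $\tilde O(c_1Kn^{-3/2}\log n)$ bound that is simply absorbed into the leading constant --- both devices work, and the paper's version also avoids ever placing the (possibly non-positive for moderate $n$) quantity $n-K\mathcal B$ in a denominator by bounding $\sigma_k^2/T_{k,n}$ directly via Equation~\ref{eq:almost-lower-bound} rather than inverting the lower bound on $T_{k,n}$.
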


\begin{proof}
The proof is reported in~\ref{s:b-regret}.
\end{proof}

Note again that this algorithm is efficient for a fixed horizon $n$, although it might be possible to change it so that it is efficient for any horizon.

Similar to Theorem~\ref{thm:ch-regret}, the bound on the number of pulls translates into a regret bound through Equation~\ref{eq:st2.1} reported in~\ref{a:thm:ch-regret}. Note that in order to remove the dependency on $\lambda_{\min}$, a symmetric bound on $|T_{k,n}-T^*_{k,n}| \leq \lambda_k \tilde O( \sqrt{n})$ is needed. While the lower bound in Equation~\ref{eq:b-lower} already decreases with $\lambda_k$, the upper bound scales with $\tilde O(\sqrt{n})$. Whether there exists an algorithm with a tighter upper bound scaling with $\lambda_k$ is still an open question. Nonetheless, in the next section, we show that an improved bound on the loss can be achieved in the special case of Gaussian distributions, which leads to a regret bound without the dependency on $\lambda_{\min}$.


\subsection{Regret for Gaussian Distributions}\label{sec:gaussian.dist}

In the case of Gaussian distributions, the bound on the loss of Equation~\ref{eq:st2.1} can be improved using the following lemma.

\begin{lemma}\label{l:gauss-regret}
Let $k \leq K$.
Assume that the distribution $\nu_k$ is Gaussian (and independent of all other distributions $(\nu_{k'})_{k' \neq k}$). Then the loss for arm $k$ of algorithms CH-AS or B-AS satisfies
\begin{equation}\label{eq:gauss-regret}
L_{k,n} = \E\big[(\hmu_{k,n} - \mu_k)^2 \big] = \var_k \E\Big[\frac{1}{T_{k,n}} \Big]\;.
\end{equation}
\end{lemma}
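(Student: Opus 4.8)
The plan is to peel the randomness of the stopping count $T_{k,n}$ away from the randomness of the empirical mean, exploiting the fact that for a Gaussian arm the sample mean carries no information about the sample variances that drive the algorithm's decisions. Concretely, I would first decompose the loss over the possible values of the number of pulls,
\begin{equation*}
\E\big[(\hmu_{k,n}-\mu_k)^2\big] = \sum_{t=2}^{n} \E\big[(\hmu_{k,t}-\mu_k)^2\,\ind{T_{k,n}=t}\big],
\end{equation*}
using that on $\{T_{k,n}=t\}$ the output $\hmu_{k,n}$ equals $\hmu_{k,t}$, the mean of the first $t$ samples of arm $k$. The goal is then to show each summand factorizes as $\frac{\var_k}{t}\,\Prob(T_{k,n}=t)$, after which summing gives $\var_k\,\E[1/T_{k,n}]$.

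The key probabilistic ingredient is a Gaussian independence fact: for i.i.d.\ Gaussian samples, $\hmu_{k,t}=\frac1t\sum_{i=1}^t X_{k,i}$ is independent of the whole family of partial sample variances $\{\hsi_{k,s}^2 : 2\le s\le t\}$. I would establish this by noting that each $\hsi_{k,s}^2$ is a function of the residual vector $r^{(s)}=(X_{k,i}-\hmu_{k,s})_{i\le s}$, and that $(X_{k,1},\dots,X_{k,t})$ together with $\hmu_{k,t}$ and all these residuals is jointly Gaussian. A direct covariance computation gives $\mathrm{Cov}\big(\sum_{i=1}^t X_{k,i},\,X_{k,j}-\frac1s\sum_{i=1}^s X_{k,i}\big)=\si_k^2-\si_k^2=0$ for every $s\le t$ and $j\le s$, so $\hmu_{k,t}$ is uncorrelated with, and hence by joint Gaussianity independent of, all the residual vectors $r^{(s)}$, $s\le t$; any measurable function of them, in particular each $\hsi_{k,s}^2$, is therefore independent of $\hmu_{k,t}$.

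Next I would make the structural observation that ties this to the algorithm. Conditioning on the samples $\mathcal{F}_{-k}$ of all arms $k'\neq k$ (independent of arm $k$ since the arms are independent), the event $\{T_{k,n}=t\}$ is a deterministic function of arm $k$'s data through the allocation indices $B_{k,\cdot}$ only, and these indices depend on arm $k$ solely via its variance estimates $\hsi_{k,\cdot}^2$ and pull counts, never via the empirical means. Since on $\{T_{k,n}=t\}$ arm $k$ is pulled at most $t$ times, the only variance estimates ever computed are $\hsi_{k,s}^2$ for $s\le t$; hence $\ind{T_{k,n}=t}$ is measurable with respect to $\sigma(\{\hsi_{k,s}^2:s\le t\})\vee\mathcal{F}_{-k}$. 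Combining with the previous paragraph, $(\hmu_{k,t}-\mu_k)^2$ is independent of this $\sigma$-algebra, so that $\E[(\hmu_{k,t}-\mu_k)^2\mid \{\hsi_{k,s}^2:s\le t\},\mathcal{F}_{-k}]=\var_k/t$. Taking expectations of $(\hmu_{k,t}-\mu_k)^2\,\ind{T_{k,n}=t}$ against this conditional expectation yields $\frac{\var_k}{t}\Prob(T_{k,n}=t)$, and summing over $t$ proves the claim for both CH-AS and B-AS, whose indices share the same structural dependence on $\hsi^2$ and counts.

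I expect the main obstacle to be the bookkeeping in the second and third steps rather than any deep estimate. One must argue the \emph{joint} (not merely pairwise) independence of $\hmu_{k,t}$ from the entire collection of partial variances simultaneously, which is exactly what the joint-Gaussianity-plus-vanishing-covariance argument provides; and one must verify carefully that the decision rule reads only the variance estimates and counts of arm $k$ (so the means are never consulted) and that no variance estimate beyond time $t$ enters the event $\{T_{k,n}=t\}$. Once these measurability facts are in place, the factorization is immediate and the identity follows.
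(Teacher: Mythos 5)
Your proof is correct, and its overall architecture matches the paper's: decompose the loss over the values of $T_{k,n}$, condition on the samples of the other arms, and use the fact that for a Gaussian arm the empirical mean is independent of everything the algorithm consults when deciding how many times to pull that arm. Where you genuinely diverge is in how the central independence fact is established. The paper proves (its Lemma~\ref{lem:gauss.law}) that $\hat m_t\mid\sigma(\hat s_2,\dots,\hat s_t)\sim\N(\mu,\si^2/t)$ by induction on $t$: it uses the recursion $\hat s_{t+1}^2=\frac{t-1}{t}\hat s_t^2+\frac{1}{t+1}(X_{t+1}-\hat m_t)^2$ to identify the filtration with the one generated by the successive innovations $(X_{i+1}-\hat m_i)^2$, and at each step shows that $X_{t+1}-\hat m_t$ and $\hat m_{t+1}-\mu$ are conditionally jointly Gaussian and uncorrelated. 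You instead get the (slightly stronger) unconditional statement in one shot: the vector consisting of $\sum_{i\le t}X_{k,i}$ together with all residual vectors $(X_{k,j}-\hmu_{k,s})_{j\le s}$, $s\le t$, is jointly Gaussian as a linear image of the sample, and a single covariance computation shows the sum is uncorrelated with every residual, hence independent of the whole collection and of every partial variance simultaneously. Your route avoids the induction and the conditional-distribution bookkeeping, at the price of having to invoke joint normality of a larger linear image of the sample; the paper's inductive route is what lets it phrase the result as an exact conditional law, which it then applies verbatim to the stopping time. Both arguments rest on the same unproved-but-asserted measurability fact, namely that $\{T_{k,n}\le t\}$ is determined by the partial variance estimates of arm $k$ up to $t$ together with the other arms' samples; you flag this explicitly, and the paper asserts it at the same level of detail, so this is not a gap relative to the paper.
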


\begin{proof}
The proof is reported in~\ref{s:b-results-gauss}.
\end{proof}


\paragraph{Remark} Note that the loss in Equation~\ref{eq:gauss-regret} does not require any upper bound on $T_{k,n}$. It is actually similar to the case of deterministic allocation. When $\tilde T_{k,n}$ is the deterministic number of pulls, the corresponding loss resulting from pulling arm $k$, $\tilde T_{k,n}$ times, is $L_{k,n} = \sigma^2_k / \tilde T_{k,n}$. In general, when $T_{k,n}$ is a random variable depending on the empirical variances $\{\hsi_k^2\}_{k=1}^K$ (like in our adaptive algorithms CH-AS and B-AS), we have
\begin{equation*}
\E\big[(\hmu_{k,n} - \mu_k)^2 \big] = \sum_{t=1}^n \E\big[(\hmu_{k,n} - \mu_k)^2 |T_{k,n}=t\big] \Prob[T_{k,n} = t], 
\end{equation*}
which might be different than $\sigma^2_k \E\Big[\frac{1}{T_{k,n}} \Big]$. In fact, the empirical average $\hmu_{k,n}$ depends on $T_{k,n}$ through $\{\hsi_{k,n}\}_{k=1}^K$, and $\E\big[(\hmu_{k,n} - \mu_k)^2 |T_{k,n}=t\big]$ might not be equal to $\sigma_k^2/t$. However, Gaussian distributions have the property that for any fixed-size sample, the empirical mean is independent from the empirical variance and this enables us to prove Lemma~\ref{l:gauss-regret}, which holds for both the CH-AS and the B-AS algorithm.

We now report a regret bound in the case of the Gaussian distribution. Note that in this case Assumption~\ref{a:subgaussian} holds with $c_1 = 2\Sigma$ and $c_2 = 1$.\footnote{Note that for a single Gaussian distribution $c_1 = 2\sigma^2$, where $\sigma^2$ is the variance of the distribution. Here we use $c_1 = 2\Sigma$ in order for the assumption to be satisfied for all the $K$ distributions simultaneously.}

\begin{theorem}\label{thm:b-regret-gauss}
Assume that all the distributions $\{\nu_k\}_{k=1}^K$ are Gaussian and that an upper-bound $\overline{\Sigma}\geq 1/2$ on $\Sigma$ is known. If the budget is known on advance and such that $n\geq 5K$, the B-AS algorithm launched with parameters $c_1=2\overline{\Sigma}$, $c_2=1$, and $\de=n^{-7/2}$ has the following regret bound
\begin{align}\label{eq:b-regret}
R_n(\alg_B)  &\leq  
 \frac{105\times 10^3 \bar \Sigma}{n^{3/2}}K^2(\log n)^2\;.
\end{align}
\end{theorem}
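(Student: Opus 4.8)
The plan is to build the regret bound entirely on the exact Gaussian loss identity of Lemma~\ref{l:gauss-regret}, $L_{k,n} = \si_k^2\,\E\big[1/T_{k,n}\big]$, which is the single ingredient that lets us control the loss using \emph{only} a lower bound on $T_{k,n}$. This is precisely what kills the $\lambda_{\min}$ dependence. In the general sub-Gaussian case one is forced to pass through Equation~\ref{eq:st2.1}, which multiplies a lower bound on $T_{k,n}$ (inside $\si_k^2/T_{k,n}^2$) by the \emph{upper} bound on $\E[T_{k,n}]$; since the upper bound in Equation~\ref{eq:b-upper} does not carry a factor $\lambda_k$ (unlike the lower bound in Equation~\ref{eq:b-lower}), a surviving factor $1/\lambda_k$ appears after taking the maximum over arms. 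Working directly with $\E[1/T_{k,n}]$ and the lower bound of Equation~\ref{eq:b-lower} instead, the product $\si_k^2/\lambda_k = \Sigma$ cancels cleanly and no $1/\lambda_k$ is ever created.

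First I would check that Lemma~\ref{l:b-upper-lower} applies. Since each $\nu_k$ is Gaussian, Assumption~\ref{a:subgaussian} holds with $c_1 = 2\Sigma \le 2\overline{\Sigma}$ and $c_2 = 1$, and the algorithm is launched with $c_1 = 2\overline{\Sigma}\ge 1$ (using $\overline{\Sigma}\ge 1/2$), $c_2 = 1\ge 1$, and $\de = n^{-7/2}\le 2/e$. Hence the event $\xi_{K,n}^B(\de)$ has probability at least $1-2nK\de$ and Equation~\ref{eq:b-lower} holds on it. Writing the bracket of Equation~\ref{eq:b-lower} as $b_n$, the lower bound becomes $T_{k,n}\ge \lambda_k n\,(1-\Delta_n)$ on $\xi_{K,n}^B(\de)$, where $\Delta_n = K b_n/n$. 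I would then split $\E[1/T_{k,n}]$ over $\xi_{K,n}^B(\de)$ and its complement. On the good event, $1/T_{k,n}\le \big(\lambda_k n(1-\Delta_n)\big)^{-1}$, so using $(1-\Delta_n)^{-1}\le 1+2\Delta_n$ (valid when $\Delta_n\le 1/2$) together with $\si_k^2/\lambda_k = \Sigma$ gives $\si_k^2\,\E[\ind{\xi_{K,n}^B(\de)}/T_{k,n}]\le (\Sigma/n)(1+2\Delta_n)$. On the complement, every arm is pulled at least twice, so $1/T_{k,n}\le 1/2$, and with $\Prob[(\xi_{K,n}^B(\de))^c]\le 2nK\de$ the contribution is at most $\si_k^2 nK\de\le \overline{\Sigma}Kn^{-5/2}$, which is negligible. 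Taking $\max_k$ and subtracting $L_n(\alg^*)=\Sigma/n$ yields $R_n(\alg_B)\le 2\Sigma\Delta_n/n + \overline{\Sigma}Kn^{-5/2} = 2K\Sigma b_n/n^2 + \overline{\Sigma}Kn^{-5/2}$.

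It then remains to substitute the parameter values. With $c_1 = 2\overline{\Sigma}$, $c_2=1$, and $\de = n^{-7/2}$ one has $\log(2/\de)=\Theta(\log n)$; the $n^{1/2}$ piece of $a$ is of lower order, so $a = \tilde O(\sqrt{\overline{\Sigma}})$, while $c(\de)=\Theta(1/\sqrt K)$. The leading term of $b_n$ is then of order $a^2\log(2/\de)\,n^{1/2}/(\Sigma\,c(\de)) = \tilde O\big(\sqrt K(\log n)^2 n^{1/2}\,\overline{\Sigma}/\Sigma\big)$, and the $n^{1/4}$ and additive terms are lower order. Crucially, the $1/\Sigma$ appearing in $b_n$ cancels against the $\Sigma$ coming from $\si_k^2/\lambda_k$, so $2K\Sigma b_n/n^2 = \tilde O\big(\overline{\Sigma}K^{3/2}(\log n)^2 n^{-3/2}\big)$ depends on $\overline{\Sigma}$ but not on $1/\Sigma$ nor on $1/\lambda_{\min}$. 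Collecting the explicit constants (and bounding the $K$-power crudely) then produces the stated bound.

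The main obstacle I anticipate is twofold. The explicit constant $105\times 10^3$ forces non-asymptotic bookkeeping of $a$, $c(\de)$, and every lower-order term of $b_n$, rather than the $\tilde O$/$\Theta$ estimates above. More subtly, the step $(1-\Delta_n)^{-1}\le 1+2\Delta_n$ is only licit when $\Delta_n\le 1/2$, which can fail for $n$ small relative to $K$ and $\log n$; that regime must be handled separately by observing that $R_n(\alg_B)\le \max_k L_{k,n}\le \overline{\Sigma}/2$ always holds (since $T_{k,n}\ge 2$), and that whenever $\Delta_n>1/2$ the claimed right-hand side already exceeds $\overline{\Sigma}/2$, so the inequality is trivially satisfied. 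Merging the two regimes into the single clean statement of the theorem is where most of the care will be needed.
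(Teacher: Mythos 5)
Your overall strategy is the right one and is in fact the paper's: invoke Lemma~\ref{l:gauss-regret} to write $L_{k,n}=\sigma_k^2\,\E[1/T_{k,n}]$, split the expectation over $\xi_{K,n}^B(\de)$ and its complement, use only a \emph{lower} bound on $T_{k,n}$ on the good event so that $\sigma_k^2/\lambda_k=\Sigma$ cancels and no $1/\lambda_{\min}$ survives, and dispose of the bad event via $T_{k,n}\geq 2$ and $\Prob[\xi^c]\leq 2nK\de$. Your bad-event term and your final arithmetic coincide with those in~\ref{s:b-results-gauss}.

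The gap is in how you exploit the lower bound on the good event. You pass through Equation~\ref{eq:b-lower}, write $T_{k,n}\geq\lambda_k n(1-\Delta_n)$ with $\Delta_n=Kb_n/n$, and invert via $(1-\Delta_n)^{-1}\leq 1+2\Delta_n$. Because the bracket $b_n$ of Equation~\ref{eq:b-lower} carries a factor $1/\Sigma$, one has $\Delta_n$ of order $10^4K^2(\log n)^2\,(\overline{\Sigma}/\Sigma)\,n^{-1/2}$, so the condition $\Delta_n\leq 1/2$ fails unless $\sqrt{n}$ exceeds something of order $10^4K^2(\log n)^2\,\overline{\Sigma}/\Sigma$; for instance at $n=10^6$, $K=2$, $\Sigma=\overline{\Sigma}=1$ one finds $\Delta_n$ in the thousands, i.e.\ Equation~\ref{eq:b-lower} is vacuous there. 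Your fallback for that regime --- that the theorem's right-hand side then exceeds the trivial bound $\overline{\Sigma}/2$ --- is false: the right-hand side $105\times 10^3\,\overline{\Sigma}K^2(\log n)^2n^{-3/2}$ contains no $\overline{\Sigma}/\Sigma$ factor, and at $n=10^6$, $K=2$ it equals roughly $0.08\,\overline{\Sigma}<\overline{\Sigma}/2$ while $\Delta_n\gg 1/2$, so the theorem is non-trivial there and your argument proves nothing for a wide and relevant range of budgets. The fix is exactly what the paper does: do not invert twice. Use the pre-inversion inequality $\frac{\sigma_k^2}{T_{k,n}}(n-2K)\leq\Sigma+\frac{B}{\sqrt{n}}+\frac{C}{n^{3/4}}$ (Equation~\ref{eq:allowb.mod}, i.e.\ Equation~\ref{eq:almost-lower-bound}), which holds on $\xi$ for every $n\geq 5K$ with no extra condition and directly yields $\sigma_k^2\,\E\big[\ind{\xi}/T_{k,n}\big]\leq\frac{\Sigma}{n}+\frac{4K\Sigma}{n^2}+\frac{2B}{n^{3/2}}+\frac{2C}{n^{7/4}}$ --- the same expression your computation produces when $\Delta_n\leq 1/2$, but now valid throughout. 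With that substitution the remainder of your plan reduces to the paper's proof.
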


\begin{proof}
The proof is reported in~\ref{s:b-results-gauss}.
\end{proof}

\paragraph{Remark 1} In the case of Gaussian distributions, the regret bound for B-AS has the rate $\tilde O(n^{-3/2})$ without dependency on $\lambda_{\min}$, which represents a significant improvement over the regret bounds of the CH-AS and GAFS-MAX algorithms. 

\paragraph{Remark 2} In practice, there is no need to tune the three parameters $c_1$, $c_2$, and $\de$ separately. In fact, it is enough to tune the algorithm for a single parameter $a\sqrt{\log(2/\de)}$ (see Figure~\ref{f:b-algorithm}). Using the proof of Theorem~\ref{thm:b-regret} and the optimized value of $\de$, as well as the fact that for Gaussian distributions, $c_1 \leq 2\Sigma$, and $c_2 \leq 1$, it is possible to show that choosing $a$ as in Theorem~\ref{thm:b-regret-gauss} means that $a = O\big((\overline{\Sigma}\log n)^{1/2}\big)$, where $\overline{\Sigma}$ is an upper bound on the value of $\Sigma$. This is a reasonable thing to do whenever a rough estimate of the magnitude of the variances is available.



\section{Experimental Results}\label{s:experiments}


\subsection{CH-AS, B-AS, and GAFS-MAX with Gaussian Arms}\label{ss:CH-B-G}

In this section, we compare the performance of CH-AS, B-AS, and GAFS-MAX on a two-armed problem with Gaussian distributions $\distro_1 = \mathcal N(0, \si_1^2=4)$ and $\distro_2 = \mathcal N(0, \si_2^2=1)$ (note that $\lambda_{\min} \!\!=\!\! 1/5$). Figure~\ref{f:comparison}-{\em (left)} shows the rescaled regret, $n^{3/2} R_n$, for the three algorithms averaged over $50,000$ runs. The results indicate that while the rescaled regret is almost constant with respect to~$n$ in B-AS and GAFS-MAX, it increases for small (relative to $\lambda^{-1}_{\min}$) values of $n$ in CH-AS. 

The robust behavior of B-AS when the distributions of the arms are Gaussian may be easily explained by the bound of Theorem~\ref{thm:b-regret-gauss} (Equation~\ref{eq:b-regret}). Note though that this experiment seems to imply that there is no additional dependency in $\log(n)$: it could be just an artifact of the proof. The initial increase in the CH-AS curve is also consistent with the bound of Theorem~\ref{thm:ch-regret} (Equation~\ref{eq:ch-regret}). As discussed in Remark~3 of Section~\ref{ss:ch-discussion}, we conjecture that the regret bound for CH-AS is of the form \begin{small}$R_n \leq \min\big\lbrace \lambda_{\min}^{-5/2} \tilde O(n^{-3/2}), \tilde O(n^{-4/3})\big\rbrace$\end{small}, and thus, the algorithm's regret is bounded as $\tilde O(n^{-4/3})$ and $\lambda_{\min}^{-5/2}\tilde O(n^{-3/2})$ for small and large (relative to $\lambda^{-1}_{\min}$) values of $n$, respectively. It is important to note that the regret bound of CH-AS depends on the arms' distributions only through the variances of the distributions, as shown in Theorem~\ref{l:ch-upper-lower}. Finally, the curve for GAFS-MAX is very close to the curve for B-AS. For this reason, we believe that it could be possible to improve the GAFS-MAX analysis by using refined concentration inequalities for the standard deviation as done in B-AS. This might also remove the inverse dependency on $\lambda_{\min}$ and provide a regret bound similar to B-AS in the case of Gaussian distributions.

\begin{figure}[htp]
\begin{center}
\includegraphics[width=0.4\columnwidth]{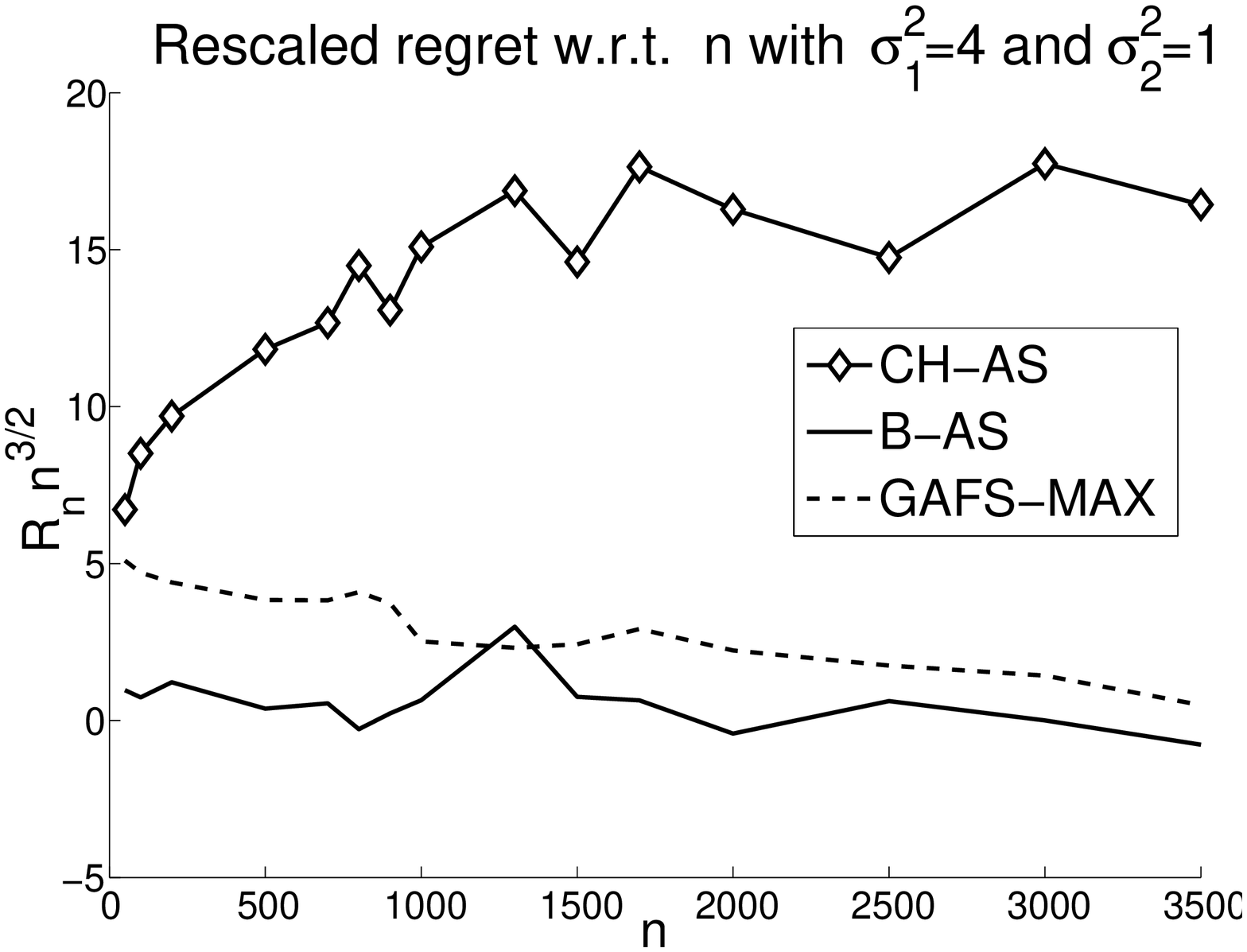}\hspace{0.75in}
\includegraphics[width=0.4\columnwidth]{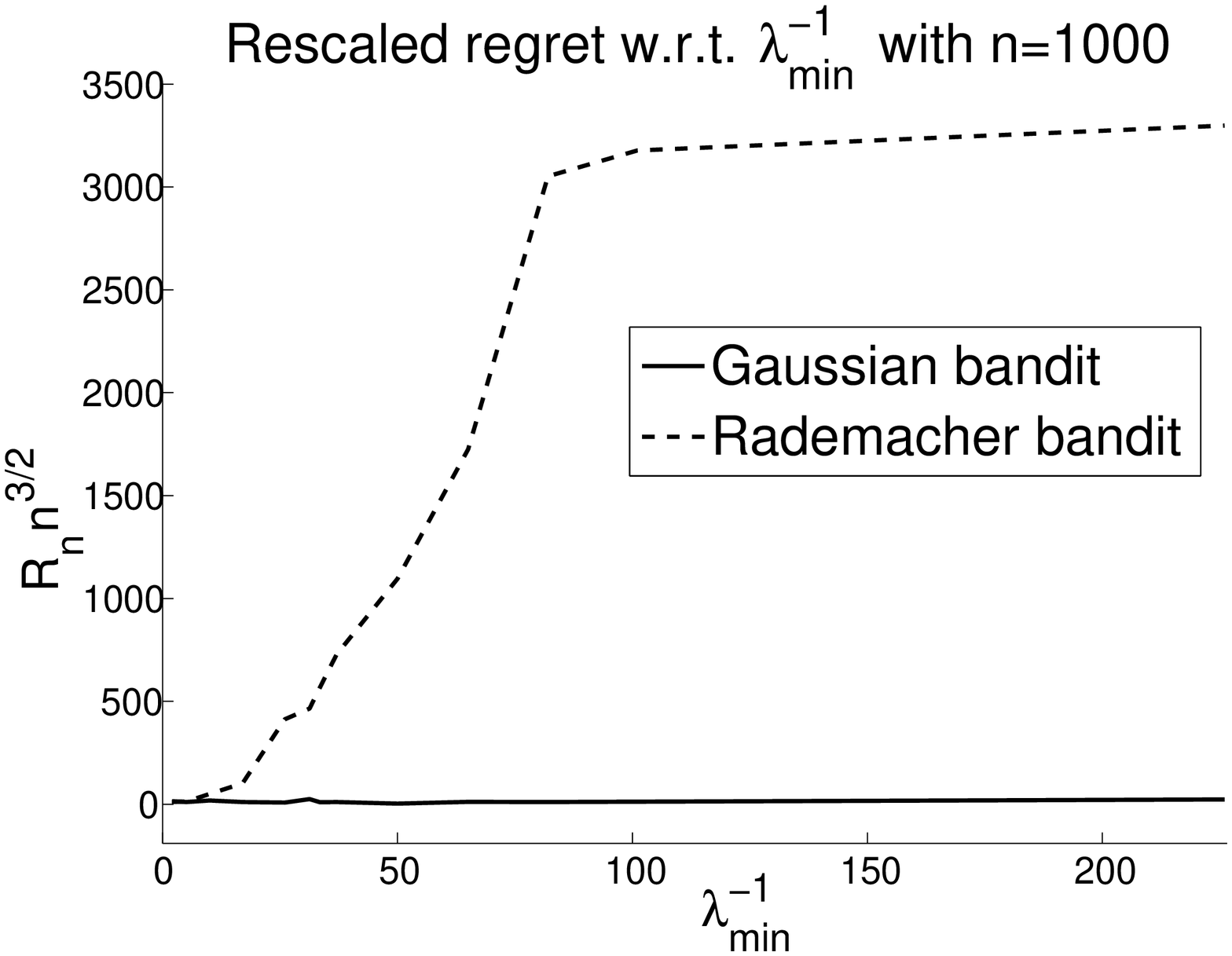}
\caption{{\em (left)} The rescaled regret of CH-AS, B-AS, and GAFS-MAX algorithms on a two-armed problem, where the distributions of the arms are Gaussian. {\em (right)} The rescaled regret of B-AS for two bandit problems, one with two Gaussian arms and one with a Gaussian and a Rademacher arms.}
\label{f:comparison}
\end{center}
\end{figure}


\subsection{B-AS with Non-Gaussian Arms}\label{ss:non-gauss}

In Section~\ref{sec:gaussian.dist}, we showed that when the arms have Gaussian distribution, the regret bound of the B-AS algorithm no longer depends on $\lambda_{\min}$. We also discussed why we conjecture that it is not possible to remove this dependency for general distributions unless a tighter upper bound on the number of pulls can be derived. Although we do not yet have a lower bound on the regret showing the dependency on $\lambda_{\min}$, i.e.~that the regret might depend on the shape of the distribution, in this section we show that for Rademacher distributions, the regret of B-AS behaves in a different way than for Gaussian distributions with same variance.

As discussed in Section~\ref{sec:gaussian.dist}, the property of the Gaussian distribution that allows us to remove the $\lambda_{\min}$ dependency in the regret bound of B-AS is that for any sample of fixed size drawn i.i.d. from a Gaussian distribution, the corresponding empirical mean and the empirical variance are independent. The quantities $(\hmu_{k,n} -\mu_k)^2$ and $\hsi_{k,n}$ are however conditionally negatively correlated given $T_{k,n}$ for e.g.,~the Rademacher distribution.\footnote{$X$ is Rademacher if $X\in\{-1,1\}$ and admits values $-1$ and $1$ with equal probability.} In the case of Rademacher distribution, the loss $(\hmu_{k,t} - \mu_k)^2$ is equal to $\hmu_{k,t}^2$ and we have $\hvar_{k,t}=\frac{1}{T_{k,t} -1}\Big(\sum_{i=1}^{T_{k,t}} X_{k,i}^2 - T_{k,t} \hmu_{k,t}^2\Big) = \frac{T_{k,t}}{T_{k,t} -1} \Big(1-\hmu_{k,t}^2\Big)$, as a result, the larger $\hvar_{k,t}$ is, the smaller $\hmu_{k,t}^2$ is. We know that the allocation strategies in CH-AS, B-AS, and GAFS-MAX are based on the empirical variance which is used as a substitute for the true variance. As a result, the larger $\hvar_{k,t}$ is, the more often arm $k$ is pulled. For the Rademacher distribution, this means that an arm is pulled more than its optimal allocation when its mean is accurately estimated (the loss is small). This may result in a poor estimation of the arm, and thus, negatively affect the regret of the algorithm. 

In the experiments of this section, we use B-AS in two different bandit problems: one with two Gaussian arms $\distro_1 = \mathcal N(0,\sigma_1^2)$ (with $\sigma_1\geq 1$) and $\distro_2 = \mathcal N(0,1)$, and one with a Gaussian $\distro_1 = \mathcal N(0,\sigma_1^2)$ (with $\sigma_1\geq 1$) and a Rademacher $\distro_2$ arms. Note that in both cases $\lambda_{\min} = \lambda_2 = 1/(1+\sigma_1^2)$. Figure~\ref{f:comparison}-{\em (right)} shows the rescaled regret ($n^{3/2}R_n$) of the B-AS algorithm as a function of $\lambda_{\min}^{-1}$ for $n=1000$. While the rescaled regret of B-AS is constant in the first problem, it increases with $\sigma_1^2$ in the second one. This leads us to the conclusion that the shape of the distributions of the arms has an impact on the regret of the algorithm B-AS. In fact, as explained above, this behavior might be due to the poor approximation of the Rademacher arm which is over-pulled exactly whenever its estimated mean is accurate. This result seems to illustrates the fact that in this active learning problem (where the goal is to estimate the mean values of the arms), the performance of the algorithms that rely on the empirical-variance (e.g.,~CH-AS, B-AS, and GAFS-MAX) depends on the shape of the distributions, and not only on their variances. This may be surprising since according to the central limit theorem the distribution of the empirical mean should tend to a Gaussian. However, it seems that what is important is not the distribution of the empirical mean or variance, but the correlation of these two quantities. This is why we believe that any algorithm that is based on empirical standard deviations might be subject to the same problem. However, at the moment no full satisfactory theoretical analysis is available on this point.

\section{Conclusions and Open Questions}\label{s:conclusions}

In this paper, we studied the problem of adaptive allocation for finding a uniformly good estimation of the mean values of $K$ independent distributions. This problem was first studied by~\citet{antos2010active}. Although the algorithm proposed in~\cite{antos2010active} achieves a small regret of order $\tilde O(n^{-3/2})$, it displays an inverse dependency on the smallest proportion $\lambda_{\min}$. In this paper, we first introduced a novel class of algorithms based on upper-confidence-bounds on the (unknown) variances of the arms, and analyzed two such algorithms: Chernoff-Hoeffding allocation strategy (CH-AS) and Bernstein allocation strategy (B-AS). For CH-AS we derived a regret similar to~\cite{antos2010active}, scaling as $\tilde O(n^{-3/2})$ and with the dependence on $\lambda_{\min}$. Unlike in~\cite{antos2010active}, this result holds for any $n\geq 5K$ and the constants in the bound are made explicit. We then introduced a more refined algorithm, B-AS, whose regret bound does not depend on $\lambda_{\min}$ for Gaussian arms. Nonetheless, its general regret bound still depends on $\lambda_{\min}$. We show that this dependency may be related to the specific distributions of the arms and can be removed for the case of Gaussian distributions. Finally, we report numerical simulations supporting the idea that the shape of the distributions has an impact on the performance of the allocation strategies. \\

\noindent
This work opens a number of questions.
\begin{itemize}
\item \textit{Distribution dependency.} Another open question is to which extent the result of B-AS in the case of the Gaussian distribution can be extended to more general families of distributions. As illustrated in the case of Rademacher, the correlation between the empirical mean and variance may cause the algorithm to over-pull arms even when their estimation is accurate, thus incurring a large regret. On the other hand, if the distributions of the arms are Gaussian, their empirical mean and variance are uncorrelated and the allocation algorithms such as B-AS achieve a better regret. Further investigation is needed to identify whether this result can be extended to other distributions.
\item \textit{Lower bound.} The results of Sections~\ref{sec:gaussian.dist} and~\ref{ss:non-gauss} suggest that the dependency on the distributions of the arms could be intrinsic to the allocation problem. If this is the case, it should be possible to derive a lower bound for this problem showing such dependency (a lower-bound with dependency on $\lambda_{\min}^{-1}$). As a matter of fact, no lower bounds are available for this problem and it would be interesting to provide some.
\end{itemize}

\subsection*{Acknowledgment} This work was supported by French National Research Agency (ANR) through the projects EXPLO-RA $n^\circ$ ANR-08-COSI-004 and LAMPADA $n^\circ$ ANR-09-EMER-007, by Ministry of Higher Education and Research, Nord-Pas de Calais Regional Council and FEDER through the ``contrat de projets {\'e}tat region (CPER) 2007--2013", European Community's Seventh Framework Programme (FP7/2007-2013) under grant agreement $n^\circ$ 231495, and by PASCAL2 European Network of Excellence.


\bibliography{allocation}

\begin{thebibliography}{13}
\providecommand{\natexlab}[1]{#1}
\providecommand{\url}[1]{\texttt{#1}}
\expandafter\ifx\csname urlstyle\endcsname\relax
  \providecommand{\doi}[1]{doi: #1}\else
  \providecommand{\doi}{doi: \begingroup \urlstyle{rm}\Url}\fi

\bibitem[Antos et~al.(2010)Antos, Grover, and Szepesv\'{a}ri]{antos2010active}
Andr\'{a}s Antos, Varun Grover, and Csaba Szepesv\'{a}ri.
\newblock Active learning in heteroscedastic noise.
\newblock \emph{Theoretical Computer Science}, 411:\penalty0 2712--2728, June
  2010.

\bibitem[Audibert et~al.(2009)Audibert, Munos, and Szepesvari]{AudibertTCS09}
J-Y. Audibert, R.~Munos, and Cs. Szepesvari.
\newblock Exploration-exploitation trade-off using variance estimates in
  multi-armed bandits.
\newblock \emph{Theoretical Computer Science}, 410:\penalty0 1876--1902, 2009.

\bibitem[Audibert et~al.(2010)Audibert, Bubeck, and Munos]{audibert2010best}
J.-Y. Audibert, S.~Bubeck, and R.~Munos.
\newblock Best arm identification in multi-armed bandits.
\newblock In \emph{Proceedings of the Twenty-Third Annual Conference on
  Learning Theory (COLT'10)}, pages 41--53, 2010.

\bibitem[Br{\'e}maud(1988)]{bremaud1988}
P.~Br{\'e}maud.
\newblock \emph{An Introduction to Probabilistic Modeling}.
\newblock Springer, 1988.

\bibitem[Bubeck et~al.(2011)Bubeck, Munos, and Stoltz]{bubeck2011pure}
S\'{e}bastien Bubeck, R\'{e}mi Munos, and Gilles Stoltz.
\newblock Pure exploration in finitely-armed and continuous-armed bandits.
\newblock \emph{Theoretical Computer Science}, 412:\penalty0 1832--1852, April
  2011.
\newblock ISSN 0304-3975.

\bibitem[Castro et~al.(2005)Castro, Willett, and Nowak]{castro2005faster}
R.~Castro, R.~Willett, and R.~Nowak.
\newblock Faster rates in regression via active learning.
\newblock In \emph{Proceedings of Neural Information Processing Systems
  (NIPS)}, pages 179--186, 2005.

\bibitem[Chaudhuri and Mykland(1995)]{chaudhuri1995on-efficient}
P.~Chaudhuri and P.A. Mykland.
\newblock On efficient designing of nonlinear experiments.
\newblock \emph{Statistica Sinica}, 5:\penalty0 421--440, 1995.

\bibitem[Cohn et~al.(1996)Cohn, Ghahramani, and Jordan]{cohn1996active}
David~A. Cohn, Zoubin Ghahramani, and Michael~I. Jordan.
\newblock Active learning with statistical models.
\newblock \emph{J. Artif. Int. Res.}, 4:\penalty0 129--145, March 1996.
\newblock ISSN 1076-9757.

\bibitem[Eaton(1983)]{bookgauss}
Morris~L Eaton.
\newblock \emph{Multivariate statistics: a vector space approach}.
\newblock Wiley New York, 1983.

\bibitem[{\'E}tor{\'e} and Jourdain(2010)]{etore2010adaptive}
Pierre {\'E}tor{\'e} and Benjamin Jourdain.
\newblock Adaptive optimal allocation in stratified sampling methods.
\newblock \emph{Methodology and Computing in Applied Probability}, 12:\penalty0
  335--360, 2010.

\bibitem[Fedorov(1972)]{fedorov1972theory}
V.~Fedorov.
\newblock \emph{Theory of Optimal Experiments}.
\newblock Academic Press, 1972.

\bibitem[Hoeffding(1963)]{Hoeffding63PI}
Wassily Hoeffding.
\newblock Probability inequalities for sums of bounded random variables.
\newblock \emph{Journal of the American Statistical Association}, 58\penalty0
  (301):\penalty0 13--30, March 1963.
\newblock URL \url{http://www.jstor.org/stable/2282952?}

\bibitem[Maurer and Pontil(2009)]{maurer2009empirical}
A.~Maurer and M.~Pontil.
\newblock Empirical bernstein bounds and sample-variance penalization.
\newblock In \emph{Proceedings of the Twenty-Second Annual Conference on
  Learning Theory}, pages 115--124, 2009.

\end{thebibliography}


\newpage
\appendix

\section{Regret Bound for the CH-AS Algorithm}\label{ss:ch-proof}


Let us consider $n>0$ and $\de>0$ (that can be a function of $n$) fixed. We consider all the quantities considered in the definition of algorithm CH-AS defined with respect to these fixed $n$, $\de$, and use the abbreviated notations $\hmu_{k,t}$, $\hsi_{k,t}$, $B_{k,t}$, $k_t$, and $T_{k,t}$. 

\subsection{Basic Tools}

Since the basic tools used in the proof of Theorem~\ref{thm:ch-regret} are similar to those used in the work by~\citet{antos2010active}, we begin this section by restating two results from that paper. Let $\xi$ be the event
\begin{equation}\label{eq:ch-var1}
\xi = \xi_{K,n}^{CH}(\de) = \mathop{\bigcap_{1\leq k\leq K}}_{1\leq t\leq n}\left\lbrace \Big|\Big(\frac{1}{t}\sum_{i=1}^tX_{k,i}^2-\big(\frac{1}{t}\sum_{i=1}^tX_{k,i}\big)^2\Big) - \si_k^2\Big| \leq 3 \sqrt{\frac{\log\invdelta}{2t}} \right\rbrace.
\end{equation}
%
Note that the first term in the absolute value in Equation~\eqref{eq:ch-var1} is the sample variance of arm $k$ computed as in Equation~\eqref{eq:estim-var1} for $t$ samples. It can be shown using Hoeffding's inequality (see~\citet{Hoeffding63PI}) that $\Pr[\xi]\geq1-4nK\de$, and this is shown by directly reusing the elements of the proof of Lemma~2 in~\citet{antos2010active}. The event $\xi$ plays an important role in the proofs of this section and several statements will be proved on this event. We now report the following proposition which is analog to Lemma~2 in~\citet{antos2010active}.

\begin{proposition}\label{p:ch-var}
For any $k=1,\ldots,K$ and $t=1,\ldots,n$, let $\{X_{k,i}\}_{i=1,\ldots,T_{k,t}}$ be $T_{k,t} \in \{1,\ldots,t\}$ i.i.d.~random variables bounded in $[0,1]$ from the distribution $\distro_k$ with variance $\var_k$, and $\hvar_{k,t}$ be the sample variance computed as in Equation~\eqref{eq:estim-var1}. Then the following statement holds on the event $\xi$:
\begin{equation}\label{eq:ch-var2}
|\hsi_{k,t}^2 - \si_k^2| \leq 3 \sqrt{\frac{\log\invdelta}{2T_{k,t}}}\;.
\end{equation}
\end{proposition}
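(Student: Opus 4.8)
The plan is to derive Proposition~\ref{p:ch-var} directly from the definition of the event $\xi$ in Equation~\eqref{eq:ch-var1} by a simple substitution argument. The only genuinely probabilistic ingredient has already been isolated in the bound $\Pr[\xi]\geq 1-4nK\de$; what remains is purely deterministic bookkeeping, and the one conceptual point is that the randomness of $T_{k,t}$ must be handled with care. Indeed, $T_{k,t}$ is not a fixed sample size but a random number of pulls that depends on the observed samples through the mechanism of the algorithm, so a concentration inequality cannot be invoked directly at the index $T_{k,t}$. This is precisely the difficulty that motivates defining $\xi$ as an intersection over \emph{all} deterministic sample sizes $t\in\{1,\ldots,n\}$.

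First I would record that, on $\xi$, for every arm $k$ and every fixed $s\in\{1,\ldots,n\}$ one has
\[
\Big|\Big(\tfrac{1}{s}\sum_{i=1}^s X_{k,i}^2 - \big(\tfrac{1}{s}\sum_{i=1}^s X_{k,i}\big)^2\Big) - \si_k^2\Big| \leq 3\sqrt{\frac{\log\invdelta}{2s}},
\]
and that this holds simultaneously for all such $s$ by the very definition of $\xi$. Next I would identify the empirical variance $\hsi_{k,t}^2$ of Equation~\eqref{eq:estim-var1} with the bracketed expression above evaluated at the realized number of pulls: since $X_{k,i}$ denotes the $i$-th sample collected from arm $k$ and $\hsi_{k,t}^2$ is built from the first $T_{k,t}$ of these samples, we have exactly
\[
\hsi_{k,t}^2 = \frac{1}{T_{k,t}}\sum_{i=1}^{T_{k,t}} X_{k,i}^2 - \Big(\frac{1}{T_{k,t}}\sum_{i=1}^{T_{k,t}} X_{k,i}\Big)^2,
\]
which is the same functional form as in $\xi$ with the deterministic index $s$ replaced by $T_{k,t}$.

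Finally, since $T_{k,t}\in\{1,\ldots,t\}\subseteq\{1,\ldots,n\}$ for every realization, on $\xi$ I may instantiate the uniform bound of the first step at the particular realized value $s=T_{k,t}$, obtaining $|\hsi_{k,t}^2-\si_k^2|\leq 3\sqrt{\log\invdelta/(2T_{k,t})}$, which is the claim. The main obstacle — the dependence of $T_{k,t}$ on the data — is thereby dissolved by the construction of $\xi$: by controlling every deterministic sample size at once, the substitution of the random value $T_{k,t}$ becomes valid pointwise on $\xi$, with no independence or optional-stopping argument needed.
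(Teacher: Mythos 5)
Your proof is correct and matches the argument the paper intends: Proposition~\ref{p:ch-var} is an immediate consequence of defining $\xi$ as an intersection over all deterministic sample sizes $1\leq t\leq n$, so that the bound can be instantiated pointwise at the realized random value $T_{k,t}\in\{1,\ldots,t\}$. You also correctly identify that no optional-stopping or independence argument is needed here, which is exactly why the paper states this proposition essentially without further proof.
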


%
%
%
%
%
%
%
%

We also need to draw a connection between the allocation and stopping time problems. Thus, we report the following proposition which is Lemma~10 in~\citet{antos2010active}. 

\begin{proposition}\label{p:wald-inequality}
Let $\{\mathcal F_t\}_{t=1,\ldots,n}$ be a filtration and $\{X_t\}_{t=1,\ldots,n}$ be an $\mathcal F_t$ adapted sequence of i.i.d.~random variables with finite expectation $\mu$ and variance $\si^2$. Assume that $\mathcal F_t$ and $\sigma(\{X_s:s \geq t+1\})$ are independent for any $t \leq n$, and let $T(\leq n)$ be a stopping time with respect to $\mathcal F_t$. Then
%
\begin{equation}\label{eq:wald-inequality}
\E\Bigg[\Big(\sum_{i=1}^T X_i - T\;\mu\Big)^2\Bigg] = \E[T]\;\var.
\end{equation}
\end{proposition}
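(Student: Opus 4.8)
The statement is the second Wald identity (the variance version), and I would prove it by a direct second-moment expansion that exploits the stated independence of $\mathcal F_t$ from the future. First I would center the variables, writing $Y_i = X_i - \mu$, so that $\E[Y_i]=0$ and $\E[Y_i^2]=\var$, and set $M_t = \sum_{i=1}^t Y_i$; the goal becomes $\E[M_T^2] = \E[T]\var$. The two facts I would extract from the hypotheses are that, since $X_{t+1}$ is $\sigma(\{X_s:s\geq t+1\})$-measurable and this $\sigma$-field is independent of $\mathcal F_t$, we have $\E[Y_{t+1}\mid\mathcal F_t]=0$ and $\E[Y_{t+1}^2\mid\mathcal F_t]=\var$. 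Crucially, because $T\leq n$ is a bounded stopping time, every sum below is finite and every term is integrable (finite variance), so no convergence or uniform-integrability argument is needed.

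The core of the plan is to replace the random summation limit by indicators and then use stopping-time measurability. I would write $M_T = \sum_{i=1}^n Y_i\,\1\{T\geq i\}$ and expand $\E[M_T^2]$ into a diagonal and an off-diagonal part. For the diagonal part, $\E\big[\sum_{i=1}^n Y_i^2\,\1\{T\geq i\}\big]$, I note that $\{T\geq i\}=\{T\leq i-1\}^{c}\in\mathcal F_{i-1}$, while $Y_i^2$ is independent of $\mathcal F_{i-1}$; hence $\E[Y_i^2\,\1\{T\geq i\}]=\var\,\Prob[T\geq i]$, and summing over $i$ gives $\var\sum_{i=1}^n\Prob[T\geq i]=\var\,\E[T]$. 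For the off-diagonal part, $2\sum_{1\leq i<j\leq n}\E[Y_iY_j\,\1\{T\geq j\}]$, I would condition on $\mathcal F_{j-1}$: the factor $Y_i\,\1\{T\geq j\}$ is $\mathcal F_{j-1}$-measurable (since $i\leq j-1$ and $\{T\geq j\}\in\mathcal F_{j-1}$) whereas $Y_j$ is independent of $\mathcal F_{j-1}$ with mean zero, so each such term vanishes. Combining the two parts yields the claimed identity.

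The only delicate point — the place where I expect to spend care rather than cleverness — is the measurability bookkeeping: checking that $\{T\geq i\}\in\mathcal F_{i-1}$, that $Y_i$ is $\mathcal F_i$-adapted, and that the future variable $Y_j$ is genuinely independent of the entire past $\mathcal F_{j-1}$ (not merely of $\mathcal F_{j-1}$ restricted to the $X$'s), which is exactly what the hypothesis that $\mathcal F_t$ and $\sigma(\{X_s:s\geq t+1\})$ are independent supplies. A cleaner alternative I would keep in reserve is the martingale route: $\E[Y_{t+1}\mid\mathcal F_t]=0$ and $\E[Y_{t+1}^2\mid\mathcal F_t]=\var$ imply that $N_t=M_t^2-t\var$ is an $\mathcal F_t$-martingale with $N_0=0$, and since $T\leq n$ is bounded the optional stopping theorem gives $\E[N_T]=0$, i.e. $\E[M_T^2]=\E[T]\var$ immediately.
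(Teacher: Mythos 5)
Your proof is correct. Note that the paper does not actually prove this proposition: it is imported verbatim as Lemma~10 of \citet{antos2010active}, so there is no in-paper argument to compare against. Your indicator decomposition $\sum_{i=1}^T Y_i = \sum_{i=1}^n Y_i\,\ind{T\geq i}$ is the standard route to Wald's second identity, and the two key steps are handled properly: the diagonal terms use $\{T\geq i\}\in\F_{i-1}$ together with the hypothesis that $\F_{i-1}$ is independent of $\sigma(\{X_s:s\geq i\})$ (for $i=1$ the indicator is identically $1$, so no $\F_0$ is needed), and the cross terms vanish because $Y_i\,\ind{T\geq j}$ is $\F_{j-1}$-measurable while $Y_j$ is centered and independent of $\F_{j-1}$; integrability of all products is guaranteed by $Y_i\in L^2$ and the boundedness $T\leq n$, exactly as you say. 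The martingale variant you keep in reserve ($M_t^2-t\var$ plus optional stopping at the bounded time $T$) is equally valid and arguably shorter; the indicator computation has the mild advantage of making explicit where each independence hypothesis of the proposition is consumed, which matters here because the paper applies the result with the enlarged filtration $\G^{(k)}_t=\sigma(\F^{(k)}_t,\mathcal{E}_{-k})$, for which independence from the future samples of arm $k$ is precisely the point being checked.
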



\subsection{Allocation Performance}\label{a:l:ch-upper-lower}

In this subsection, we first provide the proof of Lemma~\ref{l:ch-upper-lower} and then use the result in the next subsection to prove Theorem~\ref{thm:ch-regret}. \\

%
%




\begin{proof}[Proof of Lemma~\ref{l:ch-upper-lower}]
The proof consists of the following three main steps. We assume that $\xi$ holds until the end of this proof. \\

\noindent
\textbf{Step~1.~Mechanism of the algorithm.} Recall the definition of the upper bound used in $\alg_{CH}$ at a time $t+1 > 2K$:

\begin{equation*}
B_{q,t+1} = \frac{1}{T_{q,t}} \Bigg(\hsi_{q,t}^2 + 3\sqrt{\frac{\log\invdelta}{2T_{q,t}}}\Bigg),\quad\quad 1\leq q\leq K\;.
\end{equation*}

\noindent From Proposition~\ref{p:ch-var}, we obtain the following upper and lower bounds for $B_{q,t+1}$ on the event $\xi$:

\begin{equation}\label{e:bound.B2}
 \frac{\var_q}{T_{q,t}} \leq B_{q,t+1} \leq \frac{1}{T_{q,t}} \Bigg(\var_{q} + 6\sqrt{\frac{\log\invdelta}{2T_{q,t}}}\Bigg).
\end{equation}

\noindent Note that as $n\ge 4K$, there is at least one arm $k$ that is pulled after the initialization. Let $k$ be a given such arm and $t+1>2K$ be the time when it is pulled for the last time, i.e., $T_{k,t}=T_{k,n}-1$ and $T_{k,t+1}=T_{k,n}$. Since $\alg_{CH}$ chooses to pull arm $k$ at time $t+1$, for any arm $p$, we have

\begin{equation}\label{e:meca1}
B_{p,t+1} \leq B_{k,t+1}\;.
\end{equation}

\noindent From Equation~\eqref{e:bound.B2} and the fact that $T_{k,t} = T_{k,n} -1$, we obtain

\begin{equation}\label{e:meca.lb1}
B_{k,t+1} \leq \frac{1}{T_{k,t}} \Bigg(\var_{k} + 6\sqrt{\frac{\log\invdelta}{2T_{k,t}}}\Bigg) = \frac{1}{T_{k,n}-1} \Bigg(\var_{k} + 6\sqrt{\frac{\log\invdelta}{2(T_{k,n}-1)}}\Bigg).
\end{equation}

\noindent Using the lower bound in Equation~\eqref{e:bound.B2} and the fact that $T_{p,t} \leq T_{p,n}$, we may lower bound $B_{p,t+1}$ as

\begin{equation}\label{e:meca.hb1}
B_{p,t+1} \geq \frac{\var_{p}}{T_{p,t}} \geq \frac{\var_{p}}{T_{p,n}}\;.
\end{equation}

\noindent Combining Equations~\ref{e:meca1},~\ref{e:meca.lb1}, and~\ref{e:meca.hb1}, we obtain

\begin{equation}\label{e:bound.fp}
\frac{\var_{p}}{T_{p,n}} \leq \frac{1}{T_{k,n}-1} \Bigg(\var_{k} + 6\sqrt{\frac{\log\invdelta}{2(T_{k,n}-1)}}\Bigg)\;.
\end{equation}

\noindent Note that at this point there is no dependency on $t$, and thus, Equation~\eqref{e:bound.fp} holds on the event $\xi$ for any arm $k$ that is pulled at least once after the initialization, and for any arm $p$. \\
%

\noindent
{\bf Step~2.~Lower bound on $T_{p,n}$.} If an arm $q$ is under-pulled \textit{without taking into account the initialization phase}, i.e.,~$T_{q,n}-2 < \lambda_q(n-2K)$, then from the constraint $\sum_k (T_{k,n}-2) = n-2K$, we deduce that there must be at least one arm $k$ that is over-pulled, i.e.,~$T_{k,n}-2 > \lambda_k(n-2K)$. Note that for this arm, $T_{k,n}-2 > \lambda_k (n-2K)\geq0$, so we know that this specific arm is pulled at least once \textit{after} the initialization phase and that it satisfies Equation~\eqref{e:bound.fp}. Using the definition of the optimal (up to rounding effects) allocation $T_{k,n}^* = n\lambda_k = n\sigma^2_k /\Sigma$ and the fact that $T_{k,n} \geq \lambda_k(n-2K) + 2$, Equation~\eqref{e:bound.fp} may be written as

\begin{align}\label{eq:almost.lb}
\frac{\var_{p}}{T_{p,n}} &\leq \frac{1}{T_{k,n}^*}\frac{n}{n-2K} \Bigg(\var_{k} + 6\sqrt{\frac{\log\invdelta}{2(\lambda_k(n-2K)+2-1)}}\Bigg) \nonumber \\
&\leq \frac{\Sigma}{n-2K} + \frac{12\sqrt{\log\invdelta}}{(\lambda_{\min}n)^{3/2}} \nonumber \\
&\leq \frac{\Sigma}{n} + \frac{12\sqrt{\log\invdelta}}{(\lambda_{\min}n)^{3/2}}+\frac{4K\Sigma}{n^2},
\end{align}

\noindent
since $\lambda_{k}(n-2K)+1 \geq \lambda_{k}(n/2 - 2K + 2K)+1 \geq \frac{n\lambda_{k}}{2}$, as $n\geq 5K$ (thus also $\frac{2K\Sigma}{n(n-2K)} \leq \frac{4K\Sigma}{n^2}$). Also, if no arm is under-pulled after time $2K$, then for each $p$, $T_{p,n} \geq 2 +\lambda_p(n-2K) > \lambda_p(n-2K)$, i.e., $\sigma_p^2/T_{p,n} \leq \sigma_p^2/(\lambda_p(n-2K)) = \Sigma/(n-2K)$, i.e., Equation~\eqref{eq:almost.lb} holds anyway (whether there are under-pulled arms or not).
%
%
%
 By reordering the terms in the previous equation, we obtain the lower bound

\begin{align}\label{eq:ch-lower-bound-p}
T_{p,n} \geq\frac{\sigma_p^2}{\frac{\Sigma}{n} + \frac{12\sqrt{\log\invdelta}}{(n \lambda_{\min})^{3/2}} +\frac{4K\Sigma}{n^2}} \geq T_{p,n}^* - \lambda_p \frac{12}{\Sigma\lambda_{\min}^{3/2}} \sqrt{n\log\invdelta} - 4\lambda_p K,
\end{align}

\noindent
where in the second inequality we used $1/(1+x) \geq 1-x$ (for $x> -1$). Note that the lower bound~\ref{eq:ch-lower-bound-p} holds on $\xi$ for any arm $p$. \\

\noindent
{\bf Step~3.~Upper bound on $T_{p,n}$.} Using Equation~\eqref{eq:ch-lower-bound-p} and the fact that $\sum_k T_{k,n} = \sum_k T_{k,n}^* = n$, we obtain the upper bound

\begin{align}\label{eq:ch-upper-bound-p}
T_{p,n} = n - \sum_{k\neq p}T_{k,n} \leq T_{p,n}^* + \frac{12}{\Sigma\lambda_{\min}^{3/2}} \sqrt{n\log\invdelta} + 4K\;.
\end{align}

\noindent The claim follows by combining the lower and upper bounds in Equations~\ref{eq:ch-lower-bound-p} and~\ref{eq:ch-upper-bound-p}.
\end{proof}


\subsection{Regret Bound}\label{a:thm:ch-regret}

We now show how the bound on the allocation over arms translates into a bound on the regret of the algorithm as stated in Theorem~\ref{thm:ch-regret}.\\
%

\begin{proof}[Proof of Theorem \ref{thm:ch-regret}]
The proof consists of the following two main steps. \\

\noindent
{\bf Step~1. For each $1\le n'\le n$, $T_{k,n'}$ is a stopping time.} 
For a given $k$, let $(\F^{(k)}_t)_{t\le n}$ be the filtration associated to the process $\{X_{k,t}\}_{t\le n}$, and $\mathcal{E}_{-k}=\mathcal{E}_{-k,n}$ be the $\sigma$-algebra generated by $\{X_{k',t'}\}_{t'\le n,k'\ne k}$ (``environment''). Let $\G^{(k)}_t = \G^{(k,n)}_t = \sigma(\F^{(k)}_t,\mathcal{E}_{-k})$.

We prove for fixed budget $n$ by induction for $n'=1,\ldots,n$ that each $T_{k,n'}$ is a stopping time with respect to the filtration $(\G^{(k)}_t)_{t\le n}$.

For $n'\le 2K$ (initialization), $T_{k,n'}$ is deterministic, so for any $t$, $\{T_{k,n'}\le t\}$ is either the empty set or the whole probability space (and is thus measurable according to $\G^{(k)}_t$).

Let us now assume that for a given time step $2K\le n'<n$, and for any $t$, $\{T_{k,n'}\le t\}$ is $\G^{(k)}_t$-measurable.
We consider now time step $n'+1$. Note first that for $t=0$, $\{T_{k,n'+1} \le t\}=\{T_{k,n'+1} \le 0\}$ is the empty set and is thus $\G^{(k)}_t$-measurable.
If $t>0$, then
\begin{equation}\label{eq:recursion}
 \{T_{k,n'+1}\le t\}
 = \left(\{T_{k,n'} = t\}\cap \{k_{n'+1} \ne k\}\right)
 \cup \{T_{k,n'}\le t-1\}.
\end{equation}
By induction assumption, $\{T_{k,n'} = t\}$ and $\{T_{k,n'} \le t-1\}$ are $\G^{(k)}_t$-measurable (since \emph{for any} $t'$, $\{T_{k,n'}\le t'\}$ is $\G^{(k)}_{t'}$-measurable). On $\{T_{k,n'} = t\}$, $k_{n'+1}$ is also $\G^{(k)}_t$-measurable since it is determined only by the values of the upper-bounds $\{B_{q,n'+1}\}_{1\le q\le K}$ (which depend only on $\{X_{k',t'}\}_{t'\le n,k'\ne k}$ and on $(X_{k,1},\ldots,X_{k,t})$). Hence, $\{T_{k,n'} = t\}\cap\{k_{n'+1} \ne k\}$ is $\G^{(k)}_t$-measurable, and thus using (\ref{eq:recursion}), we have that $\{T_{k,n'+1}\le t\}$ is $\G^{(k)}_t$-measurable, as well.

We have thus proved by induction that $T_{k,n'}$ is a stopping time with respect to the filtration $(\G^{(k)}_t)_{t\le n}$.

\noindent
{\bf Step~2.~Regret bound.} Using its definition, we may write $L_{k,n}$ as follow:

\begin{equation*}
L_{k,n} = \E\Big[(\hmu_{k,n} - \mu_k)^2 \Big] = \E\Big[(\hmu_{k,n} - \mu_k)^2 \1\{{\xi\}}\Big] + \E\Big[(\hmu_{k,n} - \mu_k)^2 \1\{{\xi^{C}\}}\Big].
\end{equation*}

\noindent Using the definition of $\hmu_{k,n}$ and Proposition~\ref{p:wald-inequality} for filtration $\{\G^{(k)}_t\}_{t\leq n}$, $\{X_{k,t}\}_{t\leq n}$,
and $T_{k,n}$ (and that $\G^{(k)}_t = \sigma(\{X_{k,t'}:t'\le t\}\cup\{X_{k',t'}:t'\leq n,k'\neq k\})$ and $\sigma(\{X_{k,t'}:t'\geq t+1\})$ are independent for any $t\leq n$) we bound the first term as
\begin{align}
\E\Big[(\hmu_{k,n} - \mu_k)^2 \1\{{\xi\}}\Big] &\leq  \sup_{\omega \in \xi}\Big(\frac{\si_k^2}{T_{k,n}^2(\omega)} \Big)\E\Big[\frac{(\sum_{t=1}^{T_{k,n}} X_{k,t} - T_{k,n}\mu_k)^2}{\si_k^2} \1\{{\xi\}}\Big] \nonumber \\
&\leq  \sup_{\xi}\Big(\frac{\si_k^2}{T_{k,n}^2} \Big) \E\Big[\frac{1}{\si_k^2}(\sum_{t=1}^{T_{k,n}} X_{k,t} - T_{k,n}\mu_k)^2 \Big] \nonumber\\
&=  \sup_{\xi}\Big(\frac{\si_k^2}{T_{k,n}^2} \Big) \frac{1}{\si_k^2} \si_k^2 \E[T_{k,n}] \nonumber\\
 &= \sup_{\xi}\Big(\frac{\si_k^2}{T_{k,n}^2} \Big) \E[T_{k,n}]\;, \label{eq:st2.1}
\end{align}

Since the upper-bound in Lemma~\ref{l:ch-upper-lower} is obtained on the event $\xi$ (and thus with high probability), and as $T_{k,n} \leq n$, we may easily convert it to a bound in expectation as follows:

\begin{equation}\label{e:bound.lbtpnu.esp}
\E[T_{k,n}] \leq \Big(T_{k,n}^* + \frac{12}{\Sigma\lambda_{\min}^{3/2}} \sqrt{n\log\invdelta} + 4K\Big) + n \times 4nK\de.
\end{equation}

\noindent Combining Equation~\eqref{eq:st2.1} and~\ref{e:bound.lbtpnu.esp}, and using Equation~\eqref{eq:almost.lb} for $\sup_{\xi} \Big(\var_{k}/T_{k,n}\Big)$, we obtain 

\begin{align}\label{e:bound.reg.stime1}
&\E\Big[(\hmu_{k,n} - \mu_k)^2 \1\{{\xi\}}\Big] \nonumber\\ 
&\leq \Bigg(\frac{\Sigma}{n} + \frac{12\sqrt{\log\invdelta}}{(\lambda_{\min}n)^{3/2}}+\frac{4K\Sigma}{n^2}\Bigg)^2\frac{\Big(T_{k,n}^* + \frac{12}{\Sigma\lambda_{\min}^{3/2}} \sqrt{n\log\invdelta} + 4K + n \times 4nK\de\Big)}{\var_k}.
\end{align}


\noindent By setting $A=\frac{12\sqrt{\log\invdelta}}{\lambda_{\min}^{3/2}}$ to simplify the notation, Equation~\eqref{e:bound.reg.stime1} may be simplified as 

\begin{align*}
&\E\Big[(\hmu_{k,n} - \mu_k)^2 \1\{{\xi\}}\Big]\\ 
&\leq \Bigg(\frac{\Sigma}{n} + \frac{A}{n^{3/2}}+\frac{4K\Sigma}{n^2}\Bigg)^2 \Bigg(\frac{n}{\Sigma} + \frac{A}{\Sigma\si_k^2}\sqrt{n} + \frac{4K + 4n^2K\de}{\si_k^2}\Bigg)\\
&= \Bigg(\frac{\Sigma^2}{n^2} + \frac{A^2}{n^{3}}+\frac{16K^2\Sigma^2}{n^4}+\frac{2A\Sigma}{n^{5/2}}+\frac{8K\Sigma^2}{n^3}+\frac{8AK\Sigma}{n^{7/2}}\Bigg) \Big(\cdots\Big)\\
&= \Bigg(\frac{\Sigma^2}{n^2} + \frac{2A\Sigma}{n^{5/2}}+ \frac{1}{n^3}\Big(A^2+\frac{16K^2\Sigma^2}{n}+8K\Sigma^2+\frac{8AK\Sigma}{n^{1/2}}\Big)\Bigg) \Big(\cdots\Big),\\
&\leq \Bigg(\frac{\Sigma^2}{n^2} + \frac{2A\Sigma}{n^{5/2}}+ \frac{1}{n^3}\Big(A^2+12K\Sigma^2+4A\sqrt{K}\Sigma\Big)\Bigg) \Big(\cdots\Big),
 \end{align*}

\noindent where in the last passage we used $n \geq 5K$. Let $B = A^2+12K\Sigma^2+4A\sqrt{K}\Sigma$. We further simplify the previous expression as

\begin{align*}
&\E\Big[(\hmu_{k,n} - \mu_k)^2 \1\{{\xi\}}\Big]\\ 
&\leq \frac{\Sigma}{n} + \frac{1}{n^{3/2}}\Big( \frac{\Sigma A}{\sigma^2_k} + 2A\Big) + \frac{1}{n^2}\Big( \frac{4K\Sigma^2}{\si_k^2} + \frac{2A^2}{\sigma^2_k} + \frac{B}{\Sigma}\Big) + \frac{1}{n^{5/2}}\Big( \frac{8\Sigma AK}{\si_k^2} + \frac{AB}{\sigma^2_k\Sigma} \Big) + \frac{4KB}{\si_k^2 n^3} \\
&+ \Big(\frac{4  K\Sigma^2}{\si_k^2} + \frac{8\Sigma AK}{\si_k^2 n^{1/2}} + \frac{4KB}{\si_k^2 n}\Big)\de.
 \end{align*}

\noindent We now choose $\de =  n^{-5/2}$ and by using $n \geq 5K$ and $\lambda_{\min} \leq 1/K$ we obtain
\begin{small}
\begin{align*}
&\E\Big[(\hmu_{k,n} - \mu_k)^2 \1\{{\xi\}}\Big]\\ 
&\leq \frac{\Sigma}{n} + \frac{1}{n^{3/2}}\Big( \frac{\Sigma A}{\sigma^2_k} + 2A\Big) + \frac{1}{n^2}\Big(\frac{4K\Sigma^2}{\si_k^2} + \frac{2A^2}{\si_k^2} + \frac{B}{\Sigma} + \frac{4\Sigma A\sqrt{K}}{\si_k^2} + \frac{AB}{2\sqrt{K}\si_k^2 \Sigma} + \frac{B}{\si_k^2} + \frac{2\Sigma^2 \sqrt{K}}{\si_k^2} + \frac{2\Sigma A}{\si_k^2} + \frac{B}{2 \sqrt{K}\si_k^2}\Big)\\
&\leq \frac{\Sigma}{n} + \frac{1}{n^{3/2}}\Big( \frac{\Sigma A}{\sigma^2_k} + 2A\Big) + \frac{1}{\lambda_{\min}n^2}\Big(4K\Sigma + \frac{2A^2}{\Sigma}+ \frac{B}{K\Sigma} + 4A\sqrt{K} + \frac{AB}{2\Sigma^2 \sqrt{K}} + \frac{B}{\Sigma} + 
2\Sigma\sqrt{K} + 2A + \frac{B}{2\sqrt{K}\Sigma}\Big)\\
&= \frac{\Sigma}{n} + \frac{1}{n^{3/2}}\Big( \frac{\Sigma A}{\sigma^2_k} + 2A\Big) + \frac{1}{\lambda_{\min}n^2}\Big(4K\Sigma + 2\Sigma\sqrt{K}  + 4A\sqrt{K}+ 2A + \frac{2A^2}{\Sigma}+  \frac{B}{\Sigma}  + \frac{B}{2\sqrt{K}\Sigma}+ \frac{B}{K\Sigma}  + \frac{AB}{2\Sigma^2 \sqrt{K}} \Big)\\
&\leq \frac{\Sigma}{n} + \frac{1}{n^{3/2}}\Big( \frac{\Sigma A}{\sigma^2_k} + 2A\Big) + \frac{1}{\lambda_{\min}n^2}\Big(1.4K^2 + A(4\sqrt{K}+ 2) + \frac{2A^2}{\Sigma}+  \frac{B}{\Sigma}  + \frac{B}{4\Sigma^{3/2}}+ \frac{B}{K\Sigma}  + \frac{AB}{4\Sigma^{5/2}} \Big),
 \end{align*}
\end{small}
\noindent where the last passage follows from $\Sigma \leq K/4$. 
 
\noindent Before proceeding further we notice that $\lambda_{\min} \leq 1/K$ and thus
\begin{align*}
K^{3/2} \leq \frac{1}{\lambda_{\min}^{3/2}} = \frac{A}{12\sqrt{\log(1/\de)}} \leq \frac{A}{12\sqrt{(5/2) \log n}} \leq \frac{A}{27},
\end{align*}
where the first passage follows from the definition of $A$ and the second from $\de = n^{-5/2}$, and $n \geq 5K \geq 10$. This implies by definition of $B$
\begin{align*}
B = A^2+12K\Sigma^2+ 4A\sqrt{K}\Sigma \leq A^2 + 3A^2/27^2/4 + A^2/27 = 1009A^2/972 < 27A^2/26 < 1.05A^2,
 \end{align*}
where we use $\Sigma \leq K/4$. By using the previous bound, we finally obtain since $ 1.4K^2 \le 0.7K^3 \le 0.7A^2/27^2 \le A^2/1041$
\begin{align*}
&\E\Big[(\hmu_{k,n} - \mu_k)^2 \1\{{\xi\}}\Big]\\ 
&\leq \frac{\Sigma}{n} + \frac{1}{n^{3/2}}\Big( \frac{\Sigma A}{\sigma^2_k} + 2A\Big) + \frac{1}{\lambda_{\min}n^2}\Big( 1.4K^2  + A(4\sqrt{K}+ 2) + \frac{2A^2}{\Sigma}+  \frac{1.05 A^2}{\Sigma}  + \frac{1.05 A^2}{4\Sigma^{3/2}}+ \frac{1.05 A^2}{K\Sigma}  + \frac{1.05 A^3}{4\Sigma^{5/2}}\Big)\\
&\leq \frac{\Sigma}{n} + \frac{1}{n^{3/2}}\Big( \frac{\Sigma A}{\sigma^2_k} + 2A\Big) + \frac{1}{\lambda_{\min}n^2}\Big(A^2/1041+ 0.9 A^{3/2} + 3.6\big(\frac{1}{\Sigma}+\frac{1}{\Sigma^2}\big)A^2 + \frac{1.05A^3}{4\Sigma^{5/2}} \Big)\\
&\leq \frac{\Sigma}{n} + \frac{1}{n^{3/2}} \frac{2 A}{\lambda_{\min}} + \frac{1}{\lambda_{\min}n^2}\Big(0.9 A^{3/2} +3.7\big(\frac{1}{\Sigma}+\frac{1}{\Sigma^2}\big)A^2+ \frac{0.27A^3}{\Sigma^{5/2}} \Big).
 \end{align*}
 
 \noindent Since $|\hmu_{k,n} - \mu_k|$ is always smaller than $1$, we have $\E\big[(\hmu_{k,n} - \mu_k)^2 \1\{{\xi^{C}\}}\big] \leq 4nK\de = 4Kn^{-3/2}$. We also know that $A\leq \frac{19\sqrt{\log(n)}}{\lambda_{\min}^{3/2}}$. Thus the expected loss of arm $k$ is bounded by

\begin{align*}\label{e:bound.reg.stime4}
L_{k,n} &\leq \frac{\Sigma}{n} + \frac{38\sqrt{\log(n)}}{n^{3/2} \lambda_{\min}^{5/2}} + \frac{1}{\lambda_{\min}n^2}\Big(0.9 A^{3/2} +3.7\big(\frac{1}{\Sigma}+\frac{1}{\Sigma^2}\big)A^2+ \frac{0.27A^3}{\Sigma^{5/2}}\Big)+ 4nK\de\\
&\leq \frac{\Sigma}{n} +\frac{39\sqrt{\log(n)}}{n^{3/2} \lambda_{\min}^{5/2}} +  \frac{2.9\times 10^3}{n^2}\frac{(\log n)^{3/2}}{\lambda_{\min}^{11/2}}\Big(1 + \frac{1}{\Sigma^{5/2}}\Big),
\end{align*}
since $ \frac1{\Sigma^2} \le \frac15 + \frac4{5\Sigma^{5/2}}$.

\noindent Using the definition of regret $R_n(\mathcal A)=\max_kL_{k,n}-\frac{\Sigma}{n}$, we obtain

\begin{equation}\label{e:bound.reg.stime5}
R_n(\mathcal A_{CH}) \leq \frac{39\sqrt{\log(n)}}{n^{3/2} \lambda_{\min}^{5/2}} +  \frac{2.9\times 10^3}{n^2}\frac{(\log n)^{3/2}}{\lambda_{\min}^{11/2}}\Big(1 + \frac{1}{\Sigma}+\frac{1}{\Sigma^2} +  \frac{1}{\Sigma^{5/2}}\Big).
\end{equation}

\end{proof}



\section{Regret Bound for the Bernstein Algorithm}\label{app:B}

Let us consider $n>0$, $0<\de<1$ (that can be a function of $n$), $c_1>0$ and $c_2>0$  fixed.  We consider all the quantities considered in the definition of algorithm B-AS defined with respect to these fixed $n, \de, c_1, c_2$, and use the abbreviated notations $\hmu_{k,t}$, $\hsi_{k,t}$, $B_{k,t}$, $k_t$, and $T_{k,t}$.



\subsection{Basic Tools}\label{s:b-tools}

Before proving the bound in Theorems~\ref{thm:b-regret} and ~\ref{thm:b-regret-gauss} we need a number of technical tools, in particular for sub-Gaussian random variables.


The upper confidence bounds $B_{k,t}$ used in the B-AS algorithm is motivated by Theorem~10~in~\citep{maurer2009empirical}. We extend this result to sub-Gaussian random variables. We first restate Theorem~10 of \citep{maurer2009empirical}:

\begin{theorem}[\cite{maurer2009empirical}]\label{th:maurer}
Let $X_1,\ldots,X_t$ be $t \geq 2$ i.i.d.~random variables with variance $\var$ and mean $\mu$ and such that $\{X_i\}_{i=1}^t \in [0,b]$.
Then with probability at least $1 - \de$, we have

$$\Bigg|\sqrt{\frac{1}{t-1}\sum_{i=1}^t\Big(X_{i} - \frac{1}{t}\sum_{j=1}^t X_{j}\Big)^2} - \si\Bigg| \leq  b \sqrt{\frac{2\log(2/\de)}{t-1}}.$$
\end{theorem}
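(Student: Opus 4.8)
Because the statement is a restatement of Maurer and Pontil's Theorem~10, extended to the range $[0,b]$ and to a two-sided inequality, I would first reduce to the normalized, one-sided version. Setting $Y_i = X_i/b \in [0,1]$ yields $\hsi_Y = \hsi/b$ and $\si_Y = \si/b$, so any deviation bound proved for $[0,1]$-valued variables transfers to the $[0,b]$ case upon multiplying through by $b$. Moreover, the two-sided bound at confidence $1-\de$ follows from two one-sided bounds, each established at confidence level $1-\de/2$ and combined by a union bound; since $e^{-(t-1)\ep^2/(2b^2)} = \de/2$ gives exactly $\ep = b\sqrt{2\log(2/\de)/(t-1)}$, it suffices to prove each tail with a Gaussian-type rate and variance proxy $b^2/(t-1)$.

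The heart of the argument is a concentration inequality for the sample standard deviation viewed as a function of the data. Write $\hsi = \Phi(X_1,\dots,X_t)$ with $\Phi(x) = \tfrac{1}{\sqrt{t-1}}\,\|Px\|$, where $P = \identity - \tfrac1t\,\mathbf 1\mathbf 1^{\top}$ is the centering projection. Since $P$ has operator norm $1$ and the Euclidean norm is $1$-Lipschitz, $\Phi$ is convex and Lipschitz with constant $1/\sqrt{t-1}$ in the Euclidean metric. Convexity is what makes the sharp rate available: a plain bounded-difference (McDiarmid) estimate only produces a dimension-free exponent, whereas a Talagrand-type convex-distance inequality (equivalently, the entropy/self-bounding method) for a convex, Euclidean-Lipschitz function of independent $[0,1]$-valued coordinates yields $\Prob[\,\hsi - \E[\hsi] \ge \ep\,] \le \exp(-(t-1)\ep^2/(2b^2))$ and the symmetric lower tail, with variance proxy $b^2/(t-1)$.

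It remains to replace the centering constant $\E[\hsi]$ by the true standard deviation $\si$. For the upper tail this is immediate: the unbiased estimator satisfies $\E[\hvar] = \var$, so Jensen's inequality gives $\E[\hsi] \le \si$, and hence $\hsi - \si \le \hsi - \E[\hsi]$, which the concentration bound controls. For the lower tail one cannot use $\E[\hsi]$ directly (Jensen points the wrong way), but the decomposition $\var = (\E[\hsi])^2 + \mathrm{Var}(\hsi)$ together with $\sqrt{a+b}\le \sqrt a + \sqrt b$ gives the crucial inequality $\si \le \E[\hsi] + \sqrt{\mathrm{Var}(\hsi)}$, and the sub-Gaussian proxy bounds $\mathrm{Var}(\hsi) \le b^2/(t-1)$. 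This is precisely the step that avoids any spurious $1/\si$ factor: the bias $\si - \E[\hsi]$ is absorbed at the same $b/\sqrt{t-1}$ order as the fluctuations, so $\si - \hsi$ is controlled with the desired rate. Assembling the two recentred tails at level $\de/2$ and undoing the rescaling by $b$ gives the claim.

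The step I expect to be the main obstacle is the recentering for the lower tail, since the concentration inequality naturally produces deviations about $\E[\hsi]$ rather than about $\si$, and the bias must be absorbed without any dependence on how small $\si$ is; the inequality $\si \le \E[\hsi] + \sqrt{\mathrm{Var}(\hsi)}$ is the clean way around this. The other delicate point is obtaining the sharp $(t-1)$ factor in the exponent rather than the crude $t$-independent McDiarmid rate, which is exactly where the convexity of $\Phi$ is used. Since all of this is carried out in~\citep{maurer2009empirical}, in the paper itself the result is simply quoted.
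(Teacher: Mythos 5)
First, a point of reference: the paper does not prove this statement at all; it is quoted (rescaled to $[0,b]$) from Theorem~10 of \cite{maurer2009empirical}, so your sketch can only be compared against the proof in that reference. Your reductions are fine (rescaling to $[0,1]$, splitting the two tails at level $\de/2$, and the observation that $\hsi=\frac{1}{\sqrt{t-1}}\|Px\|$ is convex and $1/\sqrt{t-1}$-Lipschitz in the Euclidean metric). The gap is in the two steps you yourself flag as delicate: the concentration inequality you invoke is not available in the form you need, and the recentering does not recover the stated bound. (i) The entropy/self-bounding method gives the sharp tail $\Prob[f\ge \E f+u]\le \exp(-u^2/(2L^2b^2))$ for a separately convex Lipschitz function of independent bounded coordinates for the \emph{upper} tail only; the lower tail of a convex function is not controlled by that argument. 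Talagrand's convex-distance inequality does give both tails, but around the \emph{median} and with exponent $u^2/(4L^2b^2)$, so the two-sided, mean-centered bound with proxy exactly $b^2/(t-1)$ that your argument rests on is not an off-the-shelf fact. (ii) Even granting it, your lower-tail recentering $\si\le \E[\hsi]+\sqrt{\V[\hsi]}$ adds a term $\sqrt{\V[\hsi]}$ of order $b/\sqrt{t-1}$, which is the \emph{same} order as the deviation term, not a lower-order correction: you end up with $\si-\hsi\le b\sqrt{2\log(2/\de)/(t-1)}+b/\sqrt{t-1}$, weaker than the stated inequality by up to a factor of two. It is not absorbed.

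The actual proof in \cite{maurer2009empirical} sidesteps both issues. It shows that $Z=\frac{t-1}{b^2}\hsi^2$ (the normalized \emph{unbiased} sample variance) is a self-bounding function of $(X_1,\dots,X_t)$ and applies the self-bounding concentration inequality $\Prob[Z\le \E Z-s]\le \exp\big(-s^2/(2\E Z)\big)$, together with its upper-tail companion. Since $\E Z=\frac{(t-1)\var}{b^2}$, choosing $s=c\sqrt{\E Z}$ converts this directly into $\Prob\big[\sqrt{\E Z}-\sqrt{Z}\ge c\big]\le e^{-c^2/2}$, i.e., concentration of $\hsi$ around $\si$ \emph{itself} with exactly the constant of the theorem; the square root linearizes the self-bounding tail, and no recentering from $\E[\hsi]$ to $\si$ is ever required. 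That is precisely the step your route cannot close with the stated constants, so as written the proposal proves a strictly weaker inequality (which would still suffice for the paper's downstream rates, but not for the theorem as stated).
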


We now state and prove the following lemma (first statement of Lemma~\ref{l:b-upper-lower}).

\begin{lemma}\label{l:event-B-AS}
Let Assumption~\ref{a:subgaussian} holds, and $n\geq 2$, $c_1>0$, $c_2>0$, and $0 < \de < \min(1,c_2)$. For the event
%

\begin{equation}\label{eq:ucb-maurer-event-sub1}
\xi = \xi_{K,n}^B(\de) = \mathop{\bigcap_{1\leq k\leq K}}_{2\leq t\leq n}\left\lbrace \Bigg|\sqrt{\frac{1}{t-1}\sum_{i=1}^t\Big(X_{k,i} - \frac{1}{t}\sum_{j=1}^t X_{k,j}\Big)^2} - \si_k\Bigg| \leq 2a\sqrt{\frac{\log(2/\de)}{t}} \right\rbrace,
\end{equation}
%
where $a = 2\sqrt{c_1 \log(c_2/\de)} + \frac{\sqrt{c_1 \de (1+c_2 +\log(c_2/\de))}}{(1-\de)\sqrt{2\log(2/\de)}}n^{1/2}$, we have $\Pr[\xi] > 1-2nK\de$.

\end{lemma}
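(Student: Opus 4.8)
The plan is to reduce the unbounded (sub-Gaussian) case to the bounded case already covered by Theorem~\ref{th:maurer} via a truncation argument, and then to pay separately for the two discrepancies this introduces: a population-level truncation bias and the cost of conditioning. First I would fix an arm $k$ and set the truncation level $B=\sqrt{c_1\log(c_2/\de)}$, chosen so that Assumption~\ref{a:subgaussian} gives $\Prob[|X-\mu_k|>B]\le c_2\exp(-B^2/c_1)=\de$ for a single draw (this uses $\de<c_2$). Define the good event $\Omega=\bigcap_{i=1}^n\{|X_{k,i}-\mu_k|\le B\}$, so that a union bound over the $n$ draws yields $\Prob[\Omega^c]\le n\de$. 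Since $\Omega$ is a product event over independent samples, conditionally on $\Omega$ the variables $X_{k,1},\dots,X_{k,n}$ are i.i.d.\ from the truncated law $\tilde\nu_k=\nu_k(\cdot\mid|X-\mu_k|\le B)$, supported on an interval of width $2B$. Writing $\tilde\si_k$ for its standard deviation, I would apply Theorem~\ref{th:maurer} (which is translation invariant, with $b=2B$) to the first $t$ conditioned samples for each $t$, and union bound over $t=2,\dots,n$: with conditional probability at least $1-(n-1)\de$, the empirical standard deviation $\hsi^{(t)}$ satisfies $|\hsi^{(t)}-\tilde\si_k|\le 2B\sqrt{2\log(2/\de)/(t-1)}$ simultaneously in $t$. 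On $\Omega$ no sample is clipped, so $\hsi^{(t)}$ coincides with the quantity appearing in the event $\xi_{K,n}^B(\de)$.

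Next I would control the deterministic truncation bias $|\tilde\si_k-\si_k|$. Using $\Prob[|X-\mu_k|\le B]\ge 1-\de$ together with the tail estimates $\E[(X-\mu_k)^2\I\{|X-\mu_k|>B\}]\le c_1(1+\log(c_2/\de))\de$ and $\si_k^2\le c_1c_2$ (both obtained by integrating the sub-Gaussian tail), and accounting for the shift of the conditional mean away from $\mu_k$ (bounded via Cauchy--Schwarz by the same tail times $\de$), I would bound the difference of \emph{second moments} by $|\tilde\si_k^2-\si_k^2|\le c_1\de(1+c_2+\log(c_2/\de))/(1-\de)$, and then pass to standard deviations through $|\tilde\si_k-\si_k|\le\sqrt{|\tilde\si_k^2-\si_k^2|}=:D$, which is valid since both are nonnegative. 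This $D$ is exactly the quantity that the second term of $a$ is designed to absorb.

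Finally I would assemble the pieces. On $\Omega$ and on the Maurer--Pontil event, the triangle inequality gives $|\hsi^{(t)}-\si_k|\le 2B\sqrt{2\log(2/\de)/(t-1)}+D$ for all $t$, and elementary manipulations ($\sqrt{1/(t-1)}\le\sqrt{2/t}$ for $t\ge2$, and $\sqrt{n/t}\ge1$ since $t\le n$) show the right-hand side is at most $2a\sqrt{\log(2/\de)/t}$ with $a$ as stated: the first term of $a$ matches the Maurer--Pontil contribution, while the $n^{1/2}$ factor in the second term of $a$ is precisely what lets $2a\sqrt{\log(2/\de)/t}$ dominate the $t$-independent bias $D$ in the worst case $t=n$. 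The total failure probability for a single arm is at most $\Prob[\Omega^c]+(n-1)\de\le(2n-1)\de$, and a union bound over the $K$ arms gives $\Prob[\xi_{K,n}^B(\de)^c]<2nK\de$, as claimed.

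I expect the main obstacle to be the bias step: bounding $|\tilde\si_k^2-\si_k^2|$ so that the three contributions (the $1/\Prob[|X-\mu_k|\le B]$ renormalization, the truncated second-moment tail, and the conditional-mean shift) combine into the precise constant $c_1\de(1+c_2+\log(c_2/\de))/(1-\de)$ embedded in $a$. The other delicate point is the probability bookkeeping: using the single event $\Omega$ over all $n$ samples at once, rather than a separate good event for each $t$, is what keeps the budget at $2nK\de$ instead of $O(n^2K\de)$, and one must check that conditioning on the product event $\Omega$ genuinely preserves the i.i.d.\ structure required by Theorem~\ref{th:maurer}.
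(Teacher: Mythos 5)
Your proposal follows essentially the same route as the paper's proof: truncate at level $\sqrt{c_1\log(c_2/\de)}$ using the sub-Gaussian tail, apply the Maurer--Pontil bound (Theorem~\ref{th:maurer}) to the bounded conditional samples, control the bias $|\tilde\si_k-\si_k|$ between the truncated and true standard deviations via the tail integrals and the conditional-mean shift, and assemble with a triangle inequality and union bound to get the $2nK\de$ budget. The only quibble is your intermediate claim $|\tilde\si_k^2-\si_k^2|\le c_1\de(1+c_2+\log(c_2/\de))/(1-\de)$, which the natural decomposition does not quite deliver (the paper obtains $2c_1\de(1+c_2+\log(c_2/\de))/(1-\de)^2$), but the resulting $D=\sqrt{2c_1\de(1+c_2+\log(c_2/\de))}/(1-\de)$ is still exactly what the $n^{1/2}$ term of $a$ absorbs at $t=n$, so the argument closes as you describe.
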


Note that the first term in the absolute value in Equation~\ref{eq:ucb-maurer-event-sub1} is the empirical standard deviation of arm $k$ computed as in Equation~\ref{eq:estim-var2} for $t$ samples. The event $\xi$ plays an important role in the proofs of this section and a number of statements will be proved on this event.

\begin{proof}

\noindent
\textbf{Step~1.~Truncating sub-Gaussian variables.}
%
We want to characterize the conditional mean and variance of the variables $X_{k,t}$ given that $|X_{k,t}-\mu_k| \leq \sqrt{c_1\log(c_2/\de)}$. For any non-negative random variable $Y$ and any $b\geq 0$, $\E[Y\ind{Y> b}] = \int_b^{\infty} \Prob[Y> \ep]d\ep + b \Prob[Y> b]$.\footnote{Let $\tilde Y = Y \ind{Y> b} + b\ind{Y\leq b}$, then $\E[\tilde Y] = \int_0^b \Prob[\tilde Y>\eps] d\eps + \int_b^{\infty} \Prob[\tilde Y>\eps]d\eps = b + \int_b^{\infty} \Prob[Y>\eps]d\eps$. Thus we can write $\E[Y\ind{Y> b}] = \E[\tilde Y] - b\Prob[Y \leq b] = \int_b^\infty \Prob[Y>\eps]d\eps + b\Prob[Y > b]$.} In order to simplify the notation we introduce the deviation random variable $S_{k,t} = X_{k,t}-\mu_k$. If we take $b = c_1 \log(c_2/\de)$ and use Assumption \ref{a:subgaussian}, we obtain $\mathbb P[S_{k,t}^2>b] \leq \de$ and

\begin{align*}
\E \Big[ S_{k,t}^2 \ind{S_{k,t}^2 > b} \Big] &= \int_{b}^{\infty} \Prob\big[S_{k,t}^2 > \ep\big] d\ep + b \Prob[S_{k,t}^2 > b]\leq \int_{b}^{\infty} c_2 \exp(-\ep/c_1) d\ep + b c_2 \exp(-b/c_1) \\ 
&= c_1 \de + c_1\de\log(c_2/\de)= c_1 \de\big(1+\log(c_2/\de)\big).
\end{align*}

By definition of $S_{k,t}$, we have $\E \big[ S_{k,t}^2 \I \{ S_{k,t}^2 > b\} \big] + \E\big[ S_{k,t}^2 \I\{S_{k,t}^2 \leq b\} \big] = \var_k$, which can be written as
\begin{align}
\frac{\E\big[S_{k,t}^2 \I\{S_{k,t}^2>b\}\big] - \si_k^2 \Prob\big[S_{k,t}^2>b\big]}{\Prob\big[S_{k,t}^2\leq b\big]} = \si_k^2 - \frac{\E\big[S_{k,t}^2 \I\{S_{k,t}^2 \leq b\}\big]}{\Prob\big[S_{k,t}^2\leq b\big]},
\end{align}
that combined with the previous equation, implies that
\begin{align}
\Big|\E\Big[ S_{k,t}^2 \big| S_{k,t}^2 \leq b \Big] - \si_k^2 \Big| &= \frac{\Big|\E \Big[ \big(S_{k,t}^2 - \si_k^2 \big)\ind{S_{k,t}^2 > b} \Big]\Big|}{\Prob\big[S_{k,t}^2 \leq b \big]} \nonumber \\
 &\leq \frac{c_1 \de (1+\log(c_2/\de)) + \de \si_k^2}{1-\de}, \label{truc3}
\end{align}
where we use $1+\log(c_2/\de)\geq 0$, that follows from $\de\leq c_2$.
Note also that Cauchy-Schwartz inequality implies
\begin{align*}
\Big| \E \big[ S_{k,t}\ind{S_{k,t}^2 > b} \big] \Big| &\leq  \sqrt{\E \big[ S_{k,t}^2 \I\{S_{k,t}^2 > b\}\big]} \\
&\leq \sqrt{c_1 \de (1+\log(c_2/\de))}.
\end{align*}
We now introduce the conditional mean of $X_{k,t}$ conditioned on small deviations, that is $\tilde{\mu}_k = \E \big[ X_{k,t} \big| S_{k,t}^2 \leq  b\big]  = \frac{\E[ X_{k,t} \I\{S_{k,t}^2 \leq b\} ]}{\Prob[S_{k,t}^2 \leq b]}$. Thus we can combine $\E \big[ X_{k,t} \I\{S_{k,t}^2 > b\} \big] + \E\big[ X_{k,t} \I\{S_{k,t}^2 \leq b\} \big] = \mu_k$ with the previous result and obtain
\begin{equation}\label{truc4}
|\tilde{\mu}_k - \mu_k| = \frac{\Big|\E \big[ S_{k,t}\I\{S_{k,t}^2 > b\} \big]\Big|}{\Prob\big[S_{k,t}^2 \leq b \big]} \leq \frac{\sqrt{c_1 \de (1+\log(c_2/\de))}}{1-\de}.
\end{equation}

We also define the variance of the conditional random variable $\tilde{\sigma}_k^2 = \V \big[ X_{k,t} | S_{k,t}^2 \leq  b\big] = \E\big[ S_{k,t}^2 |S_{k,t}^2 \leq b \big] - (\mu_k - \tilde{\mu_k})^2$. From Equations \ref{truc3} and \ref{truc4}, we derive
\begin{align*}
 |\tilde{\si}_k^2 - \var_k| &\leq  \Big|\E\big[ S_{k,t}^2 | S_{k,t}^2 \leq b \big] - \var_k \Big| + (\tilde{\mu}_k - \mu_k)^2\\
&\leq  \frac{c_1 \de (1+\log(c_2/\de)) + \de \si_k^2}{1-\de} + \frac{c_1 \de (1+\log(c_2/\de))}{(1-\de)^2}\\
&\leq \frac{2c_1 \de (1+\log(c_2/\de)) + \de\var_k}{(1-\de)^2}.
\end{align*}
In order to get the final result, we first bound the variance $\si_k^2$ as a function of the constants $c_1$ and $c_2$ using the sub-Gaussian assumption as
\begin{equation}\label{eq:variance-ub}
\si_k^2 = \E[(X_{k,t}-\mu_k)^2] = \int_0^\infty \Prob[X_{k,t}-\mu_k)^2 > \eps] d\eps \leq \int_0^\infty c_2 \exp(-\eps / c_1) d\eps = c_1 c_2.
\end{equation}
Finally, using $\sqrt{|x^2-y^2|} \geq |x-y|$ for $x,y \geq 0$, we obtain
\begin{equation}\label{truc2}
|\tilde{\si}_k - \si_k| \leq \frac{\sqrt{2c_1 \de (1+c_2 +\log(c_2/\de))}}{1-\de}.
\end{equation}

\noindent
\textbf{Step~2.~Application of large deviation inequalities.}

Let $\xi_1 = \xi_{1,K,n}(\de)$ be the event:
\begin{equation*}\label{eq:sub.gaus}
\xi_1 = \bigcap_{1\leq k\leq K,\;1\leq t\leq n}\left\lbrace |X_{k,t} - \mu_k| \leq \sqrt{c_1 \log(c_2/\de)} \right\rbrace.
\end{equation*}
Under Assumption~\ref{a:subgaussian}, using a union bound, we have that the probability of this event is at least $1-nK\de$.
On $\xi_1$, the $\{X_{k,i}\}_i,\;1\leq k\leq K,\;1\leq i\leq t$ are $t$ i.i.d.~bounded random variables with standard deviation $\tilde{\si}_k$.

Let $\xi_2 = \xi_{2,K,n}(\de)$ be the event:

\begin{equation*}
\xi_2 = \bigcap_{1\leq k\leq K,\;2\leq t\leq n}\left\lbrace \Bigg|\sqrt{\frac{1}{t-1}\sum_{i=1}^t\Big(X_{k,i} - \frac{1}{t}\sum_{j=1}^t X_{k,j}\Big)^2} - \tilde{\si}_k\Bigg| \leq 2\sqrt{c_1 \log(c_2/\de)}\sqrt{2\frac{\log(2/\de)}{t-1}} \right\rbrace.
\end{equation*}

Using Theorem~\ref{th:maurer} and a union bound, we deduce that $ \Pr[\xi_1 \cap \xi_2] \geq 1-2nK\de$. Now, from Equation~\eqref{truc2}, we have on $\xi_1 \cap \xi_2$, for all $1\leq k\leq K,\;2\leq t\leq n$:
\begin{align*}
\Bigg|\sqrt{\frac{1}{t-1}\sum_{i=1}^t\Big(X_{k,i} - \frac{1}{t}\sum_{j=1}^t X_{k,j}\Big)^2} &- \si_k\Bigg| \\
& \leq 2\sqrt{c_1 \log(c_2/\de)}\sqrt{\frac{2\log(2/\de)}{t-1}}+ \frac{\sqrt{2c_1 \de (1+c_2 +\log(c_2/\de))}}{1-\de}\\
&\leq 4\sqrt{c_1 \log(c_2/\de)}\sqrt{\frac{\log(2/\de)}{t}}+ \frac{\sqrt{2c_1 \de (1+c_2 +\log(c_2/\de))}}{1-\de},
\end{align*}
from which we deduce Lemma~\ref{l:event-B-AS} (since $\xi_1 \cap \xi_2 \subseteq \xi$ and $2 \leq t\leq n$).
\end{proof}

We transcribe the definition~\eqref{eq:ucb-maurer-event-sub1} of $\xi$ in the last lemma into the following lemma when the number of samples $T_{k,t}$ are random.

\begin{lemma}\label{l:bernstein-var-sub}
For $t=2K,\dots,n$, let $T_{k,t}$ be any random variable taking values in $\{2,\dots,n\}$. Let $\hvar_{k,t}$ be the empirical variance computed from Equation~\eqref{eq:estim-var2}. Then, on the event $\xi$, we have:
\begin{equation}\label{eq:ucb-maurer-event-sub2}
|\hsi_{k,t} - \si_k| \leq 2a\sqrt{\frac{\log(2/\de)}{T_{k,t}}}\;,
\end{equation}
where $a = 2\sqrt{c_1 \log(c_2/\de)}+ \frac{\sqrt{T_{k,t}c_1 \de (1+c_2 +\log(c_2/\de))}}{(1-\de)\sqrt{2\log(2/\de)}}$
\end{lemma}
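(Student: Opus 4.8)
The plan is to deduce the random–sample–size statement from the fixed–sample–size deviation bound that is already available, by a pointwise substitution followed by an algebraic rearrangement. The only genuine content is the observation that the empirical variance $\hvar_{k,t}$ of Equation~\eqref{eq:estim-var2} is exactly the empirical variance of the \emph{first $T_{k,t}$ samples} of arm $k$, together with the fact that $T_{k,t}$ takes values in the finite set $\{2,\dots,n\}$ over which the underlying event already controls every prefix length simultaneously.

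First I would \emph{not} start from the (slightly loose) defining inequality of $\xi$, whose constant $a$ carries the factor $n^{1/2}$, but from the sharper per–sample–size inequality established inside the proof of Lemma~\ref{l:event-B-AS}: on the intermediate event $\xi_1\cap\xi_2$ (of probability at least $1-2nK\de$ and contained in $\xi$), for every arm $k$ and \emph{every deterministic} $s\in\{2,\dots,n\}$ the empirical standard deviation formed from the first $s$ samples obeys
\begin{equation*}
\Bigg|\sqrt{\tfrac{1}{s-1}\textstyle\sum_{i=1}^s\big(X_{k,i}-\tfrac1s\sum_{j=1}^sX_{k,j}\big)^2}-\si_k\Bigg|
\le 4\sqrt{c_1\log(c_2/\de)}\sqrt{\tfrac{\log(2/\de)}{s}}+\frac{\sqrt{2c_1\de(1+c_2+\log(c_2/\de))}}{1-\de}.
\end{equation*}

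Next I would substitute $s=T_{k,t}(\omega)$ pointwise on this event. The step is legitimate precisely because the displayed inequality holds simultaneously for all fixed prefix lengths $s$; no independence between $T_{k,t}$ and the samples $\{X_{k,i}\}$ is required, even though $T_{k,t}$ is data–dependent. This is the exact analogue, for the Maurer/Bernstein bound, of the passage from the fixed-$t$ statement to Proposition~\ref{p:ch-var} in the CH-AS analysis. After substitution the left-hand side is $|\hsi_{k,t}-\si_k|$ by definition of $\hvar_{k,t}$, and it remains only to recognise the right-hand side as the claimed expression.

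Finally I would verify the identity that the substituted right-hand side equals $2a\sqrt{\log(2/\de)/T_{k,t}}$ with $a=2\sqrt{c_1\log(c_2/\de)}+\frac{\sqrt{T_{k,t}\,c_1\de(1+c_2+\log(c_2/\de))}}{(1-\de)\sqrt{2\log(2/\de)}}$: expanding $2a\sqrt{\log(2/\de)/T_{k,t}}$, the first summand reproduces $4\sqrt{c_1\log(c_2/\de)}\sqrt{\log(2/\de)/T_{k,t}}$, and in the second summand the factor $\sqrt{T_{k,t}}$ cancels against $\sqrt{1/T_{k,t}}$, leaving exactly the $T_{k,t}$-independent bias term $\frac{\sqrt{2c_1\de(1+c_2+\log(c_2/\de))}}{1-\de}$. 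The main obstacle here is conceptual rather than computational: one must start from the per-$s$ inequality of Lemma~\ref{l:event-B-AS}'s proof to obtain the sharp $\sqrt{T_{k,t}}$-form, since the defining inequality of $\xi$ alone would only give the weaker $n^{1/2}$-form (as $T_{k,t}\le n$ makes that bound point in the wrong direction), and one must check that the pointwise substitution is valid for the random, data–dependent $T_{k,t}$.
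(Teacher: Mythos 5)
The proposal is correct and follows the route the paper itself intends: the paper offers no explicit proof, presenting the lemma as a direct ``transcription'' of the simultaneous per-prefix-length deviation bound to a random sample size, and your pointwise substitution $s = T_{k,t}(\omega)$ is valid precisely because the inequality holds for every deterministic $s\in\{2,\dots,n\}$ at once on the underlying event, while your algebraic identification of the right-hand side with $2a\sqrt{\log(2/\de)/T_{k,t}}$ checks out exactly. Your observation that the sharp $\sqrt{T_{k,t}}$-dependent constant $a$ must be obtained from the per-$s$ inequality established on $\xi_1\cap\xi_2$ inside the proof of Lemma~\ref{l:event-B-AS} (rather than from the defining inequality of $\xi$, whose constant carries $n^{1/2}$) is correct and in fact more careful than the paper's own presentation; since $\xi_1\cap\xi_2\subseteq\xi$ carries the same probability guarantee $1-2nK\de$ that is used downstream, nothing in the subsequent analysis is affected.
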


\subsection{Allocation Performance}\label{s:b-allocation}

In this section, we first provide the proof of Lemma~\ref{l:b-upper-lower}, we then derive the regret bound of Theorem~\ref{thm:b-regret} in the general case, and we prove Theorem~\ref{thm:b-regret-gauss} for Gaussians. \\

Recall that $n \geq 5K$. This will be useful in the following.\\

%
%

\begin{proof}[Proof of Lemma~\ref{l:b-upper-lower}]

Note first that the first part of the claim of the lemma is exactly Lemma~\ref{l:event-B-AS}. The rest of the proof consists of the following five main steps. Until the end of the proof, we assume that $\xi$ holds. \\

\noindent
{\bf Step~1.~Lower bound of order $\Omega(\sqrt{n})$.} We first recall for any arm $q$ the definition of $B_{q,t+1}$ used in the B-AS algorithm

\begin{small}
\begin{equation*}
B_{q,t+1} = \frac{1}{T_{q,t}} \Bigg( \hsi_{q,t} + 2a\sqrt{\frac{\log(2/\de)}{T_{q,t}}}\Bigg)^2.
\end{equation*}
\end{small}

\noindent Using Lemma~\ref{l:bernstein-var-sub} it follows that on $\xi$, for any $q$ such that $T_{q,t} \geq 2$,

\begin{small}
\begin{equation}\label{e:bound.B}
 \frac{\var_q}{T_{q,t}} \leq B_{q,t+1} \leq \frac{1}{T_{q,t}}\Bigg(\si_{q} + 4a \sqrt{\frac{\log(2/\de)}{T_{q,t}}} \Bigg)^2.
\end{equation}
\end{small}

Let $q$ be the index of an arm such that $T_{q,n} \geq \frac{n}{\narms}$ and $t+1\leq n$ be the last time that it was pulled, i.e.,~$T_{q,t}=T_{q,n}-1$ and $T_{q,t+1}=T_{q,n}$.~\footnote{Note that such an arm always exists for any possible allocation strategy given the constraint $n=\sum_{p}T_{p,n}$.} From Equation~\eqref{e:bound.B} and the fact that $T_{q,n} \geq \frac{n}{\narms}\geq 5$ (see condition on $c(\de)$, and also the beginning of this section) and $T_{q,t} \geq 3$, we obtain on $\xi$

\begin{small}
\begin{equation}\label{e:meca.lb}
B_{q,t+1} \leq \frac{1}{T_{q,t}}\Bigg(\si_{q} + 4a \sqrt{\frac{\log(2/\de)}{T_{q,t}}} \Bigg)^2 \leq \frac{4K}{3n}\Big(\sqrt{\Sigma} + 4a \sqrt{\frac{\log(2/\de)}{3}}\Big)^2,
\end{equation}
\end{small}

\noindent where we also used $T_{q,n}\geq 4$ to bound $T_{q,t}$ in the parenthesis and the fact that $\si_q \leq \sqrt{\Sigma}$. Since at time $t+1$ we assumed that arm $q$ has been chosen then for any other arm $p$, we have

\begin{small}
\begin{equation}\label{e:meca}
B_{p,t+1} \leq B_{q,t+1}.
\end{equation}
\end{small}
\noindent From the definition of $B_{p,t+1}$, removing all the terms but the last and using the fact that $T_{p,t} \leq T_{p,n}$, we obtain the lower bound

\begin{small}
\begin{equation}\label{e:meca.hb}
B_{p,t+1} \geq 4 a^2 \frac{\log(2/\de)}{T_{p,t}^2} \geq 4 a^2 \frac{\log(2/\de)}{T_{p,n}^2}\;.
\end{equation}
\end{small}

\noindent Combining Equations~\ref{e:meca.lb}--\ref{e:meca.hb}, we obtain

\begin{small}
\begin{equation*}
4 a^2 \frac{\log(2/\de)}{T_{p,n}^2} \leq \frac{4K\Big(\sqrt{\Sigma} + 3a \sqrt{\log(2/\de)}\Big)^2} {3n}.
\end{equation*}
\end{small}

\noindent Finally, this implies that for any $p$

\begin{small}
\begin{equation}\label{eq:lb.vloose}
T_{p,n} \geq \frac{2 a \sqrt{\log(2/\de)} }{\sqrt{\Sigma} + 3a \sqrt{\log(2/\de)}} \sqrt{\frac{3n}{4K}}.
\end{equation}
\end{small}

\noindent In order to simplify the notation, in the following we use

\begin{small}
\begin{equation*}
c(\de) = \frac{a \sqrt{3\log(2/\de)} }{\sqrt{K}\Big(\sqrt{\Sigma} + 3a \sqrt{\log(2/\de)}\Big)},
\end{equation*}
\end{small}

\noindent thus obtaining $T_{p,n} \geq c(\de) \sqrt{n}$ on the event $\xi$ for any $p$. \\

\noindent
{\bf Step~2.~Mechanism of the algorithm.} Note that as $n \geq 5 K$, there is at least an arm $q$ that is pulled after initialization. Let, for such an arm $q$, $t+1>2K$ be the time when arm $q$ is pulled for the last time, that is $T_{q,t} = T_{q,n}-1 \geq 2$. Since at time $t+1$ this arm $q$ is chosen, then for any other arm $p$, we have

\begin{small}
\begin{equation}\label{e:meca2}
B_{p,t+1} \leq B_{q,t+1}\;.
\end{equation}
\end{small}

\noindent From Equation~\eqref{e:bound.B} and $T_{q,t} = T_{q,n} -1$, we obtain

\begin{small}
\begin{equation}\label{e:meca.lb2}
B_{q,t+1} \leq \frac{1}{T_{q,t}}\Bigg(\si_{q} + 4 a \sqrt{\frac{\log(2/\de)}{T_{q,t}}} \Bigg)^2 = \frac{1}{T_{q,n}-1}\Bigg(\si_{q} + 4 a \sqrt{\frac{\log(2/\de)}{T_{q,n}-1}} \Bigg)^2.
\end{equation}
\end{small}

\noindent Furthermore, since $T_{p,t} \leq T_{p,n}$ and $T_{p,t} \geq 2$ (as $t \geq 2K$), then

\begin{small}
\begin{equation}\label{e:meca.hb2}
B_{p,t+1} \geq \frac{\var_{p}}{T_{p,t}} \geq \frac{\var_{p}}{T_{p,n}}.
\end{equation}
\end{small}

\noindent Combining Equations~\ref{e:meca2}--\ref{e:meca.hb2}, we obtain

\begin{small}
\begin{equation*}
\frac{\var_{p}}{T_{p,n}}(T_{q,n}-1) \leq \Bigg(\si_{q} + 4a \sqrt{\frac{\log(2/\de)}{T_{q,n}-1}} \Bigg)^2.
\end{equation*}
\end{small}

\noindent Summing over all $q$ that are pulled after initialization on both sides, we obtain on $\xi$ for any arm $p$

\begin{align}\label{eq:main.lb}
\frac{\var_{p}}{T_{p,n}}(n - 2K) \leq \sum_{q|T_{q,n}>2} \Bigg(\si_{q} + 4a \sqrt{\frac{\log(2/\de)}{T_{q,n}-1}} \Bigg)^2,
\end{align}

\noindent because the arms that are not pulled after the initialization are only pulled twice (so $\sum_{q|T_{q,n}>2} (T_{q,n}-1)\geq n-2K$).

\noindent
{\bf Step~3.~Intermediate lower bound.} It is possible to rewrite Equation~\eqref{eq:main.lb}, using the fact that $T_{q,n} \geq 2$, as

\begin{equation}\label{eq:newandras}
\frac{\var_{p}}{T_{p,n}}(n - 2K) \leq \sum_q \Bigg(\si_{q} + 4a \sqrt{\frac{\log(2/\de)}{T_{q,n}-1}} \Bigg)^2 \leq \sum_q \Bigg(\si_{q} + 4a \sqrt{\frac{2\log(2/\de)}{T_{q,n}}} \Bigg)^2.
\end{equation}

\noindent Plugging Equation~\eqref{eq:lb.vloose} in Equation~\eqref{eq:newandras}, we have on $\xi$ for any arm $p$

\begin{small}
\begin{equation}\label{eq:intermediate-bound}
\frac{\var_{p}}{T_{p,n}}(n - 2K) \leq \sum_q \Bigg(\si_{q} + 4a \sqrt{\frac{2\log(2/\de)}{T_{q,n}}} \Bigg)^2 \leq \Bigg(\sqrt{\Sigma} + 4\sqrt{K}a \sqrt{2\frac{\log(2/\de)}{c(\de)\sqrt{n}}} \Bigg)^2,
\end{equation}
\end{small}

\noindent because for any sequence $(a_k)_{i=1,\ldots,K}\geq 0$, and any $b \geq 0$, $\sum_k (a_k + b)^2 \leq (\sqrt{\sum_k a_k^2} + \sqrt{K}b)^2$ by Cauchy-Schwartz.

\noindent Building on this bound we shall recover the desired bound. \\

\noindent
{\bf Step~4.~Final lower bound.} We first expand the square in Equation~\eqref{eq:newandras} using $T_{q,n}\geq 2$ as

\begin{small}
\begin{equation*}
\frac{\var_{p}}{T_{p,n}}(n - 2K) \leq \sum_q\si_q^2 + 8a\sqrt{2\log(2/\de)}\sum_q \frac{\si_q}{\sqrt{T_{q,n}}} + \sum_q \frac{32a^2\log(2/\de)}{T_{q,n}}.
\end{equation*}
\end{small}

\noindent We now use the bound in Equation~\eqref{eq:intermediate-bound} in the second term of the RHS and the bound in Equation~\eqref{eq:lb.vloose} to bound $T_{k,n}$ in the last term, thus obtaining

\begin{small}
\begin{equation*}
\frac{\var_{p}}{T_{p,n}}(n - 2K) \leq \Sigma + 8a\sqrt{2\log(2/\de)}\frac{K}{\sqrt{n-2K}}\Bigg(\sqrt{\Sigma} + 4\sqrt{K}a \sqrt{2\frac{\log(2/\de)}{c(\de)\sqrt{n}}}  \Bigg) + \frac{32Ka^2\log(2/\de)}{c(\de)\sqrt{n}}.
\end{equation*}
\end{small}

\noindent By using again $n\geq 5 K$ and some algebra, we get

\begin{small}
\begin{align}\label{eq:almost-lower-bound}
\frac{\var_{p}}{T_{p,n}}&(n - 2K) \leq \Sigma + 16a\sqrt{\log(2/\de)}\frac{K}{\sqrt{n}}\Bigg(\sqrt{\Sigma} + 4\sqrt{K}a \sqrt{2\frac{\log(2/\de)}{c(\de)\sqrt{n}}}  \Bigg) + \frac{32Ka^2\log(2/\de)}{c(\de)\sqrt{n}} \nonumber\\
&\leq \Sigma + 16Ka \sqrt{\frac{\Sigma\log(2/\de)}{n}} + 64\sqrt{2}K^{3/2}a^2 \frac{\log(2/\de)}{\sqrt{c(\de)}}n^{-3/4} + \frac{32Ka^2\log(2/\de)}{c(\de)\sqrt{n}} \nonumber \\
&= \Sigma + \frac{16Ka\sqrt{\log(2/\de)}}{\sqrt{n}} \Bigg(\sqrt{\Sigma} + \frac{2a\sqrt{\log(2/\de)}}{c(\de)} \Bigg) + 64\sqrt{2}K^{3/2}a^2 \frac{\log(2/\de)}{\sqrt{c(\de)}}n^{-3/4}.
\end{align}
\end{small}

\noindent We now invert the bound and obtain the final lower bound on $T_{p,n}$ as follows:

\begin{small}
\begin{align*}
&T_{p,n} \geq \frac{\sigma_p^2 (n\!-\!3K)}{\Sigma} \Bigg[1 + \frac{16Ka\sqrt{\log(2/\de)}}{\Sigma\sqrt{n}} \Bigg(\sqrt{\Sigma} + \frac{2a\sqrt{\log(2/\de)}}{c(\de)} \Bigg) + 64\sqrt{2}K^{3/2}a^2 \frac{\log(2/\de)}{\Sigma\sqrt{c(\de)}}n^{-3/4}\Bigg]^{-1} \\
&\geq \frac{\sigma_p^2 (n-2K)}{\Sigma} \Bigg[1 - \frac{16Ka\sqrt{\log(2/\de)}}{\Sigma\sqrt{n}} \Bigg(\sqrt{\Sigma} + \frac{2a\sqrt{\log(2/\de)}}{c(\de)} \Bigg) - 64\sqrt{2}K^{3/2}a^2 \frac{\log(2/\de)}{\Sigma\sqrt{c(\de)}}n^{-3/4}\Bigg] \\
&\geq T_{p,n}^* - K\lambda_p \Bigg[ \frac{16a\sqrt{\log(2/\de)}}{\Sigma} \Bigg(\sqrt{\Sigma} + \frac{2a\sqrt{\log(2/\de)}}{c(\de)} \Bigg)n^{1/2} + 64\sqrt{2K}a^2 \frac{\log(2/\de)}{\Sigma\sqrt{c(\de)}}\;n^{1/4} + 2\Bigg].
\end{align*}
\end{small}

\noindent Note that the above lower bound holds on $\xi$ for any arm $p$. \\

\noindent
{\bf Step~5.~Upper bound.} The upper bound on $T_{p,n}$ follows by using $T_{p,n} = n - \sum_{q\neq p} T_{q,n}$ and the previous lower bound, that is

\begin{small}
\begin{align*}
&T_{p,n} \leq n - \sum_{q\neq p} T_{q,n}^* \\
&\quad+ \sum_{q\neq p} K\lambda_q \Bigg[ \frac{16a\sqrt{\log(2/\de)}}{\Sigma} \Bigg(\sqrt{\Sigma} + \frac{2a\sqrt{\log(2/\de)}}{c(\de)} \Bigg)n^{1/2} + 64\sqrt{2K}a^2 \frac{\log(2/\de)}{\Sigma\sqrt{c(\de)}}\;n^{1/4} + 2\Bigg] \\
&\leq T_{p,n}^* + K\Bigg[ \frac{16a\sqrt{\log(2/\de)}}{\Sigma} \Bigg(\sqrt{\Sigma} + \frac{2a\sqrt{\log(2/\de)}}{c(\de)} \Bigg)n^{1/2} + 64\sqrt{2K}a^2 \frac{\log(2/\de)}{\Sigma\sqrt{c(\de)}}\;n^{1/4} + 2\Bigg].
\end{align*}
\end{small}

\end{proof}


\subsection{Regret Bounds}\label{s:b-regret}

With the allocation performance, we now move to the regret bound showing how the number of pulls translates into the losses $L_{kn}$ and the global regret as stated in Theorem~\ref{thm:b-regret}.

We first state some technical results.

\subsubsection{Bound on the Regret Outside $\xi$}

The next lemma provides a bound for the loss whenever the event $\xi$ does not hold.

\begin{lemma}\label{l:some-lemma}
Let Assumption~\ref{a:subgaussian} holds. If $2nK\de< c_2$, then for every arm $k$, we have\footnote{Note that for $\de = n^{-7/2}$,
$n\geq 5K$, and $c_2\ge 1$, we have $2nK\de = 2Kn^{-5/2} < c_2$.}
\begin{equation*}
\E\big[(\hmu_{k,n}-\mu_k)^2 \I\{\xi^C\}\big] \leq 2c_1n^2K\de (1 + \log(c_2/2nK\de)).
\end{equation*}
\end{lemma}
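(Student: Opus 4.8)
The plan is to bound the loss pathwise by a sum of squared deviations and then to split each term at a carefully chosen threshold, so that the small-deviation part is charged to the low probability of $\xi^C$ while the large-deviation part is charged to the sub-Gaussian tail. Write $S_{k,i}=X_{k,i}-\mu_k$ and recall that $\{X_{k,i}\}_{i=1}^n$ is an i.i.d.\ sequence drawn from $\nu_k$. Jensen's inequality (equivalently Cauchy--Schwarz) gives the pathwise bound
\[
(\hmu_{k,n}-\mu_k)^2 = \Big(\frac{1}{T_{k,n}}\sum_{i=1}^{T_{k,n}} S_{k,i}\Big)^2 \leq \frac{1}{T_{k,n}}\sum_{i=1}^{T_{k,n}} S_{k,i}^2 \leq \frac{1}{2}\sum_{i=1}^{n} S_{k,i}^2,
\]
where the last step uses $T_{k,n}\geq 2$ (each arm is pulled twice during initialization) together with non-negativity of the summands. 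Multiplying by $\I\{\xi^C\}\le 1$ and taking expectations reduces the claim to bounding $\sum_{i=1}^n \E[S_{k,i}^2\I\{\xi^C\}]$, and in particular decouples the random number of pulls $T_{k,n}$ from the deviation terms.

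For a threshold $b>0$ to be fixed later, I would split each term as
\[
\E[S_{k,i}^2\I\{\xi^C\}] = \E[S_{k,i}^2\I\{S_{k,i}^2\le b\}\I\{\xi^C\}] + \E[S_{k,i}^2\I\{S_{k,i}^2> b\}\I\{\xi^C\}].
\]
The first summand is at most $b\,\Prob[\xi^C]\le 2nK\de\,b$, using $\Prob[\xi]>1-2nK\de$ from Lemma~\ref{l:event-B-AS}. For the second summand I would drop $\I\{\xi^C\}\le 1$ and reuse the tail computation from the proof of Lemma~\ref{l:event-B-AS}: starting from $\E[S^2\I\{S^2>b\}]=\int_b^\infty\Prob[S^2>\ep]\,d\ep + b\,\Prob[S^2>b]$ and the sub-Gaussian bound $\Prob[S_{k,i}^2>\ep]\le c_2\exp(-\ep/c_1)$, one obtains $\E[S_{k,i}^2\I\{S_{k,i}^2>b\}] \le c_2 e^{-b/c_1}(c_1+b)$.

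The key step is choosing $b$ to balance the two contributions: set $c_2 e^{-b/c_1}=2nK\de$, i.e.\ $b=c_1\log\big(c_2/(2nK\de)\big)$, which is positive precisely because of the hypothesis $2nK\de<c_2$. With this choice both summands become multiples of $2nK\de$, giving
\[
\E[S_{k,i}^2\I\{\xi^C\}] \leq 2nK\de\,c_1\big(1+2\log(c_2/(2nK\de))\big).
\]
Summing over $i=1,\dots,n$, carrying the factor $\tfrac12$ from the pathwise bound, and using $1+2\log(\cdot)\le 2(1+\log(\cdot))$ (valid since $\log(c_2/(2nK\de))\ge 0$ under $2nK\de<c_2$) yields the claimed bound $2c_1 n^2 K\de\,(1+\log(c_2/(2nK\de)))$.

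The main obstacle is recognising that a direct Cauchy--Schwarz split of $\E[S_{k,i}^2\I\{\xi^C\}]$ into $\sqrt{\E[S_{k,i}^4]}\,\sqrt{\Prob[\xi^C]}$ is too lossy, producing a $\sqrt{\de}$ rather than the required $\de$ dependence. The truncation-at-$b$ argument with the balanced threshold is exactly what recovers the correct order, and the nonstandard $\log(c_2/(2nK\de))$ appearing in the statement is the signature of this optimized choice rather than the simpler $\log(c_2/\de)$ used in Lemma~\ref{l:event-B-AS}.
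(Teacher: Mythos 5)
Your proof is correct and follows essentially the same route as the paper's: both reduce the loss to $\sum_i \E[S_{k,i}^2\I\{\xi^C\}]$ and then truncate each squared deviation at the balanced threshold $b=c_1\log(c_2/(2nK\de))$, charging the small part to $\Prob[\xi^C]\le 2nK\de$ and the tail to the sub-Gaussian bound. The only cosmetic differences are that the paper performs the split via the layer-cake integral on $S_{k,i}^2\I\{\xi^C\}$ (avoiding the extra $b\,\Prob[S^2>b]$ term), while you use an explicit indicator decomposition and recover the same constant by exploiting $T_{k,n}\ge 2$.
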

\begin{proof}
Since the arms have sub-Gaussian distribution, for any $1\leq k\leq K$ and $1\leq t\leq n$, we have
\begin{equation*}
\Prob\big[(X_{k,t}-\mu_k)^2 \geq \ep\big] \leq c_2 \exp(-\ep/c_1)\;,
\end{equation*}
and thus since $c_2>2nK\de$, we obtain 
\begin{equation*}
\Prob\big[(X_{k,t}-\mu_k)^2 \geq c_1 \log(c_2/2nK\de)\big] \leq 2nK\de\;.
\end{equation*}
Since $\Prob[\xi^C] \leq 2nK\de$, the previous equation implies, using $c_2/(2nK\de)>1$
\begin{align*}
\E\big[(X_{k,t}-\mu_k)^2\ind{\xi^C}\big] &= \int_{0}^{\infty} \Prob\big[(X_{k,t}-\mu_k)^2 \ind{\xi^C} > \ep\big] d\ep\\
&\leq \int_{c_1 \log(c_2/2nK\de)}^{\infty} c_2 \exp(-\ep/c_1) d\ep + c_1 \log(c_2/2nK\de) \Prob[\xi^C]\\
&\leq 2c_1nK\de (1 + \log(c_2/2nK\de))\;.
\end{align*}
The claim follows from the fact that $\E\big[(\hmu_{k,n}-\mu_k)^2 \I\{\xi^C\}\big]\leq  \sum_{t=1}^{n}\E\big[(X_{k,n}-\mu_k)^2 \I\{\xi^C\}\big] \leq  2c_1n^2K\de (1 + \log(c_2/2nK\de))$.
\end{proof}

\subsubsection{Other Technical Inequalities}\label{app:tr}

At first let us write, for the sake of convenience,

\begin{small}
\begin{align*}
B = 16Ka\sqrt{\log(2/\de)} \Bigg(\sqrt{\Sigma} + \frac{2a\sqrt{\log(2/\de)}}{c(\de)} \Bigg)\;\hspace{0.5cm} \operatorname{and} \hspace{0.5cm} C = 64\sqrt{2}K^{3/2}a^2 \frac{\log(2/\de)}{\sqrt{c(\de)}}.
\end{align*}
\end{small}

\paragraph{Upper and lower bound on $a$} If $\de = n^{-7/2}$, with $n \geq 5K \geq 10$ and $c_2 \geq 1$
\begin{align*}
a &= 2\sqrt{c_1 \log(c_2/\de)} + \frac{\sqrt{c_1 \de (1+c_2 +\log(c_2/\de))}}{(1-\de)\sqrt{2\log(2/\de)}}n^{1/2}\\
&\leq \sqrt{14c_1 (c_2+1) \log(n)} + \frac{2}{n^{5/4}}\sqrt{c_1 (1+c_2)} \leq \sqrt{15c_1 (c_2+1) \log(n)}\\
 &\leq 4\sqrt{c_1 (c_2+1) \log(n)}.
\end{align*}

We also have by just keeping the first term, since $c_2 \geq 1$
\begin{align*}
a &= 2\sqrt{c_1 \log(c_2/\de)} + \frac{\sqrt{c_1 \de (1+c_2 +\log(c_2/\de))}}{(1-\de)\sqrt{2\log(2/\de)}}n^{1/2}\geq 2\sqrt{c_1} \geq \sqrt{c_1}.
\end{align*}

\paragraph{Lower bound on $c(\de)$ when $\de = n^{-7/2}$}

See Lemma~\ref{l:b-upper-lower} for the definition of $c(\de)$. Using the fact that the arms have sub-Gaussian distribution we showed in Equation~\eqref{eq:variance-ub} that $\sigma_k^2 \leq c_1 c_2$, then we also have $\Sigma \leq Kc_1c_2$. If $\de = n^{-7/2}$, we obtain by using the previous lower bound on $a$ that

\begin{small}
\begin{align*}
c(\de = n^{-7/2}) &= \frac{a \sqrt{3\log(2/\de)} }{\sqrt{3K}\Big(\sqrt{\Sigma/3} + a \sqrt{3\log(2/\de)}\Big)}= \frac{1}{\sqrt{3K}} \Bigg(1-\frac{\sqrt{\Sigma/3}}{\sqrt{\Sigma/3} + a\sqrt{\log 2/\de}} \Bigg)\\
&\geq \frac{1}{\sqrt{3K}} \Bigg(1-\frac{\sqrt{\Sigma/3}}{\sqrt{\Sigma/3} + \sqrt{c_1 \log 2/\de}} \Bigg)\geq \frac{1}{\sqrt{3K}} \Bigg(1-\frac{\sqrt{\Sigma/3}}{\sqrt{\Sigma/3} + \sqrt{c_1}} \Bigg)\geq \frac{1}{\sqrt{K}} \Bigg(\frac{1}{\sqrt{Kc_2} + \sqrt{3}} \Bigg)
\end{align*}
\end{small}

\noindent by using $\Sigma \leq Kc_2c_1$ for the last step.

\paragraph{Upper bound on the loss outside $\xi$ when $\de = n^{-7/2}$}


We get from Lemma~\ref{l:some-lemma} when $\de = n^{-7/2}$, when $c_2 \geq 1$ and when $n\geq 5K$ that

\begin{align*}
\E\big[(\hmu_{k,n}-\mu_k)^2 \ind{\xi^C}\big] &\leq 2c_1n^2K\de \Big(1 + \log\big(\frac{c_2}{2nK\de}\big)\Big)\leq 2c_1Kn^{-3/2}\Big(1+(c_2+1)\log\big(\frac{n^{5/2}}{2K}\big)\Big)\\
&\leq 2c_1Kn^{-3/2}\big(1+\frac{5}{2}(c_2+1)\log(n)\big)\leq 7c_1K(c_2+1)\log(n)n^{-3/2}.
\end{align*}

\paragraph{Upper bound on $B$ for $\de = n^{-7/2}$}

See the proof of Theorem~\ref{thm:b-regret} for the definition of $B$ (the notation $B$ we use in this section is for technical purposes and has nothing to do with the $B$ introduced in the proofs for algorithm CH-AS). When $\de = n^{-7/2}$, when $c_2 \geq 1$ and when $n\geq 5K\geq 10$,

\begin{align*}
 B &= 16Ka\sqrt{\log(2/\de)} \Bigg(\sqrt{\Sigma} + \frac{2a\sqrt{\log(2/\de)}}{c(\de)} \Bigg)\\
&\leq 16Ka\sqrt{7/2\log(2n)} \Big(\sqrt{\Sigma} + 2\sqrt{K} (\sqrt{\Sigma} + 3a \sqrt{7/2 \log(2n)}) \Big)\\
&\leq 16Ka\sqrt{7/2\log(2n)} \Big(\sqrt{\Sigma} + 2\sqrt{K\Sigma} + 12\sqrt{K} \sqrt{c_1 (c_2+1) 7 \log(n)\log(2n)} \Big)\\
&\leq 16Ka\sqrt{7/2\log(2n)} \Big(3K\sqrt{c_1c_2} + 45\sqrt{K} \sqrt{c_1 (c_2+1)}\log(n) \Big)\\
&\leq 32K\sqrt{14c_1(c_2+1)\log n \log(2n)} \Big(48K\sqrt{c_1(c_2+1)}\log(n) \Big)\\
&\leq 8\times10^3 K^2 c_1(c_2+1)\log^2(n).
\end{align*}

\paragraph{Upper bound on $C$ for $\de = n^{-7/2}$}

See the proof of Theorem~\ref{thm:b-regret} for the definition of $C$. When $\de = n^{-7/2}$, when $c_2 \geq 1$ and when $n\geq 5K\geq 10$,

\begin{align*}
C&= 64\sqrt{2}K^{3/2}a^2 \frac{\log(2/\de)}{\sqrt{c(\de)}}= 64\sqrt{2} K^{3/2} \frac{a^2 \log (2/\de)}{\sqrt{a} (3\log (2/\de))^{1/4}} K^{1/4} (\sqrt{\Sigma} + 3a\sqrt{\log (2/\de)})^{1/2}\\
&\leq 64\sqrt{2} K^{3/2} a^{3/2} (\log (2/\de))^{3/4} \frac{1}{3^{1/4}} K^{1/4} (\sqrt{Kc_1c_2} + 12\sqrt{c_1(c_2+1) \log n} \sqrt{7 \log n})^{1/2}\\
&\leq 128\sqrt{2} \frac{1}{3^{1/4}} K^{7/4} (2\sqrt{2c_1(c_2+1)\log n})^{3/2} (7\log n)^{3/4}  \sqrt{24}K^{1/4} (c_1(c_2+1))^{1/4} \sqrt{\log n} \\
&\leq 14\times 10^3 K^2 c_1(c_2+1) \log^2(n).
\end{align*}

We are now ready to prove Theorem~\ref{thm:b-regret}.

%

\begin{proof} [Proof of Theorem~\ref{thm:b-regret}]

\noindent Equation~\eqref{eq:almost-lower-bound} becomes using the constants $B,C$ that we introduced

\begin{small}
\begin{equation}\label{eq:allowb.mod}
 \frac{\var_p}{T_{p,n}}(n-2K) \leq \Sigma + \frac{B}{\sqrt{n}} + \frac{C}{n^{3/4}}.
\end{equation}
\end{small}

\noindent We also have the upper bound in Lemma~\ref{l:b-upper-lower} which can be rewritten:

\begin{small}
\begin{equation*}
 T_{p,n} \leq T_{p,n}^* + \frac{B}{\Sigma}\sqrt{n} + \frac{C}{\Sigma}n^{1/4} + 2K.
\end{equation*}
\end{small}

\noindent Note that because this upper bound holds on an event of probability bigger than $1-4nK\de$ and also because $T_{p,n}$ is bounded by $n$ anyways, we can convert the former upper bound in a bound in expectation:

\begin{small}
\begin{equation} \label{eq:exp.bound.mod}
 \E[T_{p,n}] \leq T_{p,n}^* +  \frac{B}{\Sigma}\sqrt{n} + \frac{C}{\Sigma}n^{1/4} + 2K + n \times 4nK\de.
\end{equation}
\end{small}

\noindent We recall that the loss of any arm $k$ is decomposed in two parts as follows:

\begin{small}
\begin{equation*}
L_{k,n} = \E[(\hmu_{k,n}-\mu_k)^2 \ind{\xi}] + \E[(\hmu_{k,n}-\mu_k)^2 \mathbb I\{\xi^C\}].
\end{equation*}
\end{small}

\noindent By combining the fact that $T_{k,n}$ is again a stopping time with Equations \ref{eq:allowb.mod}, \ref{eq:exp.bound.mod}, and \ref{eq:wald-inequality} (as done in Equation~\eqref{eq:st2.1}), and since $n-2K>0$, we obtain for the first part of the loss:

\begin{small}
\begin{align*}
&\E[(\hmu_{k,n}-\mu_k)^2 \ind{\xi}] \\
\leq& \frac{1}{\var_k(n-2K)^2} \Big( \Sigma + \frac{B}{\sqrt{n}} + \frac{C}{n^{3/4}} \Big)^2 \Big( T_{k,n}^*+ \frac{B}{\Sigma}\sqrt{n} + \frac{C}{\Sigma}n^{1/4} + 2K + 4n^2K\de \Big) \\
\leq& \frac{1}{(n-2K)^2} \Bigg( \Sigma^2 +2\Sigma(\frac{B}{\sqrt{n}} + \frac{C}{n^{3/4}}) + \frac{(B+C)^2}{n} \Bigg) \Big( \frac{n}{\Sigma}+ \frac{B}{\Sigma^2\lambda_k}\sqrt{n} + \frac{C}{\Sigma^2\lambda_k}n^{1/4}+\frac{2K}{\Sigma\lambda_k} + \frac{4n^2K\de}{\Sigma\lambda_k} \Big)\\
\leq& \frac{1}{(n-2K)^2}\Bigg(n\Sigma + \frac{B}{\lambda_k}\sqrt{n} + \frac{C + 2K\Sigma}{\lambda_k}n^{1/4}+  \frac{4n^2K\Sigma\de}{\lambda_k} + 2B\sqrt{n} + 2Cn^{1/4}\\
 &+ \frac{2 (B+C)(\frac{B}{\Sigma} + \frac{C}{\Sigma} + 2K)}{\lambda_k} + \frac{8(B+C)n^{3/2}K\de}{\lambda_k} + (B+C)^2 \Big(\frac{1}{\Sigma} + \frac{B+C}{\Sigma^2 \lambda_k} + \frac{2K}{\Sigma \lambda_k} \Big) +  4nK\de\frac{(B+C)^2}{\Sigma\lambda_k}\Bigg)\\
=& \frac{1}{(n-2K)^2}\Bigg(n\Sigma + (\frac{B}{\lambda_k} + 2B)\sqrt{n} + (\frac{C + 2K\Sigma}{\lambda_k} + 2C)n^{1/4}\\ 
&+  \frac{2 (B+C)(\frac{B+C}{\Sigma}  + 2K)}{\lambda_k} + (B+C)^2 \Big(\frac{1}{\Sigma} + \frac{B+C}{\Sigma^2 \lambda_k} + \frac{2K}{\Sigma \lambda_k} \Big)\\
 &+ \frac{4n^2K\Sigma\de}{\lambda_k}  + \frac{8(B+C)n^{3/2}K\de}{\lambda_k} +  4nK\de\frac{(B+C)^2}{\Sigma\lambda_k}\Bigg)\\
\leq& \frac{1}{(n-2K)^2}\Bigg(n\Sigma + \frac{3B}{\lambda_k}\sqrt{n} + \frac{3C + 2K\Sigma}{\lambda_k}n^{1/4}\\ 
&+  \frac{K (B+C)^3}{\lambda_k} \Big(\frac{2}{K\Sigma(B+C)} + \frac{4}{(B+C)^2} + \frac{\lambda_k}{K\Sigma(B+C)} + \frac{1}{\Sigma^2K} + \frac{2}{\Sigma (B+C)} \Big)\\
 &+  \frac{4\de n^2K}{\lambda_k}\Big( \Sigma + 2(B+C) +  \frac{(B+C)^2}{\Sigma}\Big) \Bigg),
\end{align*}
\end{small}
and since $B+C \geq 2$ for $\de=n^{-7/2}$, $n\ge 16K/3 \geq 8$, it implies
\begin{small}
\begin{align*}
&\E[(\hmu_{k,n}-\mu_k)^2 \ind{\xi}] \\
\leq& \frac{1}{(n-2K)^2}\Bigg(n\Sigma + \frac{3B}{\lambda_k}\sqrt{n} + \frac{3C + 2K\Sigma}{\lambda_k}n^{1/4} +  \frac{K (B+C)^3}{\lambda_k} \Big(\frac{1}{2\Sigma} + 1 + \frac{1}{8\Sigma} + \frac{1}{2\Sigma^2} + \frac{1}{\Sigma} \Big)\\
 &+  \frac{4\de n^2K}{\lambda_k}\Big( \Sigma + 2(B+C) +  \frac{(B+C)^2}{\Sigma}\Big) \Bigg)\\
\leq& \frac{1}{(n-2K)^2}\Bigg(n\Sigma + \frac{3B}{\lambda_k}\sqrt{n} + \frac{3C + 2K\Sigma}{\lambda_k}n^{1/4} +  \frac{K (B+C)^3}{\lambda_k} \Big(\frac{1}{2\Sigma^2} + \frac{13}{8}\Sigma + 1 \Big)\\
 &+  \frac{4\de n^2K}{\lambda_k}\Big( \Sigma + 2(B+C) +  \frac{(B+C)^2}{\Sigma}\Big) \Bigg)\\
\leq& \frac{1}{(n-2K)^2}\Bigg(n\Sigma + \frac{3B}{\lambda_k}\sqrt{n} + \frac{3C + 2K\Sigma}{\lambda_k}n^{1/4}+ K \frac{(B+C)^3}{\lambda_k}(\frac{1}{\Sigma^2} +8) \\
 &+  \frac{4\de n^2K}{\lambda_k}\Big( \Sigma + 2(B+C) +  \frac{(B+C)^2}{\Sigma}\Big)\Bigg).
\end{align*}
\end{small}

\noindent Now note that, as $\de=n^{-7/2}$ and $n\ge 4K$
\begin{small}
\begin{align*}
\E[(\hmu_{k,n}-\mu_k)^2& \ind{\xi}]  \leq \frac{1}{(n-2K)^2}\Bigg(n\Sigma + \frac{3B}{\lambda_k}\sqrt{n} + \frac{3C + 2K\Sigma}{\lambda_k}n^{1/4}+  K \frac{(B+C)^3}{\lambda_k}(\frac{1}{\Sigma^2} +8) +  \frac{4K \Sigma}{n^{3/2} \lambda_k}\Big( 1+ \frac{B+C}{\Sigma}\Big)^2\Bigg)\\
&\leq \Bigg(\frac{1}{n^2} + \frac{8K}{n^3}\Bigg)\Bigg(n\Sigma + \frac{3B}{\lambda_k}\sqrt{n} + \frac{3C + 2K\Sigma}{\lambda_k}n^{1/4}+   K \frac{(B+C)^3}{\lambda_k}(\frac{1}{\Sigma^2} +8) +  \frac{8K \Sigma}{n^{3/2} \lambda_k}(B+C)^2( 1+ \frac{1}{\Sigma^2})\Bigg)\\
&\leq \frac{\Sigma}{n} + \frac{8K\Sigma}{n^2} + \frac{3}{n^2} \Bigg(\frac{3B}{\lambda_k}\sqrt{n} + \frac{3C + 2K\Sigma}{\lambda_k}n^{1/4}+   K \frac{(B+C)^3}{\lambda_k}(\frac{1}{\Sigma^2} +8) +  \frac{8K \Sigma}{n^{3/2} \lambda_k}(B+C)^2( 1+ \frac{1}{\Sigma^2})\Bigg) \\
&\leq \frac{\Sigma}{n} + \frac{9B}{n^{3/2}\lambda_k} + \frac{8K\Sigma}{n^2} + \frac{3}{n^{7/4}\lambda_k} \Bigg( 3C + 2K\Sigma+  K (B+C)^3(1+\Sigma)(\frac{1}{\Sigma^2} +8)\Bigg) \\
&\leq \frac{\Sigma}{n} + \frac{9B}{n^{3/2}\lambda_k} + \frac{8K\Sigma}{n^2} + \frac{3}{n^{7/4}\lambda_k} \Bigg(K (B+C)^3(1+\Sigma)(\frac{1}{\Sigma^2} +13)\Bigg) \\
&\leq \frac{\Sigma}{n} + \frac{9B}{n^{3/2}\lambda_{\min}} + 3 K (B+C)^3(1+\Sigma)(\frac{1}{\Sigma^2} +21)\frac{1}{n^{7/4}\lambda_{\min}} 
\end{align*}
\end{small}
again since $B+C \geq 1$.

\noindent Finally, combining that with Lemma~\ref{l:some-lemma} gives us for the regret:

\begin{small}
\begin{equation*}
 R_n(\alg_B) \leq \frac{9B}{n^{3/2}\lambda_{\min}} + 3 K \frac{(B+C)^3}{n^{7/4}\lambda_{\min}}(\frac{1}{\Sigma^2} +21)(1 + \Sigma) + 2c_1n^2K\de (1 + \log(c_2/2nK\de)).
\end{equation*}
\end{small}

\noindent By taking $\de =  n^{-7/2}$ and recalling the bounds on $B$ and $C$ in \ref{app:tr}, we obtain:

\begin{small}
\begin{align*}
 R_n(\alg_B) &\leq  \frac{9B}{n^{3/2}\lambda_{\min}} + 3 K \frac{(B+C)^3}{n^{7/4}\lambda_{\min}}(\frac{1}{\Sigma^2} +21)(1 + \Sigma)+ 7c_1(c_2+1)K\log(n) n^{-3/2}\\
&\leq  \frac{76400 c_1(c_2+1) K^2\log(n)^2}{\lambda_{\min}n^{3/2}} + O\Big(\frac{\log(n)^6 K^7}{n^{7/4}\lambda_{\min}}\Big).
\end{align*}
\end{small}

\end{proof}


\section{Regret Bound for Gaussian Distributions}\label{s:b-results-gauss}

Here we report the proof of Lemma~\ref{l:gauss-regret} which implies that when the distributions of the arms are Gaussian, bounding the regret of the B-AS algorithm does not require upper-bounding the number of pulls $T_{k,n}$ (it can be bounded only by using a lower bound on the number of pulls).

Let $\{X_t\}_{t\geq 1}$ be a sequence of i.i.d.~random variables drawn from a Gaussian distribution $\N(\mu,\sigma^2)$. Write $\hat m_t=\frac{1}{t}\sum_{i=1}^t X_i$ and $\hat s_t^2=\frac{1}{t-1}\sum_{i=1}^t (X_i-\hat m_t)^2$ for the empirical mean and variance of the first $t$ samples. 

Before proving Lemma~\ref{l:gauss-regret}, we recall a property of the normal distribution (see e.g.,~\cite{bremaud1988}).
\begin{proposition}\label{p:normaldistr}
Let $X_1,\ldots,X_t$ be $t$ i.i.d.~Gaussian random variables. Then their empirical mean $\hat m_t=\frac{1}{t}\sum_{i=1}^t X_i$ and empirical variance $\hat s_t^2=\frac{1}{t-1}\sum_{i=1}^t (X_i-\hat m_t)^2$ are independent of each other.
\end{proposition}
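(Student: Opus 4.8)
The plan is to reduce the statement to the classical principle that, within a jointly Gaussian family, a scalar and a vector are independent as soon as every cross-covariance between them vanishes. The target decomposition is to pair the sample mean $\hat m_t$ with the \emph{full} residual vector $(X_1-\hat m_t,\dots,X_t-\hat m_t)$, and then to observe that $\hat s_t^2$ is merely a (Borel) function of that residual vector.

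First I would record that $(X_1,\dots,X_t)$ is jointly Gaussian with mean $\mu\mathbf 1$ and covariance $\sigma^2 I_t$, and that the augmented vector $\big(\hat m_t,\,X_1-\hat m_t,\dots,X_t-\hat m_t\big)$ is the image of $(X_1,\dots,X_t)$ under a fixed linear map, hence is itself jointly Gaussian. This joint normality is exactly what licenses the passage from \emph{uncorrelated} to \emph{independent} in the next step.

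The key computation is then the cross-covariance between the mean and each residual. Using $\mathrm{Cov}(\hat m_t,X_i)=\tfrac1t\sum_{j}\mathrm{Cov}(X_j,X_i)=\sigma^2/t$ and $\mathrm{Var}(\hat m_t)=\sigma^2/t$, one obtains
\[
\mathrm{Cov}\big(\hat m_t,\,X_i-\hat m_t\big)=\frac{\sigma^2}{t}-\frac{\sigma^2}{t}=0
\qquad(1\le i\le t).
\]
Since $\hat m_t$ and the residual vector are jointly Gaussian with all cross-covariances equal to zero, they are independent. Because $\hat s_t^2=\frac1{t-1}\sum_{i=1}^t (X_i-\hat m_t)^2$ is a deterministic function of the residual vector alone, it follows that $\hat s_t^2$ is independent of $\hat m_t$, which is the claim.

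There is no serious obstacle here: this is a textbook fact, and an equivalent route is the Helmert (orthonormal) transformation, which maps the standardized sample to a standard Gaussian vector whose first coordinate carries $\hat m_t$ and whose remaining $t-1$ coordinates carry $(t-1)\hat s_t^2/\sigma^2$, the two blocks being independent by construction. The only point demanding care is that vanishing covariance yields independence \emph{only} because the mean and the residuals are jointly Gaussian (being a single linear image of a Gaussian vector); this is why I would first establish the joint normality of the augmented vector explicitly rather than reason about the covariances in isolation.
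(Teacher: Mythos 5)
Your proof is correct and complete. Note that the paper itself does not prove this proposition at all: it is stated as a recalled classical fact with a citation to a textbook (Br\'emaud), so there is no in-paper argument to compare against. Your route --- establishing that the augmented vector $\bigl(\hat m_t,\,X_1-\hat m_t,\dots,X_t-\hat m_t\bigr)$ is a single linear image of a Gaussian vector (hence jointly Gaussian), computing $\mathrm{Cov}(\hat m_t,\,X_i-\hat m_t)=\sigma^2/t-\sigma^2/t=0$ for every $i$, invoking the uncorrelated-implies-independent property for jointly Gaussian blocks, and finishing by noting that $\hat s_t^2$ is a Borel function of the residual vector --- is the standard argument and is carried out with the one point that genuinely needs care (joint normality must be established \emph{before} passing from zero covariance to independence) handled explicitly. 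The Helmert-transformation alternative you mention is equally valid and additionally identifies the law of $(t-1)\hat s_t^2/\sigma^2$ as chi-squared, which is more than the proposition requires.
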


Based only on the well-known $t=2$ case (i.e., that $X_1+X_2$ and
$|X_1-X_2|$ are independent), we can derive a somewhat stronger result that is
used in the proof of Lemma~\ref{l:gauss-regret}, showing that for Gaussian distributions, the
empirical mean $\hat m_t$ built on t i.i.d.~samples is independent from the
sequence of standard deviations $(\hat s_2,..., \hat s_t)$ (not only from $\hat s_t^2$).

We first derive a general result showing that for Gaussian distributions, the empirical mean $\hat m_t$ built on $t$ i.i.d. samples is independent from the sequence of standard deviations $\hat s_2,\dots,\hat s_t$.

\begin{lemma}\label{lem:gauss.law}
Let $\F_t$ be the $\sigma$-algebra generated by the sequence of random variables $\hat s_2,\dots,\hat s_t$. Then for all $t\geq 2$,
\beqan
\hat m_t \big| \F_t \sim \N\Big(\mu, \frac{\si^2}{t}\Big).
\eeqan
\end{lemma}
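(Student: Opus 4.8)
The plan is to prove the statement by induction on $t$, exploiting the sequential structure of the statistics $(\hat m_t, \hat s_t)$ and using the classical two-sample fact as the base case. Note first that the assertion $\hat m_t \mid \F_t \sim \N(\mu, \sigma^2/t)$ is equivalent to saying that $\hat m_t$ is \emph{independent} of $\F_t = \sigma(\hat s_2,\dots,\hat s_t)$ and has marginal $\N(\mu,\sigma^2/t)$; I would work with this independence-plus-marginal formulation throughout. For the base case $t=2$ I would invoke the well-known fact that $\hat m_2 = (X_1+X_2)/2$ and $\hat s_2 = |X_1-X_2|/\sqrt 2$, being uncorrelated jointly Gaussian linear combinations of $(X_1,X_2)$, are independent; since $\hat m_2 \sim \N(\mu,\sigma^2/2)$, conditioning on $\sigma(\hat s_2)$ leaves its law unchanged.

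For the inductive step, assume $\hat m_t \mid \F_t \sim \N(\mu,\sigma^2/t)$ and introduce the update relations $\hat m_{t+1} = \frac{t}{t+1}\hat m_t + \frac{1}{t+1}X_{t+1}$ and the innovation $D_{t+1} := X_{t+1} - \hat m_t$. Since $X_{t+1}$ is independent of $(X_1,\dots,X_t)$, conditionally on $\F_t$ the pair $(\hat m_t, X_{t+1})$ is jointly Gaussian with independent coordinates $\N(\mu,\sigma^2/t)$ and $\N(\mu,\sigma^2)$ (the inductive hypothesis supplies the conditional law of $\hat m_t$, and $X_{t+1}$ is unaffected by conditioning on $\F_t$). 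Hence $\hat m_{t+1}$ and $D_{t+1}$, being linear combinations, are conditionally jointly Gaussian, and the direct computation $\mathrm{Cov}(\hat m_{t+1}, D_{t+1}\mid\F_t) = -\tfrac{\sigma^2}{t+1}+\tfrac{\sigma^2}{t+1} = 0$ shows that $\hat m_{t+1}$ and $D_{t+1}$ are conditionally independent given $\F_t$, with $\hat m_{t+1}\mid\F_t \sim \N(\mu,\sigma^2/(t+1))$.

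The remaining step, which I expect to be the main obstacle, is the measurability bookkeeping that lets one pass from $\F_t$ to $\F_{t+1}$. Writing $S_t = (t-1)\hat s_t^2$, the Welford recursion $S_{t+1} = S_t + \frac{t}{t+1}D_{t+1}^2$ shows that $|D_{t+1}|$ is a deterministic function of $(\hat s_t,\hat s_{t+1})$, while conversely $\hat s_{t+1}$ is determined by $(\hat s_t, |D_{t+1}|)$; therefore $\F_{t+1} = \sigma(\F_t, |D_{t+1}|)$. Because $\hat m_{t+1}$ is conditionally independent of $D_{t+1}$ given $\F_t$, it is also conditionally independent of $|D_{t+1}|$, so adjoining $|D_{t+1}|$ to the conditioning does not alter its conditional law. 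This yields $\hat m_{t+1}\mid\F_{t+1} \sim \N(\mu,\sigma^2/(t+1))$ and closes the induction.

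Alternatively, one can bypass the induction entirely using the Helmert orthogonal transformation $Y = HX$, whose first row is proportional to $\mathbf 1$: the coordinates $Y_1 = \sqrt{t}\,\hat m_t, Y_2, \dots, Y_t$ are then independent Gaussians, and since $(t-1)\hat s_t^2 = \sum_{j=2}^t Y_j^2$ every $\hat s_j$ is a function of $(Y_2,\dots,Y_t)$ alone. Independence of $Y_1$ from $(Y_2,\dots,Y_t)$ immediately gives independence of $\hat m_t$ from $\F_t$, establishing the claim in one stroke; I would mention this as a cleaner route but keep the inductive argument as the primary proof since it is the one that flows directly from the two-sample case.
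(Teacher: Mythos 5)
Your primary argument is essentially the paper's own proof: the same induction from the $t=2$ case, the same innovation $X_{t+1}-\hat m_t$ with the zero-conditional-covariance computation, and the same use of the variance recursion to identify $\F_{t+1}$ with $\sigma(\F_t,|X_{t+1}-\hat m_t|)$ (the paper phrases this via an auxiliary $\sigma$-algebra $\G_t$ generated by $\hat s_2^2$ and the squared innovations, but it is the same bookkeeping). The Helmert-transformation aside is a correct and cleaner one-shot alternative that the paper does not use, but since you keep the induction as the main proof, the approach matches.
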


To prove Lemma~\ref{lem:gauss.law}, we need the following technical lemma:
\begin{lemma}\label{lem:expr:si}
We have
 \beqan
 \hat s_{t+1}^2 &=& \frac{t-1}{t} \hat s_{t}^2 + \frac{1}{t+1} (X_{t+1} - \hat m_t)^2.
 \eeqan
Note that this statement is deterministic, it holds for any process or sequence.
\end{lemma}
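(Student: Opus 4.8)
The plan is to prove the identity purely algebraically, as the standard recursive (Welford-type) update for the sum of squares; note that the statement is deterministic, so no Gaussian or i.i.d.\ assumption enters. First I would pass from the normalized variances to the unnormalized sum of squares $S_t = \sum_{i=1}^t (X_i - \hat m_t)^2$, so that $S_t = (t-1)\hat s_t^2$ and $S_{t+1} = t\,\hat s_{t+1}^2$. After multiplying the claim through by $t$, it becomes equivalent to the recursion
\[
S_{t+1} = S_t + \frac{t}{t+1}\,(X_{t+1}-\hat m_t)^2,
\]
which is what I would target.

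The next ingredient is the mean update. From $(t+1)\hat m_{t+1} = t\,\hat m_t + X_{t+1}$ I would record that $\hat m_{t+1} - \hat m_t = \frac{1}{t+1}(X_{t+1}-\hat m_t)$. Then I would split off the last sample, $S_{t+1} = \sum_{i=1}^t (X_i - \hat m_{t+1})^2 + (X_{t+1}-\hat m_{t+1})^2$, and handle the two pieces separately. For the first sum I would write $X_i - \hat m_{t+1} = (X_i - \hat m_t) - (\hat m_{t+1}-\hat m_t)$ and expand the square; the cross term vanishes because $\sum_{i=1}^t (X_i - \hat m_t) = 0$, leaving $S_t + t(\hat m_{t+1}-\hat m_t)^2 = S_t + \frac{t}{(t+1)^2}(X_{t+1}-\hat m_t)^2$.

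For the residual term I would use $X_{t+1} - \hat m_{t+1} = (X_{t+1}-\hat m_t) - \frac{1}{t+1}(X_{t+1}-\hat m_t) = \frac{t}{t+1}(X_{t+1}-\hat m_t)$, so that $(X_{t+1}-\hat m_{t+1})^2 = \frac{t^2}{(t+1)^2}(X_{t+1}-\hat m_t)^2$. Adding the two contributions gives $\frac{t + t^2}{(t+1)^2}(X_{t+1}-\hat m_t)^2 = \frac{t}{t+1}(X_{t+1}-\hat m_t)^2$, which is exactly the claimed recursion for $S_{t+1}$; dividing by $t$ and substituting $S_t=(t-1)\hat s_t^2$, $S_{t+1}=t\,\hat s_{t+1}^2$ yields the lemma. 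There is no genuine obstacle here: the computation is elementary, and the only point requiring care is keeping the two normalizations straight (the $t-1$ divisor in $\hat s_t^2$ versus the $t$ divisor in $\hat s_{t+1}^2$) so that the $\frac{t-1}{t}$ prefactor emerges correctly.
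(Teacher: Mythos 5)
Your proof is correct and follows essentially the same route as the paper's: split off the $(t+1)$-th term, re-center the first $t$ summands around $\hat m_t$ so the cross term vanishes, and use the mean-update identity $\hat m_{t+1}-\hat m_t=\frac{1}{t+1}(X_{t+1}-\hat m_t)$ to collect the coefficients $\frac{t}{(t+1)^2}+\frac{t^2}{(t+1)^2}=\frac{t}{t+1}$. The only (cosmetic) difference is that you work with the unnormalized sum of squares $S_t=(t-1)\hat s_t^2$ while the paper carries the $\frac{1}{t}$ normalization through the computation directly.
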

\begin{proof}

We have for $t\geq 2$
 \beqan
 \hat s_{t+1}^2 &=& \frac{1}{t} \sum_{i=1}^{t+1}(X_i - \hat m_{t+1})^2 \\
&=& \frac{1}{t} \sum_{i=1}^{t}(X_i - \hat m_{t+1} + \hat m_t - \hat m_t)^2 + \frac{1}{t} (X_{t+1} - \hat m_{t+1})^2\\
&=& \frac{1}{t} \sum_{i=1}^{t}(X_i - \hat m_t)^2 + \frac{1}{t} (X_{t+1} - \hat m_{t+1})^2 + (\hat m_t - \hat m_{t+1})^2\\
&=& \frac{1}{t} \sum_{i=1}^{t}(X_i - \hat m_t)^2 + \frac{t}{(t+1)^2} (X_{t+1} - \hat m_{t})^2 + \frac{1}{(t+1)^2} (X_{t+1} - \hat m_t)^2\\
&=& \frac{1}{t} \sum_{i=1}^{t}(X_i - \hat m_t)^2 + \frac{1}{t+1} (X_{t+1} - \hat m_{t})^2,
 \eeqan
which finishes the proof.
\end{proof}

From Lemma~\ref{lem:expr:si} we deduce by induction that for any $t\geq 2$ there exists a sequence of non-negative real numbers $\{a_{1,t}, a_{2,t},\dots,a_{t,t}\}$ such that 
\beqan
 \hat s_{t}^2 &=& a_{1,t} \hat s_2^2 + \sum_{i=2}^{t-1} a_{i,t}(X_{i+1} - \hat m_{i})^2.
\eeqan

\begin{proof}
We prove the statement by induction.

The base of the induction ($t=2$) is directly implied by the specific properties of Gaussian distributions (Proposition~\ref{p:normaldistr}). In fact, $\hat m_2$ is distributed as  $\N(\mu, \si^2 / 2)$ and $\hat m_2$ and $\hat s_2$ are independent.

Now we focus on the inductive step. For any $t\geq 2$, let $\G_t$ be the $\sigma$-algebra generated by the random variables $\hat s_2^2$ and $\{(X_{i+1} - \hat m_i)^2\}_{2\leq i\leq t-1}$. The recursive definition of the empirical variance in Lemma~\ref{lem:expr:si} immediately implies that the knowledge of $\{\hat s_2,\dots,\hat s_t\}$ is equivalent to the knowledge of $\hat s_2^2$ and $\{(X_{i+1} - \hat m_i)^2\}_{2\leq i\leq t-1}$ and thus $\F_t=\G_t$. We assume (inductive hypothesis)
\beq\label{eq:inductive.prop}
\hat m_t \big| \G_t \sim \N\Big(\mu, \frac{\si^2}{t}\Big),
\eeq
and we now show that (\ref{eq:inductive.prop}) also holds for $t+1$. Let $U = X_{t+1} - \hat m_t$ and $V = \hat m_{t+1} - \mu$. Note that $V$ can be written as $V = \frac{t}{t+1}(\hat m_{t} - \mu) + \frac{1}{t+1}(X_{t+1} - \mu)$. Since samples are i.i.d., $X_{t+1}$ is independent from $(X_1, \ldots, X_t)$ and
\beqan
X_{t+1} \big| \G_t \sim \N(\mu, \si^2)
\eeqan
and thus $X_{t+1}$ is also conditionally independent of $\hat m_t$ given $\G_t$. This implies that $X_{t+1}$ and $\hat  m_t$ are jointly Gaussian given $\G_t$ (two random variables that are Gaussian and independent are jointly Gaussian, see~\citep{bookgauss} or also \url{http://en.wikipedia.org/wiki/Multivariate_normal_distribution#Joint_normality}). This fact combined with the definition of $U$ and $V$ implies that $U$ and $V$ are conditionally jointly-Gaussian variables with zero conditional mean given $\G_t$ (they are jointly-Gaussian because they can be written as two independent linear combinations of the random variables $X_{t+1}-\mu$ and $\hat{m}_t-\mu$ given $\G_t$, see~\citep{bookgauss} or also \url{http://en.wikipedia.org/wiki/Multivariate_normal_distribution#Affine_transformation}). Furthermore, we can show that they are also conditionally uncorrelated given $\G_t$ since
\beqan
\E\Big[UV | \G_t\Big] &=&\E\Big[\Big( X_ {t+1} - \hat m_t\Big) \Big( \frac{1}{t+1}X_{t+1} + \frac{t}{t+1}\hat m_t - \mu\Big)  \Big| \G_t\Big]\\
&=& \E\Big[\Big( (X_{t+1}-\mu) - (\hat m_t - \mu)\Big) \Big( \frac{1}{t+1}(X_{t+1}-\mu) + \frac{t}{t+1}(\hat m_t - \mu)\Big)  \Big| \G_t \Big]\\
&=& \frac{1}{t+1}\sigma^2 - \frac{t}{t+1}\frac{\sigma^2}{t}=0.
\eeqan
As a result, $U$ and $V$ are conditionally independent given $\G_t$ and
\beqan
(\hat m_{t+1} - \mu) \big| \G_{t+1} 
= (\hat m_{t+1} - \mu) \big| \{ \G_{t}, (X_{t+1}-\hat m_t)^2 \}
= (\hat m_{t+1} - \mu) \big| \{ \G_{t}, U^2 \}
= V\big| \{ \G_{t}, U^2 \} = V|\G_t.
\eeqan
Since the induction assumption is verified, we know that $\E[V|\G_t] = 0$ and $\V[V|\G_t] = (\frac{t}{t+1})^2 \frac{\sigma^2}{t} + (\frac{1}{t+1})^2\sigma^2= \frac{\sigma^2}{t+1}$. Finally, we deduce that 
$$ \hat m_{t+1} \big| \G_{t+1} \sim \N\Big(\mu,\frac{\sigma^2}{t+1}\Big),$$
which concludes the proof since $\G_{t+1}=\F_{t+1}$.
\end{proof}

We now study an adaptive algorithm that computes the empirical average $\hat m_t$ and that at each time $t$ decides whether to stop collecting samples or not on the basis of the sequence of empirical standard deviations $\hat s_2,\dots,\hat s_t$ observed so far. Let $T\geq 2$ be a integer-valued random variable, which is a stopping time with respect to $\F_t$. This means that the decision of whether to stop at any time before $t+1$ (the event $\{T\leq t\}$) only depends on the previous empirical standard deviations $\hat s_2,\dots,\hat s_t$. From an immediate application of Lemma~\ref{lem:gauss.law} we obtain
\beqan
\E[(\hat m_{T}- \mu)^2]&=&\sum_{t\geq 2} \E[(\hat m_t-\mu)^2|T=t]\Prob[T=t] \\
&=& \sum_{t\geq 2} \E[\E[(\hat m_t-\mu)^2|\F_t, T=t]| T=t] \Prob[T=t] \\
&=& \sum_{t\geq 2} \E[\E[(\hat m_t-\mu)^2|\F_t] | T =t] \Prob[T=t] = \sum_{t\geq 2}  \frac{\sigma^2}{t} \Prob[T=t] 
= \sigma^2 \E\Big[\frac 1T\Big].
\eeqan

The previous result seamlessly extends to the general multi-armed bandit allocation strategies considered in Section~\ref{s:ch-algorithm} and~\ref{s:b-algorithm}.


\begin{proof}[Proof of Lemma~\ref{l:gauss-regret}]

Let us now consider algorithms CH-AS and B-AS. For any arm $k$, the event $\{T_{k,n}> t\}$ depends on the $\sigma$-algebra $\F_{k,t}$ (generated by the sequence of empirical variances of the first $t$ samples of arm $k$) and also on the ``environment'' $\mathcal{E}_{-k}$ (generated by all the samples of other arms). Since the samples of arm $k$ are independent from $\mathcal{E}_{-k}$, we deduce that by conditioning on $\mathcal{E}_{-k}$ Lemma~\ref{lem:gauss.law} still applies and
\beqan
\E[(\hat\mu_{k,n}-\mu)^2] = 
\E_{\mathcal{E}_{-k}} \big[ \E[(\hat\mu_{k,n}-\mu)^2|\mathcal{E}_{-k}]\big] = \sigma_k^2 \E_{\mathcal{E}_{-k}} \Big[  \E\Big[\frac 1{T_{k,n}} | \mathcal{E}_{-k} \Big] \Big]
= \sigma_k^2 \E\Big[\frac 1{T_{k,n}}\Big].
\eeqan

\end{proof}

We now report the proof of Theorem~\ref{thm:b-regret-gauss}.


\begin{proof}[Proof of Theorem~\ref{thm:b-regret-gauss}]
We recall Lemma~\ref{l:gauss-regret} and decompose the loss using the definition of $\xi = \xi_{K,n}^B(\de)$ in order to obtain
\begin{small}
\begin{equation*}
L_{k,n} = \var_k \E\Big[\frac{1}{T_{k,n}}\Big] = \var_k \E\Big[\frac{1}{T_{k,n}} \ind{\xi}\Big] + \var_k \E\Big[\frac{1}{T_{k,n}}  \ind{\xi^c}\Big].
\end{equation*}
\end{small}

\noindent From the bound in Equation~\eqref{eq:allowb.mod}, we have (since $n \geq 5K$)
\begin{small}
\begin{align}\label{eq:binxi}
\var_k \E\Big[\frac{1}{T_{k,n}} \ind{\xi}\Big]   &\leq  \max_{\xi}\Big[\frac{\var_k}{T_{k,n}}\Big] \nonumber\\
&\leq \frac{\Sigma}{n-2K} + \frac{B}{n^{1/2}(n-2K)} +\frac{C}{n^{3/4}(n-2K)}\nonumber\\
&\leq \frac{\Sigma}{n} + \frac{4K\Sigma}{n^2}+ \frac{2B}{n^{3/2}} +\frac{2C}{n^{7/4}}\nonumber\\
&\leq \frac{\Sigma}{n} + \frac{4K\Sigma}{n^2}+ \frac{12\times 10^3}{n^{3/2}} K^2 c_1(c_2+1)(\log n)^2 + \frac{14\times 10^3}{n^{7/4}} K^2 c_1(c_2+1)(\log n)^2 \nonumber\\
&\leq \frac{\Sigma}{n} + \frac{12.001\times 10^3}{n^{3/2}} K^2 c_1(c_2+1)(\log n)^2 + \frac{14\times 10^3}{n^{7/4}} K^2 c_1(c_2+1)(\log n)^2\nonumber\\
&\leq \frac{\Sigma}{n} + \frac{26.001\times 10^3}{n^{3/2}} K^2 c_1(c_2+1)(\log n)^2.
\end{align}
\end{small}
\noindent 
where we use the bounds on $B$ and $C$ in \ref{app:tr}. Using the fact that $\delta = n^{-7/2}$ and $T_{k,n}\geq 2$, and by Lemma~\ref{l:event-B-AS} that tells us $\Prob [\xi^c] \leq 2nK\delta$, we may write

\begin{small}
\begin{equation}\label{eq:boutxi.norm}
\var_k \E\Big[\frac{1}{T_{k,n}} \ind{\xi^c}\Big] \leq K \var_k n^{-5/2} \leq c_1c_2K n^{-5/2}.
\end{equation}
\end{small}

\noindent Finally, combining Equations~\ref{eq:binxi} and~\ref{eq:boutxi.norm}, and recalling the definition of regret, we have
\begin{small}
\begin{align}
R_n(\alg_B)  &\leq  \frac{26.001\times 10^3}{n^{3/2}}K^2c_1(c_2+1)(\log n)^2 + c_1c_2Kn^{-5/2}\\
&\leq \frac{26.002\times 10^3}{n^{3/2}}K^2 c_1(c_2+1)(\log n)^2 \nonumber\\
&\leq \frac{105\times 10^3 \bar \Sigma}{n^{3/2}}K^2(\log n)^2, \nonumber\\
\end{align}
\end{small}
since $c_1 = 2 \bar \Sigma$ and $c_2=1$.

\end{proof}

\end{document}